\definecolor{ggray}{HTML}{666666}
\definecolor{rred}{HTML}{CC0000}
\definecolor{ggreen}{HTML}{000000}
\definecolor{bblue}{HTML}{000000}
\definecolor{pmdcolor}{HTML}{4151fa}
\definecolor{pmdmomcolor}{HTML}{df473d}
\definecolor{picolor}{HTML}{34ae3b}
\definecolor{orange}{HTML}{f56a29}
\let\classAND\AND
\let\AND\relax
\let\AND\classAND
\newcommand{\eq}[1]{\begin{align*}#1\end{align*}}
\newcommand{\eqq}[1]{\begin{align}#1\end{align}}
\def\E{\mathbb{E}}
\def\R{\mathbb{R}}
\def\eps{{\epsilon}}
\def\A{\mathscr{A}}
\def\I{\mathbf{I}}
\def\R{\mathbb{R}}
\def\X{\mathcal{X}}
\def\C{\mathcal{C}}
\def\w{\mathrm{w}}
\def\argmin{\operatorname{argmin}}
\def\T{\mathcal{T}}
\def\S{\mathscr{S}}
\def \eps{{\epsilon}}
\def\Omega{{h}}
\def\PMD{{PMD}}
\def\PMDext{\texttt{PMD(+extragradient)}}
\def\PMDmom{\texttt{Lazy PMD(+momentum)}}
\def\PI{{PI}}
\def\pmdext{{PMD(+ext)}}
\def\pmdmom{{PMD(+mom)}}
\def\pmdloo{{PMD(+loo)}}
\theoremstyle{plain}
\newtheorem{theorem}{Theorem}[section]
\newtheorem{proposition}[theorem]{Proposition}
\newtheorem{lemma}[theorem]{Lemma}
\newtheorem{corollary}[theorem]{Corollary}
\theoremstyle{definition}
\newtheorem{definition}[theorem]{Definition}
\theoremstyle{definition}
\theoremstyle{remark}
\title{Functional Acceleration for Policy Mirror Descent}
\author{Veronica Chelu\textsuperscript{\ding{96} \ding{95}},
Doina Precup\textsuperscript{\ding{96} \ding{95} \ding{97} \ding{99}}
}
\keywords{policy mirror descent, functional acceleration, momentum, policy optimization, extrapolation} 
\begin{document}

\maketitle  

\begin{abstract}
We apply \emph{functional acceleration} to the Policy Mirror Descent (PMD) general family of algorithms, which cover a wide range of novel and fundamental methods in Reinforcement Learning (RL). Leveraging duality, we propose a momentum-based PMD update.
 By taking the functional route, our approach is independent of the policy parametrization and applicable to large-scale optimization, covering previous applications of momentum at the level of policy parameters as a special case. 
We theoretically analyze several properties of this approach and complement with a numerical ablation study, which serves to illustrate the policy optimization dynamics on the value polytope, relative to different algorithmic design choices in this space. We further characterize  numerically several features of the problem setting relevant for functional acceleration, and lastly, we investigate the impact of approximation on their learning mechanics\footnote{Code is available at \href{https://github.com/veronicachelu/functional-acceleration-for-pmd}{https://github.com/veronicachelu/functional-acceleration-for-pmd}}.
\end{abstract}

\section{Introduction}\label{sec:introduction}
The RL framework \citep{rl_book} refers to the problem of solving sequential decision making tasks under uncertainty, together with a class of solution methods tailored for it. The RL problem has found applications in games \citep{Tesauro1994, MnihKSGAWR13, silver2014, Mnih2016, 
 silver2017, Hessel2017, BellemareDM17, Schrittwieser2019, zahavy2023diversifying}, robotic manipulation \citep{Schulman15,Schulman17,Haarnoja18}, medicine \citep{alphafold, Schaefer2004, nie2020learning} and is formally described by means of discounted Markov Decision Processes (MDPs) \citep{puterman}. On the solution side, increased interest has been devoted to the study of policy-gradient (PG) methods based on optimizing a parameterised policy with respect to an objective \citep{Williams92, Konda1999, Sutton2000, Agarwal19, BhandariandRusso_globalguarantees, kakade2001, bhandari21a, Mei20a, mei20b}.

Policy Mirror Descent (PMD) \citep{Agarwal19, bhandari21a, xiao2022convergence, johnson2023optimal, Vaswani2021} is a general family of algorithms, specified by the choice of mirror map covering a wide range of novel and fundamental methods in RL. 
PMD is a proximal algorithm \citep{parikh2014proximal} and an instance of Mirror Descent (MD) \citep{BeckT03} on the policy simplex \citep{bhandari21a}. MD is an iterative optimization algorithm that extends gradient descent (GD) by adapting to the geometry of the problem using different distance functions, particularly Bregman divergences  \citep{Amari1998, Bubeck15, Banerjee2005}. PMD applies a proximal regularization, with a Bregman divergence, to the improvement step of Policy Iteration (PI), and converges to it as regularization decreases.  In the discounted setting, with an adaptive step-size, PMD converges linearly at a rate determined by the discount factor, independent of the dimension of the state space or problem instance \citep{johnson2023optimal}, recovering classical approaches, like PI and VI, as special cases. 
 PMD has been extended to linear approximation by \citet{yuan2023linear} and to general function approximation by \citet{alfano2024novel}.

\textbf{Motivation}\quad 
The running time of PMD algorithms scales with the number of iterations, and with a parametrized policy class, each iteration of an approximate PMD method generally requires multiple ``inner-loop'' updates to the policy parameter \citep{Tomar20, Vaswani2021}. Actor-critic (AC) methods \citep{Sutton2000, Konda1999} additionally require the computation or updating of an inexact critic at each iteration. 
It is therefore desirable to design algorithms which converge in a smaller number of iterations, resulting in significant empirical speedups, as has been previously argued by \citet{johnson2023optimal, xiao2022convergence, goyal2021firstorder, russo2022approximation}. 

\textbf{In this work,} we leverage duality and acceleration to build a novel surrogate objective for momentum-based PMD, leading to faster learning in terms of less iterations necessary to converge.
The novelty of our approach is the application of acceleration mechanics to the \emph{direct} or \emph{functional policy representation $\pi$}---hence named \textbf{\emph{functional acceleration}}, as opposed to classic acceleration applied to the policy parameter $\theta$ (e.g.,  \citet{Mnih2016, Hessel2017, Schulman17} use Adam \citep{adam} or RMSProp \citep{Hinton2012}). 
Specifically, we use momentum in the dual policy space to accelerate on ``long ravines'' or decelerate at ``sharp curvatures'' at the functional level of the policy optimization objective. Intuitively, adding momentum to the functional PG (the gradient of the policy performance objective with respect to the direct policy representation $\pi$) means applying, to the current directional policy derivative, a weighted version of the previous policy ascent direction, encouraging the method to adaptively accelerate according to the geometry of the optimization problem. 

\textbf{Contributions}\quad
\begin{itemize}
    \item We illustrate and analyze theoretically the impact of applying functional acceleration on the optimization dynamics of PMD, leading to a practical momentum-based PMD algorithm.
    \item We 
    characterize the properties of the problem setting, and those intrinsic to the algorithm, for which applying functional acceleration is conducive to faster learning.
    \item We study the influence of an inexact critic on the acceleration mechanism proposed.
\end{itemize}

\textbf{Outline}\quad
This document is organized as follows. After placing our work in existing literature in Sec.~\ref{Related Work}, and setting up the context in which it operates in Sec.~\ref{sec:background_and_notation}, we introduce our main ideas in  Sec.~\ref{sec:Functional Acceleration for PMD}. We complement with numerical studies in Sec.~\ref{sec:Numerical studies}, ending with a short closing in Sec.\ref{Closing}.

\section{Related Work} \label{Related Work}

\textbf{Accelerated optimization methods}
have been at the heart of convex optimization research, e.g.,
 Nesterov's accelerated gradients (NAG) \citep{Nesterov1983AMF, WangAbernethy2018,Wang2021}, extra-gradient (EG) methods \citep{Korpelevich1976TheEM},  mirror-prox \citep{Nemirovski04, juditsky2011solving}, optimistic MD \citep{RakhlinS13, joulani20a}, AO-FTRL \citep{rakhlin2014online, mohri2015accelerating}, Forward-Backward-Forward (FBF) method \citep{Tseng1991}.

 As far as we know, our idea of applying acceleration to the direct (functional) policy representation $\pi_{\theta}$---independent of the policy parametrization $\theta$---is novel. This is important because it means universality of the approach to any kind of parametrization and functional form a practitioner requires. Within the context of RL, acceleration has only been applied to value learning \citep{Vieillard2019, farahmand21a, goyal2021firstorder}, or in the context of PG methods, classic acceleration is applied to the policy parameter $\theta$---all recent deep RL works (e.g. \citet{Mnih2016, Hessel2017, Schulman17})  use some form of adaptive gradient method, like Adam \citep{adam} or RMSProp \citep{Hinton2012}. The idea of acceleration generally relies on convexity of the objective relative to the representation of interest. The transformation from parameters $\theta$ to functional representation of the policy as probabilities $\pi_{\theta}$, can be highly complex, non-linear, and problem-dependent. Proximal algorithms operate on this functional representation, and rely on relative-convexity and relative-smoothness \citep{lu2017relativelysmooth} (relative to the mirror map $h$) of the objective with respect to $\pi$ when constructing surrogate models \citep{BhandariandRusso_globalguarantees, bhandari21a, Agarwal19, Vaswani2021}. These properties suggests the functional acceleration mechanism is feasible and promising in our setting, since it is able to successfully accelerate convex optimization \citep{joulani20a}.

 \textbf{Approximate PMD} \quad PMD has been extended to function approximation by \citet{Tomar20}, and later analyzed by \citet{Vaswani2021}, who treat the surrogate objective as a nonlinear optimization problem, that of approximately minimizing at each iteration, a composite proximal objective, denoted $\ell({\pi_{\theta}})$ with respect to the policy parameter $\theta$. In contrast, \citet{alfano2024novel, xiong2024dualapproximationpolicyoptimization} apply the PMD update in the dual form, as (generalized) Projected GD (PGD), i.e. a gradient update in the dual space followed by a projection \citep{Bubeck15},
 rather than in proximal form \citep{BeckT03}, as used by \citet{Tomar20}. Here, we use on the proximal objective.
 
\textbf{Limitations, Related \& Future Work}\quad Our focus is on developing a foundation that motivates further study. 
A translation to practical large-scale implementations and deep RL remains for further investigation, i.e. with non-standard proximal methods, e.g., TRPO \citep{Schulman15}, PPO \citep{Schulman17}, MDPO \citep{Tomar20}, MPO \citep{Abdolmaleki18}. Additional guarantees of accelerated convergence for general policy parametrization using the dual policy norm, as well as theoretical analysis for the stochastic setting, are also deferred for future work. We also note the concurrent work of \citet{protopapas2024policymirrordescentlookahead}, who also propose a version of PMD with lookahead (Sec.~\ref{sec:Functional Acceleration for PMD}), and show a similar result to Thm.~\ref{Functional acceleration with pmdloo}, beyond which the works diverge.

\section{Background \& Preliminaries}\label{sec:background_and_notation}

\textbf{RL}\quad We consider a standard RL setting described by means of a Markov decision process (MDP) $\left(\S, \A, r, P, \gamma, \rho\right)$, with state space $\S$, action space $\A$, discount factor $\gamma\in [0, 1)$,  initial state distribution $\rho\in\Delta(S)$ ($\Delta(\mathcal{X})$---the probability simplex over a set $\mathcal{X}$), rewards are sampled from a reward function $R \sim r(S,A), r:\S\times\A\to[0,R_{\max}]$, and next states from a transition probability distribution $S^\prime \sim P(\cdot|S, A) \in \Delta(\S)$.
The RL problem \citep{rl_book} consists in finding a policy $\textstyle \pi \!:\! \S\!\to\! \Delta_{\A} \!\in\! \Pi \!\equiv\! \Delta_{\A}^{|\S|}$, maximizing the performance objective defined as the discounted expected cumulative reward $V^\rho_\pi\doteq \E_{s\sim\rho}V_\pi(s) \in \R$,
where $\textstyle V_\pi \!\in\!\R^{|\S|}$ and $\textstyle Q_\pi\!\in\! \R^{|\S|\times|\A|}$ are the value and action-value functions of a policy $\pi$, such that $V_\pi(s) = \mathbb{E}_{\pi}\left[\sum_{i=0}^\infty\gamma^i R_{i+1}|S_0 = s\right]$, $Q_\pi({s,a}) \doteq \E_\pi \left[\sum_{i=0}^\infty \gamma^i R_i|S_0= s, A_0=a\right]$ and $V_\pi(s) \doteq \E_{a\sim\pi}\left[Q(s,a)\right]$.
There exists an optimal deterministic policy  $\pi^*$ that simultaneously maximizes $V_{\pi}$ and $Q_{\pi}$ \citep{bellman}. 
Let $d_{\pi}$ be the discounted visitation distribution $\textstyle d^\rho_\pi(s)\!=\! (1\!-\!\gamma) \sum_{i=0}^{\infty} \gamma^i \operatorname{Pr}(S_i=s|S_0\sim \rho, A_j \sim \pi({s_j}),\forall j\leq i)$. We use the shorthand notation $\textstyle \langle\cdot, \cdot\rangle$---the dot product, $\textstyle \nabla f(x) \doteq\nabla_x f(x)$---gradients and partial derivatives, $\textstyle \nabla f(x,y) \doteq\nabla_x f(x,y)$, $\pi_t \doteq \pi_{\theta_t}$, $Q_t \doteq Q_{\pi_t}$, $V_t \doteq V_{\pi_t}$, $d^{\rho}_t \doteq d^{\rho}_{\pi_t}$, $\forall t$.

\textbf{PG Algorithms} update the parameters $\theta \in \Theta$ of a parametric policy $\pi_{\theta}$ using surrogate objectives that are local approximations of the original performance. In the tabular setting, the direct parametrization associates a parameter to each state-action pair, allowing the shorthand notation $\pi \doteq \pi_{\theta}$. The gradient of the performance $V^\rho_{\pi}$ with respect to the \emph{direct/functional representation $\pi$} \citep{Sutton2000, Agarwal19, BhandariandRusso_globalguarantees}---which we call the \textbf{\emph{``functional'' gradient}} (to distinguish it from the gradient with respect to the policy parameter $\nabla_{\theta} V^\rho_{\pi_{\theta}}$), is 
$\nabla_{\pi({a|s})} V_\rho^\pi=\nicefrac{1}{(1-\gamma)} d^\rho_{\pi}(s) Q_{\pi}({s,a}) \in \R^{|\A|}$. 
Then, we define $\nabla V^{\rho}_\pi \!\in\! \R^{|\S|\!\times\!|\A|}$ as the concatenation of $\nabla_{\pi({a|s})} V^{\rho}_\pi, \forall s\!\in\!\S$ (yielding a PGT for directional derivatives--- Lemma~\ref{Policy Gradient Theorem for Directional Derivatives} in Appendix~\ref{append:proofs-and-derivations functional policy gradient}).

\textbf{Mirror Descent (MD)} is a general GD algorithm, applicable to constrained spaces $\C$, which relies on Fenchel conjugate duality to map the iterates of an optimization problem $x^*= \argmin_{x\in\X\cap\mathcal{C}} f(X)$, back and forth between a primal $\X$ and a dual space $\X^*$. The algorithm uses a convex function of the Legendre-type (convex and smooth), called a mirror map $h$, to translate the MD iterates $x$ to the dual space where the gradient update is performed $\nabla h(y) \doteq \nabla h(x) - \eta \nabla f(x)$, with $\eta$ a step size. A new iterate satisfying the primal constraints is obtained using a Bregman projection $x^\prime \doteq \operatorname{proj}^h_{\C}(y) = \argmin_{x\in \C} D_{h}(x, \nabla h^*(\nabla h(y)))$ of the updated dual iterate $\nabla h(y)$ mapped back in the primal space using the Fenchel conjugate function of $h$, $h^*(x^*) = \sup_{x\in\X} \langle x, x^*\rangle - h(x)$. This projection relies on a Bregman divergence  $D_h(x,y) \doteq h(x) - h(y) - \langle\nabla h(y), x - y\rangle$ \citep{Amari1998, Bubeck15, Banerjee2005}. 
The proximal formulation of MD merges the update and projection steps to $x^\prime \doteq \argmin_{\bar{x}\in \X\cap\C} \eta \langle \nabla f(x), \bar{x} \rangle + D_h(\bar{x}, x)$ (Lemma~\ref{lemma:Proximal perspective on mirror descent} in Appendix~\ref{appendix:proofs_and_derivations:PMD}).

\textbf{PMD} is an instance of MD \citep{BeckT03}, applying GD in a non-Euclidean geometry, using the proximal perspective of MD, 
$\pi_{t+1} \doteq \argmin_{\pi \in \Pi}
-  \langle  \nabla V^\rho_{t}, \pi \rangle +\eta^{\!-\!1}_{\pi_{t+1}} D_{{h}}(\pi, \pi_t)$ for some sequence of step-sizes $\eta_{\pi_{t+1}} \!> \!0$ and initial policy $\pi_0$. 
The visitation-distribution $d^\rho_t$ in the gradient of the surrogate objective can lead to vanishing gradients in infrequently visited states under $\pi_t$ \citep{mei20b, bhandari21a, johnson2023optimal}, so PMD iteratively applies a variant that separates the objective per state
\eq{
\textstyle\pi_{t+1}(s) \doteq \argmin_{\pi(s) \in \Delta(\A)} - \langle   {Q}_{t}(s), \pi(s) \rangle + \eta_{\pi_{t+1}}^{\!-\!1}D_{{h}}(\pi(s), \pi_t(s))
}
Inexact PMD replaces ${Q}$ with a value estimator $\widehat{Q}$.
Using the negative Boltzmann-Shannon entropy \citep{shannon1948mathematical} as mirror map yields the Natural Policy Gradient (NPG) \citep{kakade2001}. With a null Bregman divergence, it recovers PI.

\textbf{Approximate PMD}\quad The standard PMD algorithm is adapted by \citet{Tomar20} to general policy parametrization by updating the parameters of $\pi_{\theta}$ using a surrogate composite objective $\textstyle\theta_{t+1} = \argmin_{\theta\in\Theta} \E_{s\sim d^\rho_t}[-\E_{a\sim\pi_\theta(s)}[{Q}_t({s,a})] + \eta_{\pi_{t+1}}^{\!-\!1} D_{h}(\pi_{\theta}(s), \pi_{t}(s))]$.
\citet{alfano2024novel} introduces the concept of Bregman policy class $\{\pi_{\theta}:\pi_{\theta}(s) = \operatorname{proj}^h_{\Delta(\A)}(\nabla h^*(f_\theta(s))), s \in \S\}$, and uses a parametrized function $f_{\theta}$ to approximate the dual update of MD $f_{t+1}(s) \doteq \nabla h(\pi_t(s)) - \eta_{\pi_{t+1}} \widehat{Q}_t(s)$. To satisfy the simplex constraint, \citep{xiong2024dualapproximationpolicyoptimization} uses a Bregman projection on the dual approximation mapped back to the policy space 
$\pi_\theta(s) = \operatorname{proj}^{h}_{\Delta(\A)} (\nabla h^*(f_{t+1}(s)))$, equivalent to 
$\theta_{t+1} = \argmin_{\theta \in \Theta}
D_{h}(\pi_{\theta}(s), \nabla h^*(f_{t+1}(s)))$. 
 Using the negative Boltzmann-Shannon entropy, yields the softmax policy class
 ${\pi}_{\theta}({s,a}) \doteq \nicefrac{\exp f_\theta({s,a})}{\|\exp f\theta({s})\|_1}, \forall s,a\in\S\times\A$. 
 For the approximate setting, we rely on the proximal formulation by \citet{Tomar20}. Additional details on other are provided in Appendix~\ref{append:generalizedGD}.
 
\section{Functional Acceleration for PMD}\label{sec:Functional Acceleration for PMD} 

\paragraph{Extrapolation from the future} Consider first an idealized update, denoted \textbf{\pmdloo}, anticipating one iteration ahead on the optimization path. Given ${\pi}_t$ and $Q_t$, we define the expected lookahead return of acting greedily for one iteration, and following $\pi_t$ thereafter, $\forall s\in\S, a\in \A$ 
\eqq{
\mathcal{T}_{\mu_t}{Q}_t(s,a) \doteq \E_{s^\prime\sim P(\cdot|s, a)}[r({s,a}) + \gamma \E_{a^\prime\sim\mu_t({s^\prime})} 
[{Q}_t({s^\prime, a^\prime})]], \ \text{where}\
\mu_t(s) \doteq \operatorname{greedy}{Q}_t(s)
 \label{eq:greedy_loo}
}
 Next, $\forall s\in\S$ we define
 the proximal update 
\eqq{
\pi_{t+1}(s) &\doteq  \argmin_{\pi(s)\in\Delta(\A)}  \!-\!\langle (\mathcal{T}_{\mu_t}{Q}_t)(s), \pi(s)\rangle + {{\eta}^{\!-\!1}_{\pi_{t+1}}} D_h(\pi(s), \pi_t(s)) \label{eq:bw_loo}
}
where $\eta_{\pi_{t+1}}$ is a step size.
Thm.~\ref{Functional acceleration with pmdloo} indicates that replacing the \PMD\  update with the two updates of \pmdloo\ (Eq.~\ref{eq:greedy_loo} \& Eq.~\ref{eq:bw_loo}) changes the convergence rate of the algorithm, featuring $\gamma^2$ in place of $\gamma$ for \PMD\footnote{$\gamma^2$ corresponds to the one-step lookahead horizon $H\!=\!1$, and generalization to multi-step lookahead would yield $\gamma^{H+1}$}. 


\begin{theorem}{(Functional acceleration with exact {\pmdloo})}
\label{Functional acceleration with pmdloo}
The  policy iterates $\pi_{t+1}$ of \emph{\pmdloo}\
satisfy
$\|V^{*}\! -\!V_{t}\|_{\infty} \!\leq\! (\gamma^{2})^t (\| V^{*}  \!-\! V_{0}\|_{\infty} + \sum_{i\leq t} {(\gamma^2)^i}{\eps^{\!-\!1}_{\pi_{i+1}}})$,
if ${\eta}_{\pi_{t+1}}$ is an adaptive step-size such that $\eta^{\!-\!1}_{\pi_{t+1}}D_{h}(\operatorname{greedy}(\mathcal{T}_{\mu_t} Q_t)(s),\pi_t(s))\leq \eps_{\pi_{t+1}}$, $\forall \eps_{\pi_{t+1}}$ arbitrarily small. 

(Proof. in Thm.~\ref{append:Functional acceleration with PMDloo}, Appendix~\ref{append:Proofs for Sec. Functional Acceleration for PMD}) 
\end{theorem}

It is known that the hard greedification operator in the update of $\mu_t$ can yield unstable updates if we replace ${Q}_t$ with a value estimator $\widehat{Q}_t$ \citep{russo2022approximation} (see also Sec.\ref{sec:Acceleration with an inexact critic}, Fig.~\ref{fig:2state_learning_curves}). 
Consequently, 
 we relax the update of \pmdloo\ by replacing the hard greedification in Eq.~\ref{eq:greedy_loo} with a PMD update
\eqq{
\mu_{t}(s) &\doteq  \argmin_{\pi(s)\in\Delta(\A)}  \!-\! \langle {Q}_{t}(s), \pi(s)\rangle + {\eta^{\!-\!1}_{\mu_t}}D_h(\pi(s), \pi_t(s)) \label{eq:fw_pmd_ext}
}
where ${\eta^{\!-\!1}_{\mu_t}}$ is a step-size. We refer to Eq.~\ref{eq:fw_pmd_ext} \& ~\ref{eq:bw_loo} as \textbf{\pmdext}\ (cf. `extragradient'' or extrapolated gradient method of \citet{Nemirovski04}). Thm.~\ref{Functional acceleration with pmdext} confirms acceleration  is maintained.
\begin{proposition}{(Functional acceleration with {\pmdext})}
\label{Functional acceleration with pmdext}
The policy iterates $\pi_{t+1}$ of {\pmdext}\ satisfy
$\|V^{*}\! -\!V_{{t}}\|_{\infty} \!\leq\! (\gamma^{2})^{t} (\| V^{*}  \!-\! V_{0}\|_{\infty} \!+\! \sum_{i\leq t}\! \nicefrac{(\eps_{\pi_{t+1}} \!+ \gamma\eps_{\mu_t})}{(\gamma^2)^i})$, if ${{\eta}_{\mu_t}}, \eta_{\pi_{t+1}}$ are adaptive step-sizes such that 
 ${{\eta}^{\!-\!1}_{\mu_t}}D_{h}(\operatorname{greedy}({Q}_{t}(s)), \pi_t(s))\leq \eps_{\mu_t}$, and $\eta^{\!-\!1}_{\pi_{t+1}}D_{h}(\operatorname{greedy}(\mathcal{T}_{\mu_t} Q_t)(s),\pi_t(s)\leq \eps_{\pi_{t+1}}$, respectively, $\forall {\eps}_{\mu_t},\eps_{\pi_{t+1}}$ arbitrarily small.
\end{proposition}
(Proof. in Thm.~\ref{append:Functional acceleration with PMDext}, Appendix~\ref{append:Proofs for Sec. Functional Acceleration for PMD}) 

\paragraph{Limitations}
\textbf{First}, both \pmdloo\ and {\pmdext}\  update two policies  per iteration, both from from $\pi_t$ ($\mu_t$ to look ahead and $\pi_{t+1}$ as the next policy iterate). \textbf{Second}, they require two policy evaluation procedures per iteration, for $Q_{t}(s)$ and $(\mathcal{T}_{\mu_t} Q_t)(s)$. We may use a value estimator for the former, but the latter would then need access to a model. The main issue of \pmdext\ is that it needs information from two consecutive updates ($\pi_{t}$ and $\mu_t$) to compute the update to the policy iterate $\pi_{t+1}$. In the next section, we propose a ``lazy'' practical algorithm which remedies these issues, using only one policy variable and applying a single policy evaluation procedure per iteration, at the expense of extra memory. 

\paragraph{Extrapolation from the past} 
 The idea is to rewrite the update to use a single  evaluation per timestep plus a temporal-difference error between consecutive evaluation updates. Then, assuming past errors reflect future ones, reusing the prior errors (extrapolating from the past) should have the same effect as extrapolating from the future.
 We briefly touch on the main steps of algorithm derivation, deferring details to Appendix~\ref{Extrapolation from the past: derivation of the lazy momentum-based PMD algorithm}. \textbf{First}, we show the \pmdext\ update can be rewritten as
$\textstyle \pi_{t+1}(s) =  \argmin_{\pi(s)\in\Delta(\A)} \!-\! \langle (\mathcal{T}_{\mu_t}{Q}_{t})(s)  \!-\! \eta^{\!-\!1}_{\pi_{t+1}} {\eta}_{\mu_t} {Q}_t(s), \pi(s)\rangle +{\eta}^{\!-\!1}_{\mu_t
}D_h(\pi(s), \mu_t(s))$.
\textbf{Next}, to remedy the need for two evaluation procedures, we recycle evaluation updates from previous iterations by replacing $Q_{t}$ with the closest past evaluation, i.e. $\mathcal{T}_{\mu_{t-1}} Q_{t-1}$, twice in the updates of $\pi_{t+1}(s)$ and $\mu_t$ (Eq.~\ref{eq:fw_pmd_ext}).
We \textbf{then} show the update can be written in terms of a one policy variable and a single evaluation procedure, henceforth denoted $\pi$ and ${Q}$
\eq{
\pi_{t+1}(s) &\!=\! 
\argmin_{\pi(s)\in\Delta(\A)}
\!-\! \langle {Q}_{t}(s)  \!+\! \eta^{\!-\!1}_{\pi_{t+1}}\eta_{\pi_{t}}({Q}_{t}(s)  \!-\! {Q}_{t-1}(s)), \pi(s)\rangle \!+\! \eta^{\!-\!1}_{\pi_{t+1}}D_h(\pi(s), \pi_t(s)) 
}
We refer to the algorithm using this update \textbf{\pmdmom}, since the temporal-difference is analogous to momentum. Interestingly, a similar update is called ``optimistic'' mirror descent by \citet{joulani20a, JoulaniGS20, RakhlinS13, rakhlin2014online} and a ``forward-reflected-backward'' method by \citet{malitsky2020forwardbackward}. 

Alg.~\ref{alg:PMD++} in Appendix~\ref{append:Algorithms} summarizes the updates of all algorithms introduced.

\subsection{Approximate Functional Acceleration for Parametric Policies}\label{sec:Approximate Functional Acceleration for Parametric Policies}
We are interested in designing algorithms feasible for large-scale optimization, so we further consider parametrized versions of the algorithms introduced, which we illustrate numerically in Sec.~\ref{sec:Numerical studies}.

\textbf{Q-function Approximation}\quad For the \emph{exact setting}, we compute model-based versions of all updates. For the \emph{inexact setting}, we consider approximation errors between $\widehat{Q}_{t}$ and $Q_{t}$ (Sec.\ref{sec:Acceleration with an inexact critic}).

\begin{algorithm}[H]
\caption{{\textbf{Approximate \pmdmom}}}
\label{alg:PMD+}
\begin{algorithmic}[1]
{\footnotesize
\STATE Initialize policy parameter 
$\theta_0 \in \Theta$, mirror map $h$, small constant $\eps_0$, learning rate $\beta$
  \FOR{$t = 1,2 \dots T$}
\STATE Find $\widehat{Q}_t$ approximating $Q_{t}$ (critic update)
\STATE Compute adaptive step-size $\eta_{\pi_{t+1}} = (\gamma^{2(t+1)}\eps_0)^{\!-\!1}{D_{h}(\operatorname{greedy}(\widehat{Q}_t), \pi_t)}$
\STATE  Find $\pi_{t+1} \doteq \pi_{\theta_{t+1}}$ by solving the surrogate problem (approximately with $k$ GD updates)
\STATE$\textstyle\operatorname{min}_{\theta\in\Theta}\ell(\theta)\quad \ell(\theta)\!\doteq \!-\!\E_{s\sim d^{\rho}_{t}}[\E_{a\sim \pi_{\theta}}[ {Q}_t({s,a})]\! +\!{{\eta}^{\!-\!1}_{\pi_{t\!+\!1}}} D_{h}(\pi_\theta(s),{\pi}_t(s))]$
\STATE  \ $\text{\emph{(init)}}\ \theta^{(0)} \doteq \theta_t \quad \text{\emph{(for $i \in [0..k\!-\!1]$)}}\ \theta^{(i+1)} = \theta^{(i)} - \beta \nabla_{\theta^{(i)}} \ell(\theta^{(i)})\quad\text{\emph{(final)}}\ \theta_{t+1}  \doteq \theta^{(k)}$
 \ENDFOR
    }
\end{algorithmic}
\end{algorithm}
\vspace{-10pt}
\textbf{Policy Approximation}\quad We parametrize the policy iterates using a Bregman policy class 
$\{\pi_{\theta}:\pi_{\theta}(s) = \operatorname{proj}^h_{\Delta(\A)}(\nabla h^*(f_\theta(s))), s \in \S\}$, a tabular parametrization for the dual policy representation $f_{\theta}({s,a}) \doteq \theta({s,a})$, and the negative Boltzmann-Shannon entropy as mirror map $h$, which leads to the softmax policy class ${\pi}_{\theta}(s) \doteq \nicefrac{\exp \theta({s})}{\sum_{a\in\A}\exp \theta({s,a})}$. To update the parameter vector $\theta$ we use the proximal perspective cf. Alg.~\ref{alg:PMD+} \citep{Tomar20, Vaswani2021, vaswani2023decisionaware} (implementation details in Appendix~\ref{append:Algorithmic Implementations}).

\section{Numerical Studies}\label{sec:Numerical studies}
In this section, we investigate numerically the aforementioned algorithms, focusing on the following questions:
Sec.~\ref{sec:When is acceleration possible?}: {When is acceleration possible? What properties of the problem  make it advantageous?} 
Sec.~\ref{sec:Policy dynamics in Value Space}: {How do the policy optimization dynamics change with acceleration? Does the value improvement path change?}
Sec.~\ref{sec:Acceleration with an inexact critic}: {Should we expect acceleration to be effective with an inexact critic? What are the implications of value approximation?}

In all experiments, results indicate mean values and shades standard deviation (std) over seeds.

\subsection{When is Acceleration Possible?}\label{sec:When is acceleration possible?}

\textbf{Experimental Setting}\quad
We consider randomly constructed finite MDPs---\emph{Random MDP} problems 
\citep{Archibald1995}, abstract, yet representative of the kind of MDP encountered in practice, which serve as a test-bench for RL algorithms \citep{goyal2021firstorder, ScherrerGeist2014, Vieillard2019}. We define a \emph{Random MDP} generator  $(|\S|, |\A|, b, \gamma)$ parameterized by number of states $|\S|$, number of actions $|\A|$, branching factor $b$ specifying for each state-action pair the maximum number of possible next states, chosen randomly. We vary $b$, $\gamma$, and $|\A|$ to show how the characteristics of the problem, and the features of the algorithms, impact learning speed with or without acceleration (details in Appendix~\ref{append:Details of Random Markov Decision Processes}). 

\textbf{{Metrics}}\quad We measure the following quantities. (i) The cumulative \emph{regret} (optimality gap) after $T$ iterations, $\emph{Regret}_t \doteq \sum_{t\leq T} V^{*} - V^\rho_{t}$. The relative difference in regret between the \PMD\ baseline and \pmdmom\ shows whether functional acceleration speeds up convergence. To quantify the complexity of the optimization problem and ill-conditioning of the optimization landscape (difference in scaling along dimensions), we use the dual representation form of \citet{Wang2008DualRF} for policies (successor representation \citep{Dayan1993} or state-visitation frequency), $\Psi_{\pi} \doteq (\mathbf{I} - \gamma \mathbf{P}_{\pi})^{-1}$, with $\mathbf{P}_{\pi} V(s) = \E_{a\sim\pi,s^\prime \sim P(\cdot|s,a)}[V({s^\prime})]$. Policy iteration is known to be equivalent to the Newton-Kantorovich iteration procedure applied to the functional equation of dynamic programming \citep{Puterman1979}, $V_{\pi_{t+1}} = V_{\pi_t} - \Psi \nabla f (V_{\pi_t})$, where $\nabla f (V) = (I- \mathcal{T})(V)$---with $\mathcal{T}$ the Bellman operator---can be treated as the gradient operator of an unknown function $f : \R^{|\S|} \to \R$ \citep{grandclément2021convex} (see Appendix~\ref{Newton’s method}). From this perspective, the matrix $\Psi$ can be interpreted as a gradient preconditioner, its inverse is the Hessian $\nabla^2 f(V)$, the Jacobian of a gradient operator $\nabla f$. We use the condition number of this matrix, defined as $\kappa(\Psi) \doteq \nicefrac{|\lambda_{\text{max}}|}{|\lambda_{\text{min}}|}$, for $\lambda_{\text{max}}$, $\lambda_{\text{min}}$ the max and min eigenvalues in the spectrum $\operatorname{spec}(\Psi)$. We measure (ii) the \emph{condition number} $\kappa_0 = \kappa(\Psi_{\pi_0})$ of a randomly initialized (diffusion) policy $\pi_0$ (Fig.~\ref{fig:rmdp_learning_curves}(a-b)) and  (iii) the average \emph{condition number} $\kappa_{t\leq T} = \nicefrac{1}{T} \sum_{t\leq T}\kappa(\Psi_{\pi_t})$, for policies on the optimization path of an algorithm (Fig.~\ref{fig:rmdp_learning_curves}(c)). Lastly, we also measure (iv) the mean \emph{entropy} of a randomly initialized policy $\mathcal{H}_0 \propto \sum_{s,a}\pi_0({s,a}) \log \pi_0({s,a})$ (Fig.~\ref{fig:rmdp_learning_curves}(d)), inversely correlated with $\kappa_0$.

\begin{figure}[h]
\hspace{-10pt} 
        \centering
         \includegraphics[width=1.\textwidth]{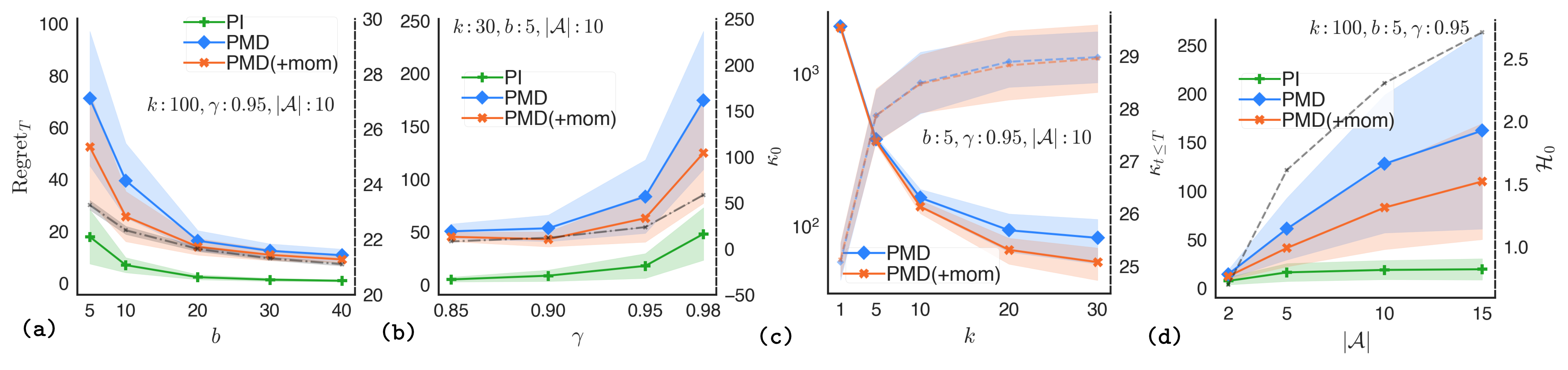}
    \caption{
    The left $y$-axis shows the optimality gap or cumulative regret at $T=10$ \emph{(a-c)}, $T=20$ \emph{(d)}) over $50$ randomly sampled MDPs, relative to changing: (a) $b$---branching factor, (b) $\gamma$---discount factor, (c) $k$---number of parameter updates, (d) $|\mathcal{A}|$---number of actions. The right $y$-axis and dotted curves measure: \emph{(a-b)}---the condition number $\kappa_0$, \emph{(c)} the average condition number $\kappa_{t\leq T}$, \emph{(d)} the entropy $\mathcal{H}_0$. 
    }
    \label{fig:rmdp_learning_curves}
\end{figure}

\textbf{Hypothesis \& Observations}\ 
In Fig.~\ref{fig:rmdp_learning_curves} we show the relative difference in regret between \pmdmom\ and the \PMD\ baseline, as we change the features of the algorithms and the complexity of the problem.
First, we highlight two cases that lead to ill-conditioning---indicated by the condition number $\kappa_0$:
\emph{(a)} sparse connectivity of the underlying Markov chain controlled by decreasing the branching factor $b$, which represents the proportion of next states available at every state-action pair; 
\emph{(b)} increasing the effective horizon via the discount factor $\gamma$. We illustrate the relative difference in optimality gap between the two updates correlates with ill-conditioned policy optimization landscapes, supporting the hypothesis that functional acceleration leads to faster navigation in this case.
In \emph{(c)}, we show the relative difference in regret correlates with the number of parameter updates $k$ used in the ``inner-loop`` optimization procedure. Large $k$ implies the policy iterates approach the analytic solution of the surrogate objective. As $k$ decreases, the added momentum will shrink too, becoming negligible, defaulting to the classic parameter-level momentum in the limiting case of $k\!=\!1$ (the online setting).
In \emph{(d)}, as we increase the number of actions, the optimization problem becomes more challenging, entropy and overall suboptimality increase. However, the relative difference between \pmdmom\ and the baseline \PMD\ also increases, suggesting the increasing advantage of functional acceleration (learning curves and additional statistics in Appendix~\ref{append:Supplementary results for When is acceleration possible?}).

\textbf{Implications} \ These studies indicate (i) that it is possible to accelerate PMD, that the advantage of functional acceleration is proportional to: (iii) the ill-conditioning of the optimization surface, induced by the policy and MDP dynamics, and (ii) the quality of the ``inner-loop'' policy approximation.

\subsection{Policy Dynamics in Value Space}\label{sec:Policy dynamics in Value Space}

To contrast the expected policy dynamics, we rely on the functional mapping from stationary policies to their respective value functions, $\pi \to V_{\pi}$: $\mathcal{V} \doteq \{ V_{\pi} | \pi \in \Pi\}$. The space of value functions $\mathcal{V}$ the set of all value functions that are attained by some policy, i.e. the image of $\Pi$ under the mapping, and a possibly self-intersecting, non-convex polytope \citep{dadashi2019value}. 

\begin{figure}[h]
        \centering
         \hspace{-20pt} 
         \includegraphics[width=1.04\textwidth]{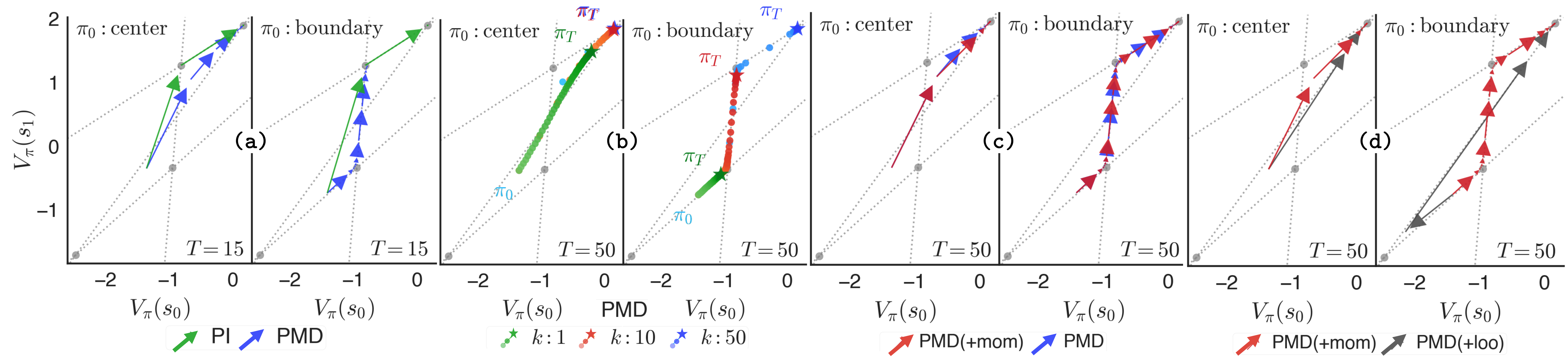}
    \caption{
    Compares and contrasts the policy optimization dynamics on the value polytope. {\color{gray}Gray points} denote the boundaries---corresponding in this case to value functions, {\color{gray} gray dotted lines} are hyperplanes circumscribing the polytope. Top-left: policy initialization. \emph{(a, c, d)} Arrows indicate the policy optimization path. {\color{pmdcolor}\PMD}\ and {\color{pmdmomcolor}\pmdmom}\ use $\beta=0.1$ and $k=50$ for the ``inner'' loop optimization, approximating the analytic solution.
    \emph{(b)} Points denote policy values on the path, gradient indicates iteration number $t$, star $\star$ marks $\pi^T$. 
    }
    \label{fig:pi_pmd_loo_dynamics}
\end{figure} 
\textbf{Experimental Setting}\quad
We use two-state MDPs (specifics in Appendix~\ref{append:Details of two-state/action Markov Decision Processes}, other MDPs in Appendix~\ref{append:Supplementary results for Policy dynamics in Value Space}). Policies are initialized with $\pi_0$: (i) \emph{center}: interior of the polytope, (ii) \emph{boundary}: near a boundary close to the corner adversarial to the optimum. 
We use the value polytope to visualize three aspects of the learning dynamics: (1) the policy improvement path through the polytope, (2) the speed at which they traverse the polytope, and (3) sub-optimal attractors with long escape times that occur along this path, making the policy iterates accumulate (cf. \cite{mei20b}). We compute model-based versions of all relevant updates. 

\textbf{Observations \& Insights}\quad
Fig.~\ref{fig:pi_pmd_loo_dynamics} shows the impact of policy approximation through $k$ \emph{(a-b)}, and compares the dynamics of \pmdmom\ relative to the baselines: without acceleration: \PMD\ \emph{(c)}, and with idealized acceleration: \pmdloo\ \emph{(d)}.

We make the following observations: 
\emph{(a)} \PMD's dynamics follow a straight path between the iterates of \PI, consistent with the former being an approximation of the latter. The convergence speed depends on the approximation quality through $k$---the number of parameter updates per iteration \emph{(b)}.
\PMD\ with $k\!=\!1$ corresponds to online PG, which depends strongly on initialization, a known issue caused by vanishing gradients at the boundary of the polytope.
In contrast, as we increase $k$, there is faster escape from sub-optimal attractors, and the rate of convergence is higher, becoming more similar to \PI\ by jumping between values of deterministic policies (corners of the polytope \emph{(a)}).
\emph{(d)} \pmdloo\ follows a different trajectory through the value polytope.
We observe acceleration for \pmdmom\ relative to \PMD\ in \emph{(c)}, and suboptimality relative to the idealized acceleration of \pmdloo\ in \emph{(d)}.

\subsection{Functional Acceleration with an Inexact Critic}\label{sec:Acceleration with an inexact critic}
For the same experimental setting as Sec.~\ref{sec:Policy dynamics in Value Space}, Fig.~\ref{fig:2state_learning_curves} illustrates the impact of an inexact critic on the relative advantage of functional acceleration, in two settings: \emph{(Left)} \emph{controlled}---the critic's error is sampled from a random normal distribution with mean $0$ and standard deviation $\tau$, such that ${\widehat{Q}_t(s)}=Q_t(s) + \mathcal{N}(0, \tau)$, $\forall s$. \emph{(Right)} \emph{natural}---the critic is an empirical estimate of the return obtained by Monte-Carlo sampling, and its error arises naturally from using $m$ truncated trajectories up to horizon $\nicefrac{1}{1-\gamma}$, i.e. 
${\widehat{Q}_t(s)} \doteq \nicefrac{1}{m}\sum_{i\leq m} \nicefrac{G_i(s)}{N_i(s)}$, where $G_i(s)$ is the $i^{\text{th}}$ empirical return sampled with ${\color{bblue} {\pi}_t(s)}$ and $N_i(s)$ is the empirical visitation frequency of $s$.

\begin{figure}[h]
        \centering
         \vspace{-1pt} 
         \hspace{-10pt} 
          \includegraphics[width=1.\textwidth]{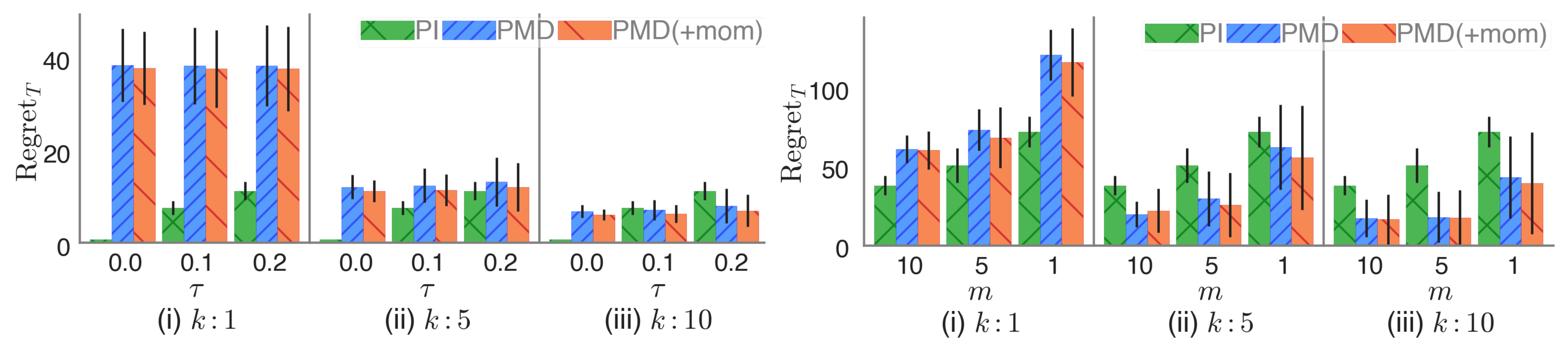}
              \caption{
    Shows cumulative regret on the $y$-axis, after $T=50$ iterations, relative to changing $k$---the number of parameter updates for {\color{pmdcolor}\PMD}\ and {\color{pmdmomcolor}\pmdmom}, in the inexact setting. \emph{(Left) controlled}: $\tau$ indicates the scale of the critic's error. \emph{(Right) natural}: $m$ is the number of trajectories used in the Monte-Carlo estimation of the return. Error bars denote std. over $50$ seeds with policies initialized from a random uniform distribution $\mathcal{U}(0,1)$.
    }
    \label{fig:2state_learning_curves}
\end{figure}

We observe a larger relative  difference  in suboptimality on average between \pmdmom\ and \PMD\ for higher values of $k$,  highlighting the difference between functional acceleration (cf. Sec.\ref{sec:Functional Acceleration for PMD}) and classic acceleration (applied to the parameter vector $\theta$), corresponding to $k\!=\!1$, reinforcing evidence from Sec.~\ref{sec:When is acceleration possible?}.
Further, we confirm \PI\ performs increasingly poor when paired with an inexact critic with large error as previously reported by \citep{russo2022approximation}. 
Then, we observe a range in which functional acceleration is advantageous, which extends from having negligible benefit, for small $k$, to more impactful differences in optimality gap for larger $k$. Beyond a certain sweet spot, when it is maximally advantageous, the critic's error becomes too large, leading to overshooting and oscillations (additional illustrations of this phenomenon in Appendix~\ref{append:Supplementary results for Acceleration with an inexact critic}). 

\section{Closing}\label{Closing}
Inspired by functional acceleration from convex optimization theory, we proposed a momentum-based PMD update applicable to general policy parametrization and large-scale optimization. We analyzed several design choices in ablation studies designed to characterize qualitatively the properties of the update, and illustrated numerically how the characteristics of the problem influence the added benefit of using acceleration. Finally we looked at how inexact critics impact the method. Further analysis with these methods using stochastic simulation and function approximation would be useful.



\subsubsection*{Acknowledgments}
\label{sec:ack}
The authors thank Jincheng Mei, Hado van Hasselt and all reviewers for feedback and insights. Veronica Chelu is grateful for support from IVADO, Fonds d’excellence en recherche Apogée Canada, Bourse d’excellence au doctorat. 


\bibliography{main}

\begin{thebibliography}{76}
\providecommand{\natexlab}[1]{#1}
\providecommand{\url}[1]{\texttt{#1}}
\expandafter\ifx\csname urlstyle\endcsname\relax
  \providecommand{\doi}[1]{DOI: #1}\else
  \providecommand{\doi}{DOI: \begingroup \urlstyle{rm}\Url}\fi

\bibitem[Abdolmaleki et~al.(2018)Abdolmaleki, Springenberg, Tassa, Munos, Heess, and Riedmiller]{Abdolmaleki18}
Abbas Abdolmaleki, Jost~Tobias Springenberg, Yuval Tassa, R{\'{e}}mi Munos, Nicolas Heess, and Martin~A. Riedmiller.
\newblock Maximum a posteriori policy optimisation.
\newblock In \emph{6th International Conference on Learning Representations, {ICLR} 2018, Vancouver, BC, Canada, April 30 - May 3, 2018, Conference Track Proceedings}. OpenReview.net, 2018.
\newblock URL \url{https://openreview.net/forum?id=S1ANxQW0b}.

\bibitem[Agarwal et~al.(2019)Agarwal, Kakade, Lee, and Mahajan]{Agarwal19}
Alekh Agarwal, Sham~M. Kakade, Jason~D. Lee, and Gaurav Mahajan.
\newblock Optimality and approximation with policy gradient methods in markov decision processes.
\newblock \emph{CoRR}, abs/1908.00261, 2019.
\newblock URL \url{http://arxiv.org/abs/1908.00261}.

\bibitem[Alfano et~al.(2024)Alfano, Yuan, and Rebeschini]{alfano2024novel}
Carlo Alfano, Rui Yuan, and Patrick Rebeschini.
\newblock A novel framework for policy mirror descent with general parameterization and linear convergence, 2024.

\bibitem[Amari(1998)]{Amari1998}
Shun-ichi Amari.
\newblock {Natural Gradient Works Efficiently in Learning}.
\newblock \emph{Neural Computation}, 10\penalty0 (2):\penalty0 251--276, 02 1998.
\newblock ISSN 0899-7667.
\newblock \doi{10.1162/089976698300017746}.
\newblock URL \url{https://doi.org/10.1162/089976698300017746}.

\bibitem[Amari(2016)]{Amari2016}
Shun-ichi Amari.
\newblock \emph{Information Geometry and Its Applications}.
\newblock Springer Publishing Company, Incorporated, 1st edition, 2016.
\newblock ISBN 4431559779.

\bibitem[Anderson(1965)]{Anderson65}
Donald G.~M. Anderson.
\newblock Iterative procedures for nonlinear integral equations.
\newblock \emph{J. {ACM}}, 12\penalty0 (4):\penalty0 547--560, 1965.
\newblock \doi{10.1145/321296.321305}.
\newblock URL \url{https://doi.org/10.1145/321296.321305}.

\bibitem[Archibald et~al.(1995)Archibald, McKinnon, and Thomas]{Archibald1995}
T.~W. Archibald, K.~I.~M. McKinnon, and L.~C. Thomas.
\newblock On the generation of markov decision processes.
\newblock 46\penalty0 (3):\penalty0 354--361, 2024/04/28/ 1995.

\bibitem[Banerjee et~al.(2005)Banerjee, Merugu, Dhillon, and Ghosh]{Banerjee2005}
Arindam Banerjee, Srujana Merugu, Inderjit~S. Dhillon, and Joydeep Ghosh.
\newblock Clustering with bregman divergences.
\newblock \emph{J. Mach. Learn. Res.}, 6:\penalty0 1705–1749, dec 2005.
\newblock ISSN 1532-4435.

\bibitem[Beck \& Teboulle(2003)Beck and Teboulle]{BeckT03}
Amir Beck and Marc Teboulle.
\newblock Mirror descent and nonlinear projected subgradient methods for convex optimization.
\newblock \emph{Oper. Res. Lett.}, 31\penalty0 (3):\penalty0 167--175, 2003.
\newblock \doi{10.1016/S0167-6377(02)00231-6}.
\newblock URL \url{https://doi.org/10.1016/S0167-6377(02)00231-6}.

\bibitem[Bellemare et~al.(2017)Bellemare, Dabney, and Munos]{BellemareDM17}
Marc~G. Bellemare, Will Dabney, and R{\'{e}}mi Munos.
\newblock A distributional perspective on reinforcement learning.
\newblock \emph{CoRR}, abs/1707.06887, 2017.
\newblock URL \url{http://arxiv.org/abs/1707.06887}.

\bibitem[Bellman(1957)]{bellman}
Richard Bellman.
\newblock \emph{{Dynamic Programming}}.
\newblock Dover Publications, 1957.
\newblock ISBN 9780486428093.

\bibitem[Bhandari \& Russo(2019)Bhandari and Russo]{BhandariandRusso_globalguarantees}
Jalaj Bhandari and Daniel Russo.
\newblock Global optimality guarantees for policy gradient methods.
\newblock \emph{CoRR}, abs/1906.01786, 2019.
\newblock URL \url{http://arxiv.org/abs/1906.01786}.

\bibitem[Bhandari \& Russo(2021)Bhandari and Russo]{bhandari21a}
Jalaj Bhandari and Daniel Russo.
\newblock On the linear convergence of policy gradient methods for finite mdps.
\newblock In Arindam Banerjee and Kenji Fukumizu (eds.), \emph{Proceedings of The 24th International Conference on Artificial Intelligence and Statistics}, volume 130 of \emph{Proceedings of Machine Learning Research}, pp.\  2386--2394. PMLR, 13--15 Apr 2021.
\newblock URL \url{https://proceedings.mlr.press/v130/bhandari21a.html}.

\bibitem[Bhatnagar et~al.(2009)Bhatnagar, Sutton, Ghavamzadeh, and Lee]{Bhatnagar2009}
Shalabh Bhatnagar, Richard~S. Sutton, Mohammad Ghavamzadeh, and Mark Lee.
\newblock Natural actor--critic algorithms.
\newblock \emph{Automatica}, 45\penalty0 (11):\penalty0 2471--2482, 2009.

\bibitem[Bubeck(2015)]{Bubeck15}
Sebastien Bubeck.
\newblock Convex optimization: Algorithms and complexity.
\newblock \emph{Found. Trends Mach. Learn.}, 8\penalty0 (3-4):\penalty0 231--357, 2015.
\newblock \doi{10.1561/2200000050}.
\newblock URL \url{https://doi.org/10.1561/2200000050}.

\bibitem[Chen \& Teboulle(1993)Chen and Teboulle]{Chen1993}
Gong Chen and Marc Teboulle.
\newblock Convergence analysis of a proximal-like minimization algorithm using bregman functions.
\newblock \emph{SIAM J. on Optimization}, 3\penalty0 (3):\penalty0 538–543, aug 1993.
\newblock ISSN 1052-6234.
\newblock \doi{10.1137/0803026}.
\newblock URL \url{https://doi.org/10.1137/0803026}.

\bibitem[Dadashi et~al.(2019)Dadashi, Taïga, Roux, Schuurmans, and Bellemare]{dadashi2019value}
Robert Dadashi, Adrien~Ali Taïga, Nicolas~Le Roux, Dale Schuurmans, and Marc~G. Bellemare.
\newblock The value function polytope in reinforcement learning, 2019.

\bibitem[Dayan(1993)]{Dayan1993}
Peter Dayan.
\newblock {Improving Generalization for Temporal Difference Learning: The Successor Representation}.
\newblock \emph{Neural Computation}, 5\penalty0 (4):\penalty0 613--624, 07 1993.

\bibitem[Farahmand \& Ghavamzadeh(2021)Farahmand and Ghavamzadeh]{farahmand21a}
Amir-Massoud Farahmand and Mohammad Ghavamzadeh.
\newblock Pid accelerated value iteration algorithm.
\newblock In Marina Meila and Tong Zhang (eds.), \emph{Proceedings of the 38th International Conference on Machine Learning}, volume 139 of \emph{Proceedings of Machine Learning Research}, pp.\  3143--3153. PMLR, 18--24 Jul 2021.
\newblock URL \url{https://proceedings.mlr.press/v139/farahmand21a.html}.

\bibitem[Goyal \& Grand-Clement(2021)Goyal and Grand-Clement]{goyal2021firstorder}
Vineet Goyal and Julien Grand-Clement.
\newblock A first-order approach to accelerated value iteration, 2021.

\bibitem[Grand-Clément(2021)]{grandclément2021convex}
Julien Grand-Clément.
\newblock From convex optimization to mdps: A review of first-order, second-order and quasi-newton methods for mdps, 2021.

\bibitem[Haarnoja et~al.(2018)Haarnoja, Zhou, Abbeel, and Levine]{Haarnoja18}
Tuomas Haarnoja, Aurick Zhou, Pieter Abbeel, and Sergey Levine.
\newblock Soft actor-critic: Off-policy maximum entropy deep reinforcement learning with a stochastic actor.
\newblock In Jennifer~G. Dy and Andreas Krause (eds.), \emph{Proceedings of the 35th International Conference on Machine Learning, {ICML} 2018, Stockholmsm{\"{a}}ssan, Stockholm, Sweden, July 10-15, 2018}, volume~80 of \emph{Proceedings of Machine Learning Research}, pp.\  1856--1865. {PMLR}, 2018.
\newblock URL \url{http://proceedings.mlr.press/v80/haarnoja18b.html}.

\bibitem[Hessel et~al.(2017)Hessel, Modayil, van Hasselt, Schaul, Ostrovski, Dabney, Horgan, Piot, Azar, and Silver]{Hessel2017}
Matteo Hessel, Joseph Modayil, Hado van Hasselt, Tom Schaul, Georg Ostrovski, Will Dabney, Daniel Horgan, Bilal Piot, Mohammad~Gheshlaghi Azar, and David Silver.
\newblock Rainbow: Combining improvements in deep reinforcement learning.
\newblock \emph{CoRR}, abs/1710.02298, 2017.
\newblock URL \url{http://arxiv.org/abs/1710.02298}.

\bibitem[Hinton et~al.(2012)Hinton, Srivastava, and Swersky]{Hinton2012}
Geoffrey Hinton, Nitish Srivastava, and Kevin Swersky.
\newblock Neural networks for machine learning lecture 6a overview of mini-batch gradient descent.
\newblock \emph{CSC321}, 2012.
\newblock URL \url{https://www.cs.toronto.edu/~tijmen/csc321/slides/lecture_slides_lec6.pdf}.

\bibitem[Johnson et~al.(2023)Johnson, Pike-Burke, and Rebeschini]{johnson2023optimal}
Emmeran Johnson, Ciara Pike-Burke, and Patrick Rebeschini.
\newblock Optimal convergence rate for exact policy mirror descent in discounted markov decision processes, 2023.

\bibitem[Joulani et~al.(2020{\natexlab{a}})Joulani, Gy{\"{o}}rgy, and Szepesv{\'{a}}ri]{JoulaniGS20}
Pooria Joulani, Andr{\'{a}}s Gy{\"{o}}rgy, and Csaba Szepesv{\'{a}}ri.
\newblock A modular analysis of adaptive (non-)convex optimization: Optimism, composite objectives, variance reduction, and variational bounds.
\newblock \emph{Theor. Comput. Sci.}, 808:\penalty0 108--138, 2020{\natexlab{a}}.
\newblock \doi{10.1016/j.tcs.2019.11.015}.
\newblock URL \url{https://doi.org/10.1016/j.tcs.2019.11.015}.

\bibitem[Joulani et~al.(2020{\natexlab{b}})Joulani, Raj, Gyorgy, and Szepesvari]{joulani20a}
Pooria Joulani, Anant Raj, Andras Gyorgy, and Csaba Szepesvari.
\newblock A simpler approach to accelerated optimization: iterative averaging meets optimism.
\newblock In Hal~Daumé III and Aarti Singh (eds.), \emph{Proceedings of the 37th International Conference on Machine Learning}, volume 119 of \emph{Proceedings of Machine Learning Research}, pp.\  4984--4993. PMLR, 13--18 Jul 2020{\natexlab{b}}.
\newblock URL \url{https://proceedings.mlr.press/v119/joulani20a.html}.

\bibitem[Juditsky et~al.(2011)Juditsky, Nemirovskii, and Tauvel]{juditsky2011solving}
Anatoli Juditsky, Arkadii~S. Nemirovskii, and Claire Tauvel.
\newblock Solving variational inequalities with stochastic mirror-prox algorithm, 2011.

\bibitem[Jumper et~al.(2021)Jumper, Evans, Pritzel, Green, Figurnov, Ronneberger, Tunyasuvunakool, Bates, {\v Z}{\'\i}dek, Potapenko, Bridgland, Meyer, Kohl, Ballard, Cowie, Romera-Paredes, Nikolov, Jain, Adler, Back, Petersen, Reiman, Clancy, Zielinski, Steinegger, Pacholska, Berghammer, Bodenstein, Silver, Vinyals, Senior, Kavukcuoglu, Kohli, and Hassabis]{alphafold}
John Jumper, Richard Evans, Alexander Pritzel, Tim Green, Michael Figurnov, Olaf Ronneberger, Kathryn Tunyasuvunakool, Russ Bates, Augustin {\v Z}{\'\i}dek, Anna Potapenko, Alex Bridgland, Clemens Meyer, Simon A.~A. Kohl, Andrew~J. Ballard, Andrew Cowie, Bernardino Romera-Paredes, Stanislav Nikolov, Rishub Jain, Jonas Adler, Trevor Back, Stig Petersen, David Reiman, Ellen Clancy, Michal Zielinski, Martin Steinegger, Michalina Pacholska, Tamas Berghammer, Sebastian Bodenstein, David Silver, Oriol Vinyals, Andrew~W. Senior, Koray Kavukcuoglu, Pushmeet Kohli, and Demis Hassabis.
\newblock Highly accurate protein structure prediction with alphafold.
\newblock \emph{Nature}, 596\penalty0 (7873):\penalty0 583--589, 2021.

\bibitem[Kakade \& Langford(2002)Kakade and Langford]{cpi}
Sham Kakade and John Langford.
\newblock Approximately optimal approximate reinforcement learning.
\newblock In \emph{Proceedings of the Nineteenth International Conference on Machine Learning}, ICML '02, pp.\  267–274, San Francisco, CA, USA, 2002. Morgan Kaufmann Publishers Inc.
\newblock ISBN 1558608737.

\bibitem[Kakade(2001)]{kakade2001}
Sham~M Kakade.
\newblock A natural policy gradient.
\newblock In T.~Dietterich, S.~Becker, and Z.~Ghahramani (eds.), \emph{Advances in Neural Information Processing Systems}, volume~14. MIT Press, 2001.
\newblock URL \url{https://proceedings.neurips.cc/paper/2001/file/4b86abe48d358ecf194c56c69108433e-Paper.pdf}.

\bibitem[Kingma \& Ba(2015)Kingma and Ba]{adam}
Diederik~P. Kingma and Jimmy Ba.
\newblock Adam: {A} method for stochastic optimization.
\newblock In Yoshua Bengio and Yann LeCun (eds.), \emph{3rd International Conference on Learning Representations, {ICLR} 2015, San Diego, CA, USA, May 7-9, 2015, Conference Track Proceedings}, 2015.
\newblock URL \url{http://arxiv.org/abs/1412.6980}.

\bibitem[Konda \& Borkar(1999)Konda and Borkar]{Konda1999}
Vijaymohan~R. Konda and Vivek~S. Borkar.
\newblock Actor-critic--type learning algorithms for markov decision processes.
\newblock \emph{SIAM Journal on Control and Optimization}, 38\penalty0 (1):\penalty0 94--123, 1999.
\newblock \doi{10.1137/S036301299731669X}.
\newblock URL \url{https://doi.org/10.1137/S036301299731669X}.

\bibitem[Korpelevich(1976)]{Korpelevich1976TheEM}
G.~M. Korpelevich.
\newblock The extragradient method for finding saddle points and other problems.
\newblock 1976.

\bibitem[Lu et~al.(2017)Lu, Freund, and Nesterov]{lu2017relativelysmooth}
Haihao Lu, Robert~M. Freund, and Yurii Nesterov.
\newblock Relatively-smooth convex optimization by first-order methods, and applications, 2017.

\bibitem[Malitsky \& Tam(2020)Malitsky and Tam]{malitsky2020forwardbackward}
Yura Malitsky and Matthew~K. Tam.
\newblock A forward-backward splitting method for monotone inclusions without cocoercivity, 2020.

\bibitem[Mei et~al.(2020{\natexlab{a}})Mei, Xiao, Dai, Li, Szepesv{\'{a}}ri, and Schuurmans]{mei20b}
Jincheng Mei, Chenjun Xiao, Bo~Dai, Lihong Li, Csaba Szepesv{\'{a}}ri, and Dale Schuurmans.
\newblock Escaping the gravitational pull of softmax.
\newblock In Hugo Larochelle, Marc'Aurelio Ranzato, Raia Hadsell, Maria{-}Florina Balcan, and Hsuan{-}Tien Lin (eds.), \emph{Advances in Neural Information Processing Systems 33: Annual Conference on Neural Information Processing Systems 2020, NeurIPS 2020, December 6-12, 2020, virtual}, 2020{\natexlab{a}}.
\newblock URL \url{https://proceedings.neurips.cc/paper/2020/hash/f1cf2a082126bf02de0b307778ce73a7-Abstract.html}.

\bibitem[Mei et~al.(2020{\natexlab{b}})Mei, Xiao, Szepesv{\'{a}}ri, and Schuurmans]{Mei20a}
Jincheng Mei, Chenjun Xiao, Csaba Szepesv{\'{a}}ri, and Dale Schuurmans.
\newblock On the global convergence rates of softmax policy gradient methods.
\newblock \emph{CoRR}, abs/2005.06392, 2020{\natexlab{b}}.
\newblock URL \url{https://arxiv.org/abs/2005.06392}.

\bibitem[Mei et~al.(2021)Mei, Dai, Xiao, Szepesv{\'{a}}ri, and Schuurmans]{Mei2021}
Jincheng Mei, Bo~Dai, Chenjun Xiao, Csaba Szepesv{\'{a}}ri, and Dale Schuurmans.
\newblock Understanding the effect of stochasticity in policy optimization.
\newblock \emph{CoRR}, abs/2110.15572, 2021.
\newblock URL \url{https://arxiv.org/abs/2110.15572}.

\bibitem[Mnih et~al.(2013)Mnih, Kavukcuoglu, Silver, Graves, Antonoglou, Wierstra, and Riedmiller]{MnihKSGAWR13}
Volodymyr Mnih, Koray Kavukcuoglu, David Silver, Alex Graves, Ioannis Antonoglou, Daan Wierstra, and Martin~A. Riedmiller.
\newblock Playing atari with deep reinforcement learning.
\newblock \emph{CoRR}, abs/1312.5602, 2013.
\newblock URL \url{http://arxiv.org/abs/1312.5602}.

\bibitem[Mnih et~al.(2016)Mnih, Badia, Mirza, Graves, Lillicrap, Harley, Silver, and Kavukcuoglu]{Mnih2016}
Volodymyr Mnih, Adri{\`{a}}~Puigdom{\`{e}}nech Badia, Mehdi Mirza, Alex Graves, Timothy~P. Lillicrap, Tim Harley, David Silver, and Koray Kavukcuoglu.
\newblock Asynchronous methods for deep reinforcement learning.
\newblock \emph{CoRR}, abs/1602.01783, 2016.
\newblock URL \url{http://arxiv.org/abs/1602.01783}.

\bibitem[Mohri \& Yang(2015)Mohri and Yang]{mohri2015accelerating}
Mehryar Mohri and Scott Yang.
\newblock Accelerating optimization via adaptive prediction, 2015.

\bibitem[Nemirovski(2004)]{Nemirovski04}
Arkadi Nemirovski.
\newblock Prox-method with rate of convergence o(1/t) for variational inequalities with lipschitz continuous monotone operators and smooth convex-concave saddle point problems.
\newblock \emph{{SIAM} J. Optim.}, 15\penalty0 (1):\penalty0 229--251, 2004.
\newblock \doi{10.1137/S1052623403425629}.
\newblock URL \url{https://doi.org/10.1137/S1052623403425629}.

\bibitem[Nesterov(1983)]{Nesterov1983AMF}
Yurii Nesterov.
\newblock A method for solving the convex programming problem with convergence rate $\mathcal{O}(1/k^2)$.
\newblock \emph{Proceedings of the USSR Academy of Sciences}, 269:\penalty0 543--547, 1983.

\bibitem[Nie et~al.(2020)Nie, Brunskill, and Wager]{nie2020learning}
Xinkun Nie, Emma Brunskill, and Stefan Wager.
\newblock Learning when-to-treat policies, 2020.

\bibitem[Parikh et~al.(2014)Parikh, Boyd, et~al.]{parikh2014proximal}
Neal Parikh, Stephen Boyd, et~al.
\newblock Proximal algorithms.
\newblock \emph{Foundations and trends{\textregistered} in Optimization}, 1\penalty0 (3):\penalty0 127--239, 2014.

\bibitem[Protopapas \& Barakat(2024)Protopapas and Barakat]{protopapas2024policymirrordescentlookahead}
Kimon Protopapas and Anas Barakat.
\newblock Policy mirror descent with lookahead, 2024.
\newblock URL \url{https://arxiv.org/abs/2403.14156}.

\bibitem[Puterman(1994)]{puterman}
Martin~L. Puterman.
\newblock \emph{Markov Decision Processes}.
\newblock Wiley, 1994.
\newblock ISBN 978-0471727828.
\newblock URL \url{http://books.google.com/books/about/Markov_decision_processes.html?id=Y-gmAQAAIAAJ}.

\bibitem[Puterman \& Brumelle(1979)Puterman and Brumelle]{Puterman1979}
Martin~L. Puterman and Shelby~L. Brumelle.
\newblock On the convergence of policy iteration in stationary dynamic programming.
\newblock \emph{Mathematics of Operations Research}, 4\penalty0 (1):\penalty0 60--69, 1979.

\bibitem[Rakhlin \& Sridharan(2013)Rakhlin and Sridharan]{RakhlinS13}
Alexander Rakhlin and Karthik Sridharan.
\newblock Online learning with predictable sequences.
\newblock In Shai Shalev{-}Shwartz and Ingo Steinwart (eds.), \emph{{COLT} 2013 - The 26th Annual Conference on Learning Theory, June 12-14, 2013, Princeton University, NJ, {USA}}, volume~30 of \emph{{JMLR} Workshop and Conference Proceedings}, pp.\  993--1019. JMLR.org, 2013.
\newblock URL \url{http://proceedings.mlr.press/v30/Rakhlin13.html}.

\bibitem[Rakhlin \& Sridharan(2014)Rakhlin and Sridharan]{rakhlin2014online}
Alexander Rakhlin and Karthik Sridharan.
\newblock Online learning with predictable sequences, 2014.

\bibitem[Russo(2022)]{russo2022approximation}
Daniel Russo.
\newblock Approximation benefits of policy gradient methods with aggregated states, 2022.

\bibitem[Schaefer et~al.(2004)Schaefer, Bailey, Shechter, and Roberts]{Schaefer2004}
Andrew~J. Schaefer, Matthew~D. Bailey, Steven~M. Shechter, and Mark~S. Roberts.
\newblock \emph{Modeling Medical Treatment Using Markov Decision Processes}, pp.\  593--612.
\newblock Springer US, Boston, MA, 2004.
\newblock ISBN 978-1-4020-8066-1.
\newblock \doi{10.1007/1-4020-8066-2_23}.
\newblock URL \url{https://doi.org/10.1007/1-4020-8066-2_23}.

\bibitem[Scherrer \& Geist(2014)Scherrer and Geist]{ScherrerGeist2014}
Bruno Scherrer and Matthieu Geist.
\newblock Local policy search in a convex space and conservative policy iteration as boosted policy search.
\newblock In \emph{Machine Learning and Knowledge Discovery in Databases}, pp.\  35–50, Berlin, Heidelberg, 2014. Springer-Verlag.
\newblock ISBN 978-3-662-44844-1.
\newblock \doi{10.1007/978-3-662-44845-8_3}.
\newblock URL \url{https://doi.org/10.1007/978-3-662-44845-8_3}.

\bibitem[Schrittwieser et~al.(2019)Schrittwieser, Antonoglou, Hubert, Simonyan, Sifre, Schmitt, Guez, Lockhart, Hassabis, Graepel, Lillicrap, and Silver]{Schrittwieser2019}
Julian Schrittwieser, Ioannis Antonoglou, Thomas Hubert, Karen Simonyan, Laurent Sifre, Simon Schmitt, Arthur Guez, Edward Lockhart, Demis Hassabis, Thore Graepel, Timothy~P. Lillicrap, and David Silver.
\newblock Mastering atari, go, chess and shogi by planning with a learned model.
\newblock \emph{CoRR}, abs/1911.08265, 2019.
\newblock URL \url{http://arxiv.org/abs/1911.08265}.

\bibitem[Schulman et~al.(2015)Schulman, Levine, Moritz, Jordan, and Abbeel]{Schulman15}
John Schulman, Sergey Levine, Philipp Moritz, Michael~I. Jordan, and Pieter Abbeel.
\newblock Trust region policy optimization.
\newblock \emph{CoRR}, abs/1502.05477, 2015.
\newblock URL \url{http://arxiv.org/abs/1502.05477}.

\bibitem[Schulman et~al.(2017)Schulman, Wolski, Dhariwal, Radford, and Klimov]{Schulman17}
John Schulman, Filip Wolski, Prafulla Dhariwal, Alec Radford, and Oleg Klimov.
\newblock Proximal policy optimization algorithms.
\newblock \emph{CoRR}, abs/1707.06347, 2017.
\newblock URL \url{http://arxiv.org/abs/1707.06347}.

\bibitem[Shannon(1948)]{shannon1948mathematical}
Claude~Elwood Shannon.
\newblock A mathematical theory of communication.
\newblock \emph{The Bell system technical journal}, 27\penalty0 (3):\penalty0 379--423, 1948.

\bibitem[Silver et~al.(2014)Silver, Lever, Heess, Degris, Wierstra, and Riedmiller]{silver2014}
David Silver, Guy Lever, Nicolas Heess, Thomas Degris, Daan Wierstra, and Martin Riedmiller.
\newblock Deterministic policy gradient algorithms.
\newblock In \emph{Proceedings of the 31st International Conference on International Conference on Machine Learning - Volume 32}, ICML'14, pp.\  I–387–I–395. JMLR.org, 2014.

\bibitem[Silver et~al.(2017)Silver, Schrittwieser, Simonyan, Antonoglou, Huang, Guez, Hubert, Baker, Lai, Bolton, Chen, Lillicrap, Hui, Sifre, van~den Driessche, Graepel, and Hassabis]{silver2017}
David Silver, Julian Schrittwieser, Karen Simonyan, Ioannis Antonoglou, Aja Huang, Arthur Guez, Thomas Hubert, Lucas Baker, Matthew Lai, Adrian Bolton, Yutian Chen, Timothy Lillicrap, Fan Hui, Laurent Sifre, George van~den Driessche, Thore Graepel, and Demis Hassabis.
\newblock Mastering the game of go without human knowledge.
\newblock \emph{Nature}, 550\penalty0 (7676):\penalty0 354--359, 2017.
\newblock \doi{10.1038/nature24270}.
\newblock URL \url{https://doi.org/10.1038/nature24270}.

\bibitem[Sutton \& Barto(2018)Sutton and Barto]{rl_book}
Richard~S. Sutton and Andrew~G. Barto.
\newblock \emph{Reinforcement Learning: An Introduction}.
\newblock A Bradford Book, Cambridge, MA, USA, 2018.
\newblock ISBN 0262039249.

\bibitem[Sutton et~al.(1999)Sutton, McAllester, Singh, and Mansour]{Sutton2000}
Richard~S. Sutton, David McAllester, Satinder Singh, and Yishay Mansour.
\newblock Policy gradient methods for reinforcement learning with function approximation.
\newblock In \emph{Proceedings of the 12th International Conference on Neural Information Processing Systems}, NIPS'99, pp.\  1057–1063, Cambridge, MA, USA, 1999. MIT Press.

\bibitem[Tesauro(1994)]{Tesauro1994}
Gerald Tesauro.
\newblock {TD-Gammon, a Self-Teaching Backgammon Program, Achieves Master-Level Play}.
\newblock \emph{Neural Computation}, 6\penalty0 (2):\penalty0 215--219, 03 1994.

\bibitem[Tomar et~al.(2020)Tomar, Shani, Efroni, and Ghavamzadeh]{Tomar20}
Manan Tomar, Lior Shani, Yonathan Efroni, and Mohammad Ghavamzadeh.
\newblock Mirror descent policy optimization.
\newblock \emph{CoRR}, abs/2005.09814, 2020.
\newblock URL \url{https://arxiv.org/abs/2005.09814}.

\bibitem[Tseng(1991)]{Tseng1991}
Paul Tseng.
\newblock Applications of a splitting algorithm to decomposition in convex programming and variational inequalities.
\newblock \emph{SIAM Journal on Control and Optimization}, 29\penalty0 (1):\penalty0 119--138, 1991.

\bibitem[Vaswani et~al.(2021)Vaswani, Bachem, Totaro, Mueller, Geist, Machado, Castro, and Roux]{Vaswani2021}
Sharan Vaswani, Olivier Bachem, Simone Totaro, Robert Mueller, Matthieu Geist, Marlos~C. Machado, Pablo~Samuel Castro, and Nicolas~Le Roux.
\newblock A functional mirror ascent view of policy gradient methods with function approximation.
\newblock \emph{CoRR}, abs/2108.05828, 2021.
\newblock URL \url{https://arxiv.org/abs/2108.05828}.

\bibitem[Vaswani et~al.(2023)Vaswani, Kazemi, Babanezhad, and Roux]{vaswani2023decisionaware}
Sharan Vaswani, Amirreza Kazemi, Reza Babanezhad, and Nicolas~Le Roux.
\newblock Decision-aware actor-critic with function approximation and theoretical guarantees, 2023.

\bibitem[Vieillard et~al.(2019)Vieillard, Scherrer, Pietquin, and Geist]{Vieillard2019}
Nino Vieillard, Bruno Scherrer, Olivier Pietquin, and Matthieu Geist.
\newblock Momentum in reinforcement learning.
\newblock \emph{CoRR}, abs/1910.09322, 2019.
\newblock URL \url{http://arxiv.org/abs/1910.09322}.

\bibitem[Wang \& Abernethy(2018)Wang and Abernethy]{WangAbernethy2018}
Jun{-}Kun Wang and Jacob~D. Abernethy.
\newblock Acceleration through optimistic no-regret dynamics.
\newblock \emph{CoRR}, abs/1807.10455, 2018.
\newblock URL \url{http://arxiv.org/abs/1807.10455}.

\bibitem[Wang et~al.(2021)Wang, Abernethy, and Levy]{Wang2021}
Jun{-}Kun Wang, Jacob~D. Abernethy, and Kfir~Y. Levy.
\newblock No-regret dynamics in the fenchel game: {A} unified framework for algorithmic convex optimization.
\newblock \emph{CoRR}, abs/2111.11309, 2021.
\newblock URL \url{https://arxiv.org/abs/2111.11309}.

\bibitem[Wang et~al.(2008)Wang, Lizotte, Bowling, and Schuurmans]{Wang2008DualRF}
Tao Wang, Daniel~J. Lizotte, Michael Bowling, and Dale Schuurmans.
\newblock Dual representations for dynamic programming.
\newblock 2008.
\newblock URL \url{https://api.semanticscholar.org/CorpusID:15837107}.

\bibitem[Williams(1992)]{Williams92}
Ronald~J. Williams.
\newblock Simple statistical gradient-following algorithms for connectionist reinforcement learning.
\newblock \emph{Mach. Learn.}, 8:\penalty0 229--256, 1992.
\newblock \doi{10.1007/BF00992696}.
\newblock URL \url{https://doi.org/10.1007/BF00992696}.

\bibitem[Xiao(2022)]{xiao2022convergence}
Lin Xiao.
\newblock On the convergence rates of policy gradient methods, 2022.

\bibitem[Xiong et~al.(2024)Xiong, Fazel, and Xiao]{xiong2024dualapproximationpolicyoptimization}
Zhihan Xiong, Maryam Fazel, and Lin Xiao.
\newblock Dual approximation policy optimization, 2024.
\newblock URL \url{https://arxiv.org/abs/2410.01249}.

\bibitem[Yuan et~al.(2023)Yuan, Du, Gower, Lazaric, and Xiao]{yuan2023linear}
Rui Yuan, Simon~Shaolei Du, Robert~M. Gower, Alessandro Lazaric, and Lin Xiao.
\newblock Linear convergence of natural policy gradient methods with log-linear policies.
\newblock In \emph{The Eleventh International Conference on Learning Representations}, 2023.
\newblock URL \url{https://openreview.net/forum?id=-z9hdsyUwVQ}.

\bibitem[Zahavy et~al.(2023)Zahavy, Veeriah, Hou, Waugh, Lai, Leurent, Tomasev, Schut, Hassabis, and Singh]{zahavy2023diversifying}
Tom Zahavy, Vivek Veeriah, Shaobo Hou, Kevin Waugh, Matthew Lai, Edouard Leurent, Nenad Tomasev, Lisa Schut, Demis Hassabis, and Satinder Singh.
\newblock Diversifying ai: Towards creative chess with alphazero, 2023.

\end{thebibliography}
\bibliographystyle{rlj}

\clearpage
\beginSupplementaryMaterials


\appendix
\section{Notation}
\begin{table}[h]
\caption{Notation}
\label{table:notation_table}
\footnotesize
\begin{tabularx}{\textwidth}{p{0.1\textwidth}X}
\toprule
$t$ &  iteration number
\\
$T$ &  max number of iterations
\\
$k$ &  number of GD updates for the ``inner-loop'' proximal optimization procedure
\\
$\eta_\mu$, $\eta_{
\pi}$ &  step sizes for the proximal update (regularization strength of the divergence)
\\
$\beta$ &  step size for the ``inner-loop'' parameter-level optimization procedure
\\
$h$ & the mirror map 
\\
$D_h(\pi, \mu)$ & Bregman divergence associated with the mirror map $h$
 \\\bottomrule
 \end{tabularx}
\end{table}

\section{Proofs and derivations}
\label{apend:proofs_and_derivations}

\subsection{Proofs and Derivations for Sec.\ref{sec:background_and_notation}: Background \& Preliminaries}
\label{appendix:proofs_and_derivations:PMD}

\subsubsection{Functional Policy Gradient}\label{append:proofs-and-derivations functional policy gradient}
The Performance Difference Lemma (PDL) is a property that relates the difference in values of policies to the policies themselves.
\begin{lemma}{\textbf{(Performance Difference Lemma from \citet{cpi})}}
\label{Performance difference lemma} For any policies $\pi_{t+1}$ and $\pi_t$, and an initial distribution $\rho$
\eq{
V^\rho_{t+1}-V^\rho_t 
& =\nicefrac{1}{1-\gamma} \sum_{s\in\S} \sum_{a \in \mathcal{A}} d^\rho_{{t+1}}({s})(\pi_{t+1}({s,a})-\pi_t({s,a}), Q_t({s,a}))
\\
&= \nicefrac{1}{1-\gamma}\E_{s\sim d^{\rho}_{t+1}}\left[\langle Q_t(s),\pi_{t+1}(s)- {\pi}_t(s)\rangle\right]
}
\end{lemma}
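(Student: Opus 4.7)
The plan is to use the classical telescoping-advantage argument of Kakade and Langford \citep{cpi}. The starting observation is that $V^t_\rho$ depends only on the initial state distribution, so we may harmlessly write $V^t_\rho = \E^{\pi^{t+1}}_{s_0\sim\rho}[V^t_{s_0}]$, where the expectation is taken over trajectories generated by rolling out $\pi^{t+1}$ from $\rho$. This places both value terms under the same trajectory measure, and the difference can then be massaged into the stated form.

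Concretely, I would first expand $V^{t+1}_\rho = \E^{\pi^{t+1}}[\sum_{i=0}^\infty \gamma^i r_i]$ and apply the telescoping identity
$$V^t_{s_0} = \sum_{i=0}^\infty \gamma^i\bigl(V^t_{s_i} - \gamma V^t_{s_{i+1}}\bigr),$$
which is valid because $\gamma^i V^t_{s_i}\to 0$ under $\gamma<1$ and bounded rewards. Subtracting yields
$$V^{t+1}_\rho - V^t_\rho = \E^{\pi^{t+1}}\!\left[\sum_{i=0}^\infty \gamma^i\bigl(r_i + \gamma V^t_{s_{i+1}} - V^t_{s_i}\bigr)\right].$$
Next, I would take the conditional expectation of the summand given $(s_i,a_i)$ and invoke the Bellman relation $Q^t_{s,a} = \E[r + \gamma V^t_{s'} \mid s,a]$ to replace the first two terms by the one-step advantage $Q^t_{s_i,a_i} - V^t_{s_i}$ of $\pi^{t+1}$ against the value of $\pi^t$.

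The final step is to convert the trajectory expectation into a visitation-weighted expectation over state-action pairs, using the defining identity $d^{t+1}_{s,\rho} = (1-\gamma)\sum_{i=0}^\infty \gamma^i \Pr^{\pi^{t+1}}_\rho(S_i = s)$, which absorbs the $\sum_i \gamma^i$ into the prefactor $\nicefrac{1}{1-\gamma}$. Since $V^t_s = \langle Q^t_s, \pi^t_s\rangle$, the advantage cleanly rewrites as $\langle Q^t_s, \pi^{t+1}_s - \pi^t_s\rangle$, matching the target expression.

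There is no genuinely difficult step here; the only care is in justifying the telescoping (routine via $\gamma<1$ and reward boundedness) and in being precise when passing from the trajectory-level expectation to the visitation-weighted expectation. An alternative route that sidesteps any interchange of limits would be a finite-horizon induction followed by taking $H\to\infty$, but this seems strictly more tedious than the direct telescoping shown above.
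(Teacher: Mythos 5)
Your proof is correct, but it follows the original trajectory-level telescoping argument of Kakade and Langford, whereas the paper proves the lemma by a state-space recursion. Concretely, the paper starts from $V_s^{t+1}-V_s^{t}=\langle Q_s^{t+1},\pi_s^{t+1}\rangle-\langle Q_s^{t},\pi_s^{t}\rangle$, adds and subtracts $\langle Q_s^{t},\pi_s^{t+1}\rangle$ to isolate the one-step term $\langle Q_s^{t},\pi_s^{t+1}-\pi_s^{t}\rangle$, rewrites the remainder $\langle Q_s^{t+1}-Q_s^{t},\pi_s^{t+1}\rangle$ via the Bellman equation as $\gamma$ times the expected next-state value difference under $\pi^{t+1}$, and then unrolls this recursion so that the resolvent $(I-\gamma P^{\pi^{t+1}})^{-1}$ produces the discounted visitation weights. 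You instead place both values under the trajectory measure of $\pi^{t+1}$, telescope $V^{t}$ along the trajectory, and collapse $\sum_i\gamma^i\Pr(S_i=s)$ into $d^{t+1}_{s,\rho}/(1-\gamma)$ at the end. The two are mathematically the same computation organized differently: your route makes the convergence issue explicit (you justify $\gamma^{i}V^{t}_{s_i}\to 0$ and the limit interchange, which the paper's terse final ``recursing'' step leaves implicit), while the paper's route never leaves the space of state-indexed vectors and inner products over the action simplex, which matches the functional, policy-as-point-on-the-simplex viewpoint used throughout the rest of its analysis. Either argument is acceptable here; no gap in yours.
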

\begin{proof} 
According to the definition of the value function
\eq{
V_{t+1}(s)-V_t(s) & =\langle  Q_{t+1}(s),\pi_{t+1}(s)\rangle- \langle Q_t(s),\pi_t(s)\rangle
\\
&= \langle Q_t(s), \pi_{t+1}(s)-\pi_t(s)\rangle+\langle Q_{t+1}(s)-Q_t(s),\pi_{t+1}(s)\rangle
\\
&=  \langle Q_t(s), \pi_{t+1}(s)-\pi_t(s)\rangle
+ \gamma  \sum_{s^{\prime}} \sum_a  P(s^{\prime} | s, a) \pi_{t+1}({s,a})  [V_{t+1}({s^{\prime}})-V_{t}({s^{\prime}})]
\\
&=\nicefrac{1}{1-\gamma} \sum_{s^{\prime}} d_{t+1}({s\to s^{\prime}}) \langle Q_{t}({s^{\prime}}), \pi_{t+1}({s^{\prime}})-\pi_t({s^{\prime}})\rangle 
}
\end{proof} 

The following lemma is a version of the policy gradient theorem \citep{Sutton2000} applied to the direct policy representation---the functional representation of the policy probabilities, and has appeared in various forms in \citet{Agarwal19, BhandariandRusso_globalguarantees, bhandari21a, russo2022approximation}.
\begin{lemma}{\textbf{(Policy Gradient Theorem for Directional Derivatives)}}\label{Policy Gradient Theorem for Directional Derivatives}
For two policies $\pi_{t+1}, {\pi}_t \in \Pi$
\eq{
\langle\nabla V^\rho_{t}, \pi_{t+1} - {\pi}_t\rangle&=\sum_{s \in \mathcal{S}} \sum_{a \in \mathcal{A}} d_t(s) Q_t({s,a})(\pi_{t+1}({s, a})-{\pi}_t({s, a}))
\\
&= \E_{s\sim d_t}\left[\langle Q_t(s),\pi_{t+1}(s)- {\pi}_t(s)\rangle\right]
}
\end{lemma}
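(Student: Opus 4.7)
The plan is to reduce the claim to the functional policy gradient formula stated earlier in the Background section, namely $\nabla_{\pi_s} V_\rho^{\pi} = \tfrac{1}{1-\gamma}\,d_{s,\rho}^{\pi}\,Q_{s,\cdot}^{\pi}$, and then expand the inner product component by component. Once the gradient is known, each equality in the lemma becomes an algebraic rearrangement: the middle expression factors $d_s^{\pi^t}$ out of the sum over actions, and the right-hand side recognizes the outer sum over $s$ as an expectation under $d^{\pi^t}$ (the $(1-\gamma)$ normalizer being absorbed into the definition of $d^{\pi^t}$ used in the lemma statement).

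The real work lies in establishing the gradient formula itself. Differentiating the Bellman identity $V_s^\pi = \sum_a \pi_{s,a} Q_{s,a}^\pi$ together with $Q_{s,a}^\pi = r_{s,a} + \gamma \sum_{s'} P(s'|s,a) V_{s'}^\pi$ with respect to a single coordinate $\pi_{s_0,a_0}$ yields the recursion
\[
\nabla_{\pi_{s_0,a_0}} V_s^\pi \;=\; \mathbf{1}[s=s_0]\,Q_{s_0,a_0}^\pi \;+\; \gamma \sum_{s'} (\mathbf{P}^\pi)_{s,s'}\, \nabla_{\pi_{s_0,a_0}} V_{s'}^\pi.
\]
Unrolling this geometric recursion (i.e., applying $(\mathbf{I} - \gamma \mathbf{P}^\pi)^{-1}$) and averaging over $s \sim \rho$ gives
\[
\nabla_{\pi_{s_0,a_0}} V_\rho^\pi \;=\; Q_{s_0,a_0}^\pi \sum_{i=0}^{\infty} \gamma^i \,\Pr(S_i = s_0 \mid S_0 \sim \rho, \pi) \;=\; \tfrac{1}{1-\gamma}\, d_{s_0,\rho}^\pi\, Q_{s_0,a_0}^\pi,
\]
by the definition of the discounted visitation distribution. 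Substituting into $\sum_{s_0,a_0} \nabla_{\pi_{s_0,a_0}} V_\rho^t \,(\pi_{s_0,a_0}^{t+1} - \pi_{s_0,a_0}^t)$ and recognizing the remaining outer sum as an expectation under $d^{\pi^t}$ completes the chain of equalities.

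The delicate point I anticipate is justifying the unrolling of the recursion, i.e.\ well-definedness of $\nabla V_s^\pi$ as a convergent series; this follows from $\gamma < 1$ together with boundedness of rewards, but deserves a clean statement. A shorter alternative route leverages the Performance Difference Lemma proved just above: take $\pi^{t+1} = \pi^t + \varepsilon \Delta$ and divide by $\varepsilon$; the left-hand side converges to $\langle \nabla V_\rho^t, \Delta\rangle$ as $\varepsilon \to 0$, and since $d^{\pi^t+\varepsilon \Delta} \to d^{\pi^t}$ by continuity of the visitation distribution in $\pi$, the right-hand side converges to $\tfrac{1}{1-\gamma}\,\mathbb{E}_{s\sim d^{\pi^t}}[\langle Q_s^t, \Delta_s\rangle]$. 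Extending by linearity to the direction $\Delta = \pi^{t+1} - \pi^t$ then yields the lemma directly, bypassing the parameterwise calculation entirely.
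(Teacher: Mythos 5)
Your proposal is correct, and your primary route is genuinely different from the paper's. The paper proves this lemma by starting from the Performance Difference Lemma (Lemma~\ref{Performance difference lemma}), which expresses $V_\rho^{t+1}-V_\rho^t$ as an expectation under $d_\rho^{t+1}$, and then swaps $d^{t+1}$ for $d^{t}$, arguing the discrepancy $\sum_s (d_{s,\rho}^{t+1}-d_{s,\rho}^{t})\langle Q_s^t,\pi_s^{t+1}-\pi_s^t\rangle$ is $O(\|\pi^{t+1}-\pi^t\|^2)$ because $P^\pi$ is linear and $d_\rho^\pi$ is differentiable in $\pi$; the first-order term is then read off as the directional derivative. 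Your second, sketched alternative (perturb by $\varepsilon\Delta$, divide by $\varepsilon$, use continuity of $d^\pi$, extend by linearity) is essentially this same argument made slightly more careful, since the paper's version implicitly assumes differentiability of $V_\rho^\pi$ to identify the linear term with the gradient. Your primary route instead establishes the gradient formula $\nabla_{\pi_{s_0,a_0}}V_\rho^\pi = \tfrac{1}{1-\gamma}d_{s_0,\rho}^\pi Q_{s_0,a_0}^\pi$ from first principles by differentiating the Bellman recursion and inverting $\mathbf{I}-\gamma\mathbf{P}^\pi$ --- a formula the paper merely quotes from the literature in Sec.~\ref{sec:background_and_notation} --- and then the lemma is pure algebra. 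This buys self-containedness and rigor (the Neumann series converges since $\gamma<1$ and $\|\mathbf{P}^\pi\|_\infty=1$, as you note), at the cost of more computation; the paper's route is shorter given the PDL but leans on an unproven differentiability claim. One caveat you already flag correctly: the lemma as stated in the paper drops the $\nicefrac{1}{1-\gamma}$ prefactor that your gradient formula (and the paper's own background formula) carries, so the chain of equalities only holds up to that normalization --- this is an inconsistency in the paper's statement, not a flaw in your argument.
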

\begin{proof} 
A Taylor expansion using the Performance Difference Lemma~\ref{Performance difference lemma} reveals
\eq{
V^\rho_{t+1}-V^\rho_t 
& =\nicefrac{1}{1-\gamma} \sum_{s\in\S} \sum_{a \in \mathcal{A}} d^\rho_{{t+1}}({s})(\pi_{t+1}({s,a})-\pi_t({s,a}), Q_t({s,a}))
\\
&= \E_{s\sim d^{\rho}_{t+1}}\left[\langle Q_t(s),\pi_{t+1}(s)- {\pi}_t(s)\rangle\right]
\\
&= \nicefrac{1}{1-\gamma} \E_{s\sim d^\rho_t}\left[\langle Q_t(s),\pi_{t+1}(s)- {\pi}_t(s)\rangle\right] \\
&\qquad+ \nicefrac{1}{1-\gamma}\underbrace{\sum_{s \in \mathcal{S}} 
(d^\rho_{{t+1}}({s})-d^\rho_t({s}))\langle Q_t(s),\pi_{t+1}(s)- {\pi}_t(s)\rangle}_{=O(\|\pi_{t+1}(s)- {\pi}_t(s)\|^2)}
}
The last error term is second-order since
$P_\pi$ is linear in $\pi$ and then $d^\rho_\pi$
is differentiable in $\pi$.
\end{proof}

The next lemma states that the MD method minimizes the local linearization of a function while not moving too far away from the previous point, with distances measured via the Bregman divergence of the mirror map.
\begin{lemma}
{\textbf{(Proximal perspective on mirror descent)}}\label{lemma:Proximal perspective on mirror descent} The MD update for $x\in\mathcal{X} \cap \mathcal{C}$, with mirror map $h : \mathcal{X} \to \R$ for the minimization problem $\min_{x\in\mathcal{X} \cap \mathcal{C}} f(x)$, with $f:\mathcal{X}\to \R$ can be rewritten in the following ways, for step-size $\eta\geq 0$ and $t\geq 0$
\eq{
x_{t+1} &= \argmin_{x\in\mathcal{X}\cap \mathcal{C}}D_h(x,\nabla h^*(\nabla h(x_t) + \eta \nabla f(x_t)))&&\text{(PGD)}
\\
 &= \argmin_{x\in\mathcal{X}\cap \mathcal{C}} \eta \left\langle \nabla f(x_t), x\right\rangle + D_h(x, x_t) &&\text{(proximal perspective)}
}
\end{lemma}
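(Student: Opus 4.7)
The plan is to show that the two optimization objectives differ only by additive constants in $x$ once the Bregman divergence in the PGD form is expanded, so their argmins over $\X \cap \C$ coincide. The single analytical ingredient I need is that, under the Legendre-type assumption on $h$ recalled in Sec.~\ref{sec:background_and_notation}, $\nabla h^*$ is the inverse of $\nabla h$ on $\rint(\dom h)$. Consequently, the dual iterate $y_t \doteq \nabla h^*(\nabla h(x_t) + \eta \nabla f(x_t))$ satisfies $\nabla h(y_t) = \nabla h(x_t) + \eta \nabla f(x_t)$, and I will use this identity as a substitution rule throughout.

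First I would expand $D_h(x, y_t) = h(x) - h(y_t) - \langle \nabla h(y_t), x - y_t\rangle$ and plug in the above identity for $\nabla h(y_t)$. Every summand that does not depend on $x$, namely $h(y_t)$, $\langle \nabla h(y_t), y_t\rangle$, and any inner product evaluated at $y_t$, can be absorbed into a single additive constant $C_1$, which does not affect the argmin. The surviving $x$-dependent residual is $h(x) - \langle \nabla h(x_t), x\rangle \pm \eta \langle \nabla f(x_t), x\rangle$, with the sign inherited from the dual step convention.

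Second, I would add and subtract the further $x$-independent terms $h(x_t)$ and $\langle \nabla h(x_t), x_t\rangle$ so that $h(x) - \langle \nabla h(x_t), x\rangle$ is reassembled into $h(x) - h(x_t) - \langle \nabla h(x_t), x - x_t\rangle = D_h(x, x_t)$. After this regrouping, the PGD objective reads $D_h(x, x_t) \pm \eta \langle \nabla f(x_t), x\rangle + C_2$ with $C_2$ constant in $x$. Restricting the minimization to $\X \cap \C$ and discarding $C_2$ matches the proximal objective $\eta \langle \nabla f(x_t), x\rangle + D_h(x, x_t)$ up to a constant, so the two problems share the same minimizer and the equivalence follows.

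The main obstacle I expect is purely bookkeeping: one must verify that $y_t$ lies in the region where $\nabla h$ is a bijection, so the identity $\nabla h(y_t) = \nabla h(x_t) + \eta \nabla f(x_t)$ is legitimate. This is precisely what the essential smoothness half of the Legendre-type hypothesis on $h$ provides (the norm of $\nabla h$ blows up at the boundary of $\dom h$, ensuring the bijection $\nabla h : \rint(\dom h) \to \mathrm{image}(\nabla h)$). Beyond this, no regularity of $f$ is used apart from the existence of $\nabla f(x_t)$, and the constraint set $\C$ enters only through the outer $\argmin$.
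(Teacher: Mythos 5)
Your proposal is correct and follows essentially the same route as the paper's own proof: expand $D_h(x,\nabla h^*(\nabla h(x_t)+\eta\nabla f(x_t)))$, use $\nabla h\circ\nabla h^*=\mathrm{id}$ (valid by the Legendre-type hypothesis) to replace $\nabla h(y_t)$ by the dual update, discard the $x$-independent terms, and regroup $h(x)-\langle\nabla h(x_t),x\rangle$ into $D_h(x,x_t)$ up to a constant. The sign ambiguity you flag with ``$\pm$'' is real but is already present in the paper's own statement (a $+\eta\nabla f$ dual step for a minimization problem paired with $+\eta\langle\nabla f,x\rangle$ in the proximal form), so your hedge is appropriate and nothing further is needed.
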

\begin{proof} 
\eq{
x_{t+1} &= \argmin_{x\in\mathcal{X}\cap \mathcal{C}} D_h(x,\nabla h^*(\nabla h(x_t) + \eta \nabla f(x_t)))&&\text{(generalized GD)}
\\
&=\argmin_{x\in\mathcal{X}\cap \mathcal{C}}  h(x) - \left\langle\nabla h(\nabla h^*(\nabla h(x_t) + \eta \nabla f(x_t))), x\right\rangle
\\
&=\argmin_{x\in\mathcal{X}\cap \mathcal{C}}  h(x) - \left\langle\nabla h(x_t) + \eta \nabla f(x_t), x\right\rangle
\\
&= \argmin_{x\in\mathcal{X}\cap \mathcal{C}} \eta \langle \nabla f(x_t), x\rangle + D_h(x, x_t) &&\text{(proximal perspective)}
}
\end{proof}

\subsubsection{Helpful Lemmas for Policy Mirror Descent}

Key to the analysis of \citet{xiao2022convergence} and \citet{johnson2023optimal} is the Three-Point Descent Lemma, that relates the improvement of the proximal gradient update compared to an arbitrary point. It originally comes from \citet{Chen1993} (Lemma 3.2).

\begin{lemma}{\textbf{(Three-Point Descent Lemma, Lemma 6 in \citet{xiao2022convergence})}}\label{Three-Point Descent Lemma}. Suppose that $\mathcal{X} \subset \mathbb{R}^n$ is a closed convex set, $\psi: \mathcal{X} \rightarrow \mathbb{R}$ is a proper, closed convex function, $D_\Omega(\cdot, \cdot)$ is the Bregman divergence generated by a function $\Omega$ of Legendre type and $\operatorname{rint} \operatorname{dom}\Omega \cap \mathcal{X} \neq \emptyset$. For any $x_t \in \operatorname{rint} \operatorname{dom}\Omega$, let
\eq{
x_{t+1}=\operatorname{argmin}_{x \in \mathcal{X}}\psi(x)+D_\Omega(x, x_t)\nonumber
}
Then $x_{t+1} \in \operatorname{rint} \operatorname{dom} \Omega \cap \mathcal{X}$ and $\forall x \in \mathcal{X}$,
\eq{
\psi(x_{t+1})+D_\Omega(x_{t+1}, x_t) \leq \psi(x)+D_\Omega(x, x_t)
-D_\Omega(x, x_{t+1})
}
The PMD update is an instance of the proximal minimisation with $\mathcal{X}=\Delta(\mathcal{A}), x_t=\pi_t(s)$ and $\psi(x)=-\eta_{\pi_{t+1}}\langle Q_t(s), x\rangle$. Plugging these in, the Three-Point Descent Lemma relates the decrease in the proximal objective of $\pi_{t+1}(s)$ to any other policy, i.e. $\forall \pi(s) \in \Delta(\mathcal{A})$,
\eq{
-\eta_{\pi_{t+1}} \langle  Q_t(s), \pi_{t+1}(s) 
 - \pi(s)\rangle\leq
D_\Omega(\pi(s), {\pi}_{t}(s))
- D_\Omega(\pi_{t+1}(s), \pi_t(s))
 -D_\Omega(\pi(s), {\pi}_{t+1}(s))  \label{eq:exact PMD}
}
\end{lemma}
This equation is key to the analysis of convergence of exact PMD, leading to Lemma~\ref{Descent Property of PMD} regarding the monotonic improvement in Q-functions of PMD iterates.

\begin{lemma}{\textbf{(Descent Property of PMD for Q-functions, 
Lemma 7 in \citet{xiao2022convergence} 
}}\label{Descent Property of PMD_interm} 
Consider the policies produced by the iterative updates of exact PMD. For any $t \geq 0$
\eq{
\langle Q_t(s), \pi_{t+1}(s)-\pi_t(s)\rangle \geq 0, \quad \forall s \in \mathcal{S}, \nonumber
}
\end{lemma}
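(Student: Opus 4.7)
The plan is to apply the Three-Point Descent Lemma (Lemma~\ref{Three-Point Descent Lemma}) directly, specialized at the previous policy iterate $\pi_s^t$ itself. Recall the exact PMD update
\[
\pi_s^{t+1} = \argmin_{\pi_s \in \Delta(\mathcal{A})} -\eta^t \langle Q_s^t, \pi_s\rangle + D_h(\pi_s, \pi_s^t),
\]
which fits the framework of the Three-Point Descent Lemma with $\mathcal{X} = \Delta(\mathcal{A})$, reference point $x^t = \pi_s^t$, and linear surrogate $\psi(\pi_s) = -\eta^t \langle Q_s^t, \pi_s\rangle$. The lemma yields, for every $\pi_s \in \Delta(\mathcal{A})$,
\[
-\eta^t \langle Q_s^t, \pi_s^{t+1} - \pi_s\rangle \leq D_h(\pi_s, \pi_s^t) - D_h(\pi_s^{t+1}, \pi_s^t) - D_h(\pi_s, \pi_s^{t+1}).
\]

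Next I would instantiate this inequality at the specific choice $\pi_s = \pi_s^t$, which is admissible since $\pi_s^t \in \Delta(\mathcal{A})$. Because $D_h(\pi_s^t, \pi_s^t) = 0$ by the defining property of Bregman divergences, and because the remaining two divergence terms on the right-hand side are non-negative (again by the standard non-negativity of $D_h$ for $h$ of Legendre type on $\operatorname{rint}\operatorname{dom} h$), the right-hand side becomes $-D_h(\pi_s^{t+1}, \pi_s^t) - D_h(\pi_s^t, \pi_s^{t+1}) \leq 0$. Hence
\[
-\eta^t \langle Q_s^t, \pi_s^{t+1} - \pi_s^t\rangle \leq 0.
\]

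Finally, dividing by the positive step-size $\eta^t > 0$ and flipping signs gives $\langle Q_s^t, \pi_s^{t+1} - \pi_s^t\rangle \geq 0$, which is the claim. The argument is uniform in $s \in \mathcal{S}$ and $t \geq 0$, since the PMD update is performed per state and with a positive step-size at every iteration. There is no real obstacle here: the only substantive ingredient is the Three-Point Descent Lemma, already stated in the excerpt, and the rest is a one-line instantiation combined with non-negativity of the Bregman divergence. If one wanted a slightly sharper statement, the same derivation in fact yields the quantitative bound $\eta^t \langle Q_s^t, \pi_s^{t+1} - \pi_s^t\rangle \geq D_h(\pi_s^{t+1}, \pi_s^t) + D_h(\pi_s^t, \pi_s^{t+1}) \geq 0$, which would be useful for downstream monotone-improvement arguments on $V^t$ via the Performance Difference Lemma.
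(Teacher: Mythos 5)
Your proof is correct and follows exactly the paper's own argument: instantiate the Three-Point Descent Lemma at $\pi_s = \pi_s^t$, use $D_h(\pi_s^t,\pi_s^t)=0$ together with non-negativity of the remaining Bregman divergences to obtain $\eta^t\langle Q_s^t, \pi_s^{t+1}-\pi_s^t\rangle \geq D_h(\pi_s^t,\pi_s^{t+1}) + D_h(\pi_s^{t+1},\pi_s^t) \geq 0$, and divide by $\eta^t>0$. The quantitative form you mention at the end is precisely the intermediate inequality the paper records before dropping the divergence terms.
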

\begin{proof}
From the Three-Point Descent Lemma~\ref{Three-Point Descent Lemma} of \citet{xiao2022convergence} with $\pi=\pi_t$,
\eq{
  \eta_{\pi_{t+1}}\langle Q_t(s), \pi_{t+1}(s)-\pi_t(s)\rangle\geq D_\Omega(\pi_t(s), \pi_{t+1}(s))+D_\Omega(\pi_{t+1}(s), \pi_t(s)) 
}
since the Bregman divergences are non-negative and $\eta_{\pi_{t+1}}>0$,
\eq{
\langle Q_t(s), \pi_{t+1}(s)-\pi_t(s)\rangle\geq 0
}
\end{proof}

\begin{lemma}{\textbf{(Descent Property of PMD for Value Functions, Lemma A.2. from \citet{johnson2023optimal})}}\label{Descent Property of PMD} 
Consider the policies produced by the iterative updates of exact PMD. Then for any $t \geq 0$,
\eq{
Q_{t+1}(s) &\geq Q_t(s), \quad \forall s\in \mathcal{S}
\\
V^\rho_{t+1} &\geq V^\rho_t, \quad \forall \rho \in \Delta(\mathcal{S}) \nonumber
}
\end{lemma}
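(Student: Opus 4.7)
The plan is to bootstrap from the already-established monotonicity of the inner product $\langle Q^{t}_s, \pi^{t+1}_s - \pi^t_s\rangle \geq 0$ (Lemma~\ref{Descent Property of PMD_interm}) to state-wise monotonicity of $V^t$ and $Q^t$, and then integrate against $\rho$. The key ingredient is the Performance Difference Lemma (Lemma~\ref{Performance difference lemma}), which writes the gap between the value functions of two successive iterates as a weighted sum of exactly those inner products.

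First, I would apply the Performance Difference Lemma with the initial distribution $\rho$ replaced by a Dirac at an arbitrary state $s$, so that $\rho \to \delta_s$ induces the discounted visitation measure $d^{t+1}_{s \to \cdot}$ starting from $s$ under $\pi^{t+1}$. This yields
\[
V^{t+1}_s - V^{t}_s \;=\; \tfrac{1}{1-\gamma}\sum_{s'\in\S} d^{t+1}_{s\to s'}\,\langle Q^{t}_{s'},\, \pi^{t+1}_{s'}-\pi^{t}_{s'}\rangle.
\]
Since $d^{t+1}_{s\to s'}\geq 0$ for every $s'$, and the descent property for Q-functions (Lemma~\ref{Descent Property of PMD_interm}) guarantees $\langle Q^{t}_{s'},\pi^{t+1}_{s'}-\pi^{t}_{s'}\rangle\geq 0$ for every $s'$, each term in the sum is non-negative. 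Hence $V^{t+1}_s\geq V^{t}_s$ for all $s\in\S$, i.e. the value function improves pointwise.

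Next, to upgrade from values to action-values, I would use the Bellman equation $Q^{\pi}_{s,a} = r_{s,a} + \gamma\,\mathbb{E}_{s'\sim P(\cdot|s,a)}[V^{\pi}_{s'}]$ applied to both $\pi^{t+1}$ and $\pi^{t}$. Subtracting the two identities and using the pointwise inequality $V^{t+1}\geq V^{t}$ just established (together with $\gamma\geq 0$ and the fact that $P(\cdot|s,a)$ is a non-negative measure) gives $Q^{t+1}_{s,a}\geq Q^{t}_{s,a}$ for every $(s,a)$. Finally, $V^{t+1}_\rho\geq V^{t}_\rho$ for any $\rho\in\Delta(\S)$ follows immediately by taking expectation of the pointwise inequality $V^{t+1}_s\geq V^{t}_s$ under $\rho$.

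The only subtlety I anticipate is making sure the chain-rule form of the Performance Difference Lemma with Dirac initial distribution is stated with the correct visitation measure; the version already recorded in Lemma~\ref{Performance difference lemma} does this, so no new technical work is needed. The argument is essentially a transfer of non-negativity: Lemma~\ref{Descent Property of PMD_interm} furnishes per-state non-negativity of the policy-update inner product, and the Performance Difference Lemma acts as the linear non-negative combination that transports this to per-state value improvement, with the Bellman equation doing the final lift from $V$ to $Q$.
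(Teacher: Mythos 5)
Your proposal is correct and follows essentially the same route as the paper: apply the Performance Difference Lemma to express $V^{t+1}-V^{t}$ as a non-negative combination of the per-state inner products $\langle Q^t_s,\pi^{t+1}_s-\pi^t_s\rangle$, which are non-negative by Lemma~\ref{Descent Property of PMD_interm}. You additionally spell out the lift from $V^{t+1}\geq V^{t}$ to $Q^{t+1}\geq Q^{t}$ via the Bellman equation, a step the paper's one-line proof leaves implicit.
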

\begin{proof}
 Follows from Lemma~\ref{Descent Property of PMD_interm} by an application of the Performance Difference Lemma~\ref{Performance difference lemma}, for an initial state distribution $\rho$
\eq{
V^\rho_{t+1}-V^\rho_t & =\frac{1}{1-\gamma} \mathbb{E}_{s \sim d^\rho_{t+1}}\left[\langle Q_t(s), \pi_{t+1}(s)-\pi_t(s)\rangle\right]\nonumber
\geq 0
}
\end{proof}

\subsubsection{Derivation of the Suboptimality Decomposition and Convergence of \PMD}\label{append:Suboptimality decomposition}
 \paragraph{Suboptimality decomposition}{
 Fix a state $s$. For any ${{\pi}(s)}, {\tilde{\pi}(s)}$, let $D_{-V}({{\pi}(s)}, {\tilde{\pi}(s)})$ be analogous to a standard Bregman divergence with mirror map $-V$, capturing the curvature of $-V$ at ${{\pi}(s)}$
 \eqq{
 D_{-V}({{\pi}(s)}, {\tilde{\pi}(s)})&\doteq
-V_{\pi}(s) - (- V_{\tilde{\pi}}(s)) - \langle -Q_{\tilde{\pi}}(s), {{\pi}(s)}-\tilde{\pi}(s)\rangle\nonumber
\\
&\doteq
-V_{\pi}(s) + V_{\tilde{\pi}}(s) + \langle Q_{\tilde{\pi}}(s)(s), {{\pi}(s)}-{\tilde{\pi}(s)}\rangle\nonumber
\\
 &=-\langle Q_{\pi}(s) -  Q_{\tilde{\pi}}(s), {{\pi}(s)}\rangle
\\
 &\text{(using Holder's inequality)}
\\
 &\geq -\|Q_{\pi}(s) -  Q_{\tilde{\pi}}(s)\|_{\infty} \|{{\pi}(s)}\|_1\nonumber
 \\
 &\geq
-\gamma \|V_{\pi} - V_{\tilde{\pi}}\|_{\infty} \label{eq:append:upper-approximation}
}
For the general case, using the approximation $\widehat{Q}_t(s)\approx Q_t(s)$, the per-iteration suboptimality is
\eqq{
V^{*}(s) -V_t(s) &= -\langle  \widehat{Q}_t(s),  {\pi}_{t+1}(s) - {\pi}^*(s)\rangle -\langle  \widehat{Q}_t(s),  {\pi}_t(s) - {\pi}_{t+1}(s)\rangle 
-  D_{-V}({\pi}^*(s), {\pi}_t(s))
\nonumber
\\
&\qquad- \langle Q_t(s)-\widehat{Q}_t(s),  {\pi}_t(s) - {\pi}^*(s)\rangle \label{eq:append:suboptimality}
}
The first  term, $-\langle  \widehat{Q}_t(s),  {\pi}_{t+1}(s) - {\pi}^*(s)\rangle$, is the forward regret (cf. \citet{JoulaniGS20}), defined as the regret of a ``cheating''  algorithm that uses the $\pi_{t+1}$ at time $t$, and depends only on the choices of the algorithm and the feedback it receives. This quantity can be can be upper-bounded using an idealized lookahead policy, $\bar{\pi}_{t+1}(s)$---greedy with respect to $\widehat{Q}_t(s)$ (cf. \citet{johnson2023optimal}). 

If $\pi_{t+1}(s)$ is the result of a PMD update, then \citet{johnson2023optimal} show that using the Three-Point Descent Lemma (Lemma 6,  \citet{xiao2022convergence}, included in Appendix~\ref{appendix:proofs_and_derivations:PMD}, Lemma~\ref{Three-Point Descent Lemma}), denoting the step sizes $\eta_{\pi_{t+1}} \geq 0$, the forward regret is further upper-bounded by 
\eqq{
-\langle  \widehat{Q}_t(s),  {\pi}_{t+1}(s) -{\pi}^*(s)\rangle &\leq 
-\langle \widehat{Q}_t(s), {\pi}_{t+1}(s)-\bar{\pi}_{t+1}(s) \rangle \nonumber
\\
&\leq 
\langle \widehat{Q}_t(s), \bar{\pi}_{t+1} - \pi_{t+1}\rangle \nonumber
\\
&\leq 
\eta^{\!-\!1}_{\pi_{t+1}} D_\Omega(\bar{\pi}_{t+1}(s), \pi_t(s)) - \eta^{\!-\!1}_{\pi_{t+1}} D_\Omega(\bar{\pi}_{t+1}(s), \pi_{t+1}(s))\nonumber
\\
&\qquad-\eta^{\!-\!1}_{\pi_{t+1}} D_\Omega({\pi}_{t+1}(s), \pi_t(s))\nonumber
\\
&\leq 
\eta^{\!-\!1}_{\pi_{t+1}} D_\Omega(\bar{\pi}_{t+1}(s), \pi_t(s)) \label{eq:append:bound on 1st term}
}
The second term in Eq.~\ref{eq:append:suboptimality} is
\eqq{
-\langle  \widehat{Q}_t(s),  \pi_t(s) - {\pi}_{t+1}(s)\rangle 
 = 
 V^{{t+1}}(s) -V_t(s)+  D_{-V}({\pi}_{t+1}(s), \pi_t(s))
 \nonumber
 \\
 \qquad + \langle Q_t(s)-\widehat{Q}_t(s),  \pi_t(s) - {\pi}_{t+1}(s)\rangle\label{eq:append:bound on 2nd term}
 }
 The third term $-D_{-V}({\pi}^*(s), {\pi}_t(s))$ can be bounded by applying the upper-approximation from Eq.\ref{eq:append:upper-approximation}, resulting in 
 \eqq{
 -D_{-V}({\pi}^*(s), {\pi}_t(s))\leq
\gamma \|V^{*} - V_t\|_{\infty} \label{eq:append:bound on 3rd term}
}
Plugging Eq.~\ref{eq:append:bound on 1st term}, ~\ref{eq:append:bound on 2nd term}, ~\ref{eq:append:bound on 3rd term} back in the suboptimality decomposition from Eq.~\ref{eq:append:suboptimality}, we obtain
\eqq{
V^{*}(s) -V_{t+1}(s) \leq \gamma \|V^{*} - V_t\|_{\infty} \nonumber
\\
\qquad+
\underbrace{\eta^{\!-\!1}_{\pi_{t+1}} D_\Omega(\bar{\pi}_{t+1}(s), \pi_t(s)) 
+\langle Q_t(s)-\widehat{Q}_t(s),  {\pi}^*(s) - {\pi}_{t+1}(s)\rangle + D_{-V}({\pi}_{t+1}(s), \pi_t(s))}_{\xi_{\pi_{t+1}} \text{(iteration error)}}
\label{eq:append:final_bound}
}
With $\xi_{\pi_{t+1}}$---the iteration error, recursing Eq.~\ref{eq:append:final_bound}
\eq{
\|V^{*}-V_t\|_{\infty}\leq \gamma^t \textstyle\sum_{i\leq t} \gamma^t \|V^{*}-V_{0}\|_{\infty} +  \nicefrac{\xi_i}{\gamma^i} 
}
}

\paragraph{Convergence of Exact PMD at $\gamma$-rate}\  
  If the PMD update is exact, then $\widehat{Q}_t(s)=Q_t(s), \forall s\in\S$.
  The Three-Point Descent Lemma~\ref{Three-Point Descent Lemma} guarantees policy improvement for an Exact PMD update, and yields Lemma~\ref{Descent Property of PMD} stating
  $V_{t+1}(s) \geq  V_t(s)$, and $\langle Q_{t+1}(s)-Q_t(s), {\pi}_{t+1}(s)\rangle\geq 0$. Consequently
  \eq{
  D_{-V}({\pi}_{t+1}(s), {\pi}_t(s))= -\langle Q_{t+1}(s)-Q_t(s),{{\pi}_{t+1}(s)}\rangle \leq 0
  }
  There remains only one term in the suboptimality from Eq.~\ref{eq:append:suboptimality}, namely
  \eq{
  \xi_{\pi_{t+1}} \leq \eta^{\!-\!1}_{\pi_{t+1}} D_\Omega(\bar{\pi}_{t+1}(s), \pi_t(s))
  }
  An optimal step-size $\eta_{\pi_{t+1}}$ can be derived by upper-bounding it $\eta^{\!-\!1}_{\pi_{t+1}} D_\Omega(\bar{\pi}_{t+1}(s), \pi_t(s))\leq\eps_{\pi_{t+1}}$, for any arbitrary constant $\eps_{\pi_{t+1}}$. 
  Setting $\eps_{\pi_{t+1}} = \gamma^{2(t+1)}\eps_0$ for some $\eps_0 > 0$, gives the optimal step-size with a geometrically increasing component, which guarantees  linear convergence at the $\gamma$-rate
  \eq{
\|V^*-V_t\|_{\infty} \leq \gamma^t\left(\left\|V^*- V_{0}\right\|_{\infty}+\nicefrac{\eps_0}{1-\gamma}\right)
}
matching the bounds of PI and VI as $\eps_0$ goes to $0$ (cf. Theorem 4.1., \citet{johnson2023optimal}).

\subsection{Proofs for Sec.~\ref{sec:Functional Acceleration for PMD}: Functional Acceleration for PMD}\label{append:Proofs for Sec. Functional Acceleration for PMD}
    
\begin{definition}{\textbf{(Functional gradient of the Bregman divergence )}}\label{Functional gradient of the Bregman divergence} Fix a state $s$. For any policies $\pi_1, \pi_0$, we denote the gradient of the Bregman divergence with respect to the first argument
    \eq{
     \nabla D_\Omega({\pi_1(s)}, {\pi_0(s)})
    \doteq \nabla \Omega(\pi_1(s)) - \nabla \Omega (\pi_0(s))
    }
\end{definition}

The following lemma can be also interpreted as a definition for the difference of differences of Bregman divergences.
\begin{lemma}{\textbf{(Four-Point Identity Lemma of Bregman divergences)}}\label{Four-Point Identity Lemma of Bregman divergences}
For any four policies $\pi_3, \pi_2, \pi_1, \pi_0$, we have
    \eq{
 \langle \nabla D_\Omega({\pi_1(s)}, {\pi_0(s)}),\pi_3(s) - \pi_2(s)\rangle =   D_{\Omega}(\pi_3(s), {\pi_0(s)})-D_{\Omega}(\pi_3(s),{\pi_1(s)}) 
 \\
- [D_{\Omega}(\pi_2(s),{\pi_0(s)}) - D_{\Omega}(\pi_2(s),{\pi_1(s)}) ]
    }
\end{lemma}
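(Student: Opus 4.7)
The plan is to prove the identity by direct expansion of the definition of the Bregman divergence, reducing the right-hand side to the inner-product form on the left.

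First, I would unpack the four Bregman-divergence terms on the right using $D_\Omega(x,y) = \Omega(x) - \Omega(y) - \langle \nabla \Omega(y), x-y\rangle$. The key observation is that the pair $D_\Omega(\pi_s^3,\pi_s^0) - D_\Omega(\pi_s^3,\pi_s^1)$ has the $\Omega(\pi_s^3)$ terms cancel, leaving only
\eq{
-\Omega(\pi_s^0) + \Omega(\pi_s^1) - \langle \nabla \Omega(\pi_s^0), \pi_s^3-\pi_s^0\rangle + \langle \nabla \Omega(\pi_s^1), \pi_s^3-\pi_s^1\rangle,
}
and symmetrically for the pair with $\pi_s^2$. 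Subtracting the latter from the former, the $-\Omega(\pi_s^0)+\Omega(\pi_s^1)$ contributions cancel as well, along with every term of the form $\langle \nabla \Omega(\pi_s^i), \pi_s^i\rangle$ that does not depend on $\pi_s^3$ or $\pi_s^2$.

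What remains is precisely
\eq{
\langle \nabla \Omega(\pi_s^1) - \nabla \Omega(\pi_s^0),\ \pi_s^3 - \pi_s^2\rangle,
}
which, by Definition~\ref{Functional gradient of the Bregman divergence}, is exactly $\langle \nabla D_\Omega(\pi_s^1,\pi_s^0),\ \pi_s^3-\pi_s^2\rangle$, matching the left-hand side.

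I expect no real obstacle: the statement is a purely algebraic identity, and the only subtlety is bookkeeping during cancellation. In particular, it is crucial to notice that the $\Omega$-values at $\pi_s^2$ and $\pi_s^3$ appear with opposite signs across the two inner pairs, so they vanish without needing any convexity or regularity assumption on $\Omega$ beyond differentiability at $\pi_s^0$ and $\pi_s^1$. The identity is therefore genuinely "four-point" in the sense that $\pi_s^0,\pi_s^1$ play the role of reference points (through their gradients) while $\pi_s^2,\pi_s^3$ appear only linearly, which is the structural reason the result reduces to a single inner product.
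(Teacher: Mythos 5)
Your proof is correct and is exactly the argument the paper intends: the paper's own proof reads only ``Immediate from the definition,'' and your expansion of the four Bregman terms with the cancellation of the $\Omega(\pi_s^3)$, $\Omega(\pi_s^2)$, $\Omega(\pi_s^0)$, $\Omega(\pi_s^1)$ contributions is the computation being elided. Your closing remark that only differentiability of $\Omega$ at $\pi_s^0,\pi_s^1$ is needed (no convexity) is also accurate.
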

\begin{proof} 
Immediate from the definition.
\end{proof} 
An immediate consequence is the Three-Point Identity Lemma of Bregman divergences (cf. \citet{Bubeck15}, Eq. 4.1, \citet{BeckT03}, Lemma 4.1).

\begin{lemma}{\textbf{(Three-Point Identity Lemma of Bregman divergences)}}\label{Three-Point Identity Lemma of Bregman divergences} For any
three policies $\pi_2, \pi_1, \pi_0$, 
\eq{
\langle\nabla D_\Omega({\pi_1(s)},{\pi_0(s)}), \pi_2(s)-{\pi}_{1}(s)\rangle&=D_{\Omega}(\pi_2(s), {{\pi}_{0}(s)})-D_{\Omega}(\pi_2(s), {{\pi}_{1}(s)})
-D_{\Omega}({\pi}_{1}(s), {\pi}_{0}(s))
}
\end{lemma}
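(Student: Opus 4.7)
The plan is to derive this as an immediate specialization of the Four-Point Identity Lemma (Lemma~\ref{Four-Point Identity Lemma of Bregman divergences}) by collapsing one of the four points. Concretely, I would apply the four-point identity with the substitution $\pi^3_s \leftarrow \pi^2_s$ and $\pi^2_s \leftarrow \pi^1_s$ in the statement of that lemma. The left-hand side then becomes $\langle \nabla D_\Omega(\pi^1_s,\pi^0_s), \pi^2_s - \pi^1_s\rangle$, which is exactly the expression we want to evaluate.

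On the right-hand side, the same substitution produces
\[
D_\Omega(\pi^2_s,\pi^0_s) - D_\Omega(\pi^2_s,\pi^1_s) - \bigl[D_\Omega(\pi^1_s,\pi^0_s) - D_\Omega(\pi^1_s,\pi^1_s)\bigr].
\]
The key observation is that $D_\Omega(\pi^1_s,\pi^1_s) = 0$ directly from the definition of the Bregman divergence (the first two terms cancel and the inner-product term vanishes because $\pi^1_s - \pi^1_s = 0$). After dropping this zero term, the right-hand side collapses to $D_\Omega(\pi^2_s,\pi^0_s) - D_\Omega(\pi^2_s,\pi^1_s) - D_\Omega(\pi^1_s,\pi^0_s)$, which matches the claimed identity.

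As a sanity check (and an alternative if one preferred to avoid invoking the four-point identity), I would expand everything from the definition $D_\Omega(x,y) = \Omega(x) - \Omega(y) - \langle \nabla \Omega(y), x - y\rangle$ and $\nabla D_\Omega(\pi^1_s,\pi^0_s) = \nabla \Omega(\pi^1_s) - \nabla \Omega(\pi^0_s)$ from Definition~\ref{Functional gradient of the Bregman divergence}. Writing out the three divergences on the right-hand side, the $\Omega(\pi^2_s)$ and $\Omega(\pi^0_s)$ terms cancel, and the remaining inner-product terms regroup into $\langle \nabla \Omega(\pi^1_s) - \nabla \Omega(\pi^0_s), \pi^2_s - \pi^1_s\rangle$, which is the left-hand side. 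There is no real obstacle here: the only subtlety is bookkeeping the signs and making sure the $\Omega(\pi^1_s)$ terms cancel as well, which they do because $\Omega(\pi^1_s)$ appears with opposite signs in $D_\Omega(\pi^1_s,\pi^0_s)$ and in $-D_\Omega(\pi^2_s,\pi^1_s)$.
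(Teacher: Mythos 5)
Your proposal is correct and matches the paper's proof exactly: the paper also obtains this identity by applying the Four-Point Identity Lemma with $\pi^3_s = \pi^2_s$ and $\pi^2_s = \pi^1_s$, with the vanishing of $D_\Omega(\pi^1_s,\pi^1_s)$ left implicit. Your direct expansion from the definition is a fine sanity check but adds nothing beyond the paper's one-line argument.
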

\begin{proof}
Apply Lemma~\ref{Four-Point Identity Lemma of Bregman divergences} with ${\pi}_3={\pi}_{2}$, ${\pi}_2={\pi}_{1}$.
\end{proof}

\begin{corollary}{\textbf{(Three-Point Descent Corollary for PMD)}}\label{Three-Point Descent Corollary for PMD} Consider the policies produced by the iterative updates of exact \PMD in Eq.\ref{eq:exact PMD}. For any policy $\pi$,  timestep $t\geq 0$, and state $s$
\eq{
 \langle \eta_{\pi_{t+1}} Q_t(s), \pi(s)-\pi_{t+1}(s) \rangle \leq \langle \nabla h(\pi_{t+1}(s))-\nabla h(\pi_{t}(s)), \pi(s)-\pi_{t+1}(s) \rangle
}
\end{corollary}
\begin{proof}
From the Three-Point descent Lemma ~\ref{Three-Point Descent Lemma} (Eq.\ref{eq:exact PMD})
\eq{
-\eta_{\pi_{t+1}} \langle  Q_t(s), \pi_{t+1}(s) 
 - \pi(s)\rangle\leq
D_\Omega(\pi(s), {\pi}_{t}(s))
 -D_\Omega(\pi(s), {\pi}_{t+1}(s))  
 - D_\Omega(\pi_{t+1}(s), \pi_{t}(s))
}
apply Lemma~\ref{Three-Point Identity Lemma of Bregman divergences} on the right hand side, with $\pi_0=\pi_t,\pi_1=\pi_{t+1}, \pi_2=\pi$, which yields the claimed inequality.
\end{proof}

\begin{lemma}{{\textbf{(Extrapolation from the future: Equivalence of updates)}}}\label{appendix:Forward extrapolation}
For any state $s$ and timestep $t> 0$, the update of \pmdext, $\pi_{t+1}=\argmin_{\pi(s)\in\Delta(\A)} -\langle (\mathcal{T}_{\mu_t}{Q}_t)(s) \pi(s)\rangle  +  \eta_{\pi_{t+1}}^{-1}D_h(\pi(s), \pi_t(s))$,  can be rewritten as a correction to the update $\mu_t=\argmin_{\pi(s)\in\Delta(\A)} - \langle Q_{t}(s), \pi(s)\rangle + \eta^{\!-\!1}_{\mu_t} D_h(\pi(s), \pi_t(s))$
\eq{
\pi_{t+1}(s) &\doteq \argmin_{\pi(s)\in\Delta(\A)} -\langle (\mathcal{T}_{\mu_t}{Q}_t)(s) \!-\!\eta_{\pi_{t+1}}^{-1}{{\eta}_{\mu_t}}{{Q}_t(s)},\pi(s)\rangle + \eta_{\pi_{t+1}}^{-1}D_h(\pi(s),\mu_t(s))
}
\end{lemma}
\begin{proof}
For any timestep $t> 0$, and state $s$, from the definition of \pmdext
\eq{
\mu_{t}(s) &=  \argmin_{\pi(s)\in\Delta(\A)} - \langle Q_{t}(s), \pi(s)\rangle + \eta^{\!-\!1}_{\mu_t} D_h(\pi(s), \pi_t(s))
}
Applying Corollary~\ref{Three-Point Descent Corollary for PMD} for $\pi_{t+1} = \mu_t$, and for any policy $\pi$, we have
\eqq{
 \langle \eta_{\mu_{t}} Q_t(s) -\nabla h(\mu_t)(s)), \pi(s)-\mu_t(s) \rangle \leq -\langle \nabla h(\pi_{t}(s)), \pi(s)-\mu_t(s) \rangle\label{eq:to plug in}
}
Plugging Eq.\ref{eq:to plug in} in the proposed update, we recover the formulation of \pmdext
\eq{
\pi_{t+1}(s) &\doteq \argmin_{\pi(s)\in\Delta(\A)} -\langle (\mathcal{T}_{\mu_t}{Q}_t)(s) \!-\!\eta_{\pi_{t+1}}^{-1}{{\eta}_{\mu_t}}{{Q}_t(s)},\pi(s)\rangle + \eta_{\pi_{t+1}}^{-1}D_h(\pi(s),\mu_t(s))
\\
&=\argmin_{\pi(s)\in\Delta(\A)} -\langle (\mathcal{T}_{\mu_t}{Q}_t)(s) \!-\!\eta_{\pi_{t+1}}^{-1}{{\eta}_{\mu_t}}{{Q}_t(s)},\pi(s)\rangle
\\
&\qquad\qquad\qquad\qquad + \eta_{\pi_{t+1}}^{-1}( h(\pi(s)) - h(\mu_t(s)) - \langle\nabla h(\mu_t(s)), \pi(s) - \mu_t(s)\rangle)
\\
&=\argmin_{\pi(s)\in\Delta(\A)} -\langle (\mathcal{T}_{\mu_t}{Q}_t)(s), \pi(s)\rangle  + \eta_{\pi_{t+1}}^{-1}( h(\pi(s)) - h(\mu_t(s)))
\\
&\qquad\qquad\qquad\qquad + \eta_{\pi_{t+1}}^{-1}{{\eta}_{\mu_t}}\langle {{Q}_t(s)},\pi(s)\rangle  - \langle\nabla h(\mu_t(s)), \pi(s) - \mu_t(s)\rangle)
\\
&\text{(adding the constant term $-\eta_{\pi_{t+1}}^{-1}{{\eta}_{\mu_t}}\langle {{Q}_t(s)}, \mu_t(s)\rangle$)}
\\
&=\argmin_{\pi(s)\in\Delta(\A)} -\langle (\mathcal{T}_{\mu_t}{Q}_t)(s), \pi(s)\rangle  \eta_{\pi_{t+1}}^{-1}( h(\pi(s)) - h(\mu_t(s)))
\\
&\qquad\qquad\qquad\qquad + \eta_{\pi_{t+1}}^{-1}{{\eta}_{\mu_t}}\langle {{Q}_t(s)}-\nabla h(\mu_t(s)), \pi(s) - \mu_t(s)\rangle
\\
&\text{(plugging in Eq.~\ref{eq:to plug in})}
\\
&=\argmin_{\pi(s)\in\Delta(\A)} -\langle (\mathcal{T}_{\mu_t}{Q}_t)(s), \pi(s)\rangle  +  \eta_{\pi_{t+1}}^{-1}( h(\pi(s)) - h(\mu_t(s)) - \langle\nabla h(\pi_t(s)), \pi(s)- \mu_t(s) \rangle)
\\
&\text{(swapping constant terms $h(\mu_t(s))$ with $h(\pi_t(s))$ and $\langle\nabla h(\pi_t(s)),\mu_t(s) \rangle$ with $\langle\nabla h(\pi_t(s)),\pi_t(s) \rangle$)}
\\
&=\argmin_{\pi(s)\in\Delta(\A)} -\langle (\mathcal{T}_{\mu_t}{Q}_t)(s) \pi(s)\rangle  +  \eta_{\pi_{t+1}}^{-1}( h(\pi(s)) - h(\pi_t(s)) - \langle\nabla h(\pi_t(s)), \pi(s)- \pi_t(s) \rangle)
\\
&=\argmin_{\pi(s)\in\Delta(\A)} -\langle (\mathcal{T}_{\mu_t}{Q}_t)(s) \pi(s)\rangle  +  \eta_{\pi_{t+1}}^{-1}D_h(\pi(s), \pi_t(s))
}
\end{proof}

\subsubsection{Extrapolation from the past: derivation of the lazy momentum-based PMD algorithm}\label{Extrapolation from the past: derivation of the lazy momentum-based PMD algorithm}
 
We take the following steps to arrive at the proposed algorithm. 

First, we rely on the equivalence in Lemma~\ref{appendix:Forward extrapolation} to rewrite the update to $\pi_{t+1}$ in \pmdext\ 
\eqq{
\textstyle \pi_{t+1}(s) &=  \argmin_{\pi(s)\in\Delta(\A)} \!-\! \langle (\mathcal{T}_{\mu_t}{Q}_{t})(s)  \!-\! \eta^{\!-\!1}_{\pi_{t+1}} {\eta}_{\mu_t} {Q}_t(s), \pi(s)\rangle +\eta^{\!-\!1}_{\pi_{t+1}}D_h(\pi(s), \mu_t(s)) \label{eq:bw_cor}
}
Next, to remedy the need for two evaluation procedures, we recycle evaluation updates from previous steps. Specifically, we replace $Q_{t}$ with the closest past evaluation, i.e. $\mathcal{T}_{\mu_{t-1}} Q_{t-1}$, twice, in Eq.~\ref{eq:bw_cor} (the equivalent update to $\pi_{t+1}$ from \pmdext) and in the update to $\mu_{t}$ of \pmdext.
This leads to
\eqq{
 \mu_{t}(s) &\doteq  \argmin_{\pi(s)\in\Delta(\A)}  \!-\! \langle (\mathcal{T}_{\mu_{t-1}} Q_{t-1})(s), \pi(s)\rangle + \eta^{\!-\!1}_{\pi_{t+1}} D_h(\pi(s), \pi_t(s))
 \label{eq:lazy fw_cor} 
 \\
\textstyle \pi_{t+1}(s) &=  \argmin_{\pi(s)\in\Delta(\A)} \!-\! \langle (\mathcal{T}_{\mu_t}{Q}_{t})(s)  \!-\! \eta^{\!-\!1}_{\pi_{t+1}} {\eta}_{\mu_t} (\mathcal{T}_{\mu_{t-1}} Q_{t-1})(s), \pi(s)\rangle +{\eta}^{\!-\!1}_{\mu_t
}D_h(\pi(s), \mu_t(s)) 
\label{eq:lazy bw_cor}
}
We now apply the equivalence shown in Lemma~\ref{appendix:Backward extrapolation} (which is analogous to Lemma~\ref{appendix:Forward extrapolation}, but time-reversed), yielding an update written in terms of one policy variable $\mu$ and $\mathcal{T}_{\mu_t}{Q}_{t}$
\eq{
\mu_{t+1}(s) \!=\! 
\argmin_{\pi(s)\in\Delta(\A)}
\!-\! \langle (\mathcal{T}_{\mu_t}{Q}_{t})(s)  \!+\! {\eta^{\!-\!1}_{\mu_{t+1}}}\eta_{\pi_{t+1}}((\mathcal{T}_{\mu_t}{Q}_{t})(s)  \!-\! (\mathcal{T}_{\mu_{t-1}}{Q}_{t-1})(s)), \pi(s)\rangle 
\\
\quad \!+{\eta^{\!-\!1}_{\mu_{t+1}}}D_h(\pi(s), \mu_t(s))  
}
For consistency with \pmdloo\ and \pmdext, we henceforth replace these variables ($\mu$ and $\mathcal{T}_{\mu} Q$) with $\pi$ and $Q$. In this case, $Q =Q_\pi$, and instead of $\eta^{\!-\!1}_{\mu_{t+1}}\eta_{\pi_{t+1}}$, we then use two consecutive step-sizes $\eta^{\!-\!1}_{\pi_{t+1}}\eta_{\pi_{t}}$, yielding 
\eqq{
\pi_{t+1}(s) &\!=\! 
\argmin_{\pi(s)\in\Delta(\A)}
\!-\! \langle {Q}_{t}(s)  \!+\! \eta^{\!-\!1}_{\pi_{t+1}}\eta_{\pi_{t}}({Q}_{t}(s)  \!-\! {Q}_{t-1}(s)), \pi(s)\rangle \!+ \eta^{\!-\!1}_{\pi_{t+1}} D_h(\pi(s), \pi_t(s)) 
}

\begin{lemma}{\textbf{{(Extrapolation from the past: equivalence of updates)}}}
\label{appendix:Backward extrapolation}
For any state $s$ and timestep $t> 0$, the updates 
\eqq{
 \mu_{t}(s) &\doteq  \argmin_{\pi(s)\in\Delta(\A)}  \!-\! \langle (\mathcal{T}_{\mu_{t-1}} Q_{t-1})(s), \pi(s)\rangle + {\eta^{\!-\!1}_{\mu_t}}D_h(\pi(s), \pi_t(s))
 \label{eq:lazy fw_cor2} 
 \\
\textstyle \pi_{t+1}(s) &=  \argmin_{\pi(s)\in\Delta(\A)} \!-\! \langle (\mathcal{T}_{\mu_t}{Q}_{t})(s)  \!-\! \eta^{\!-\!1}_{\pi_{t+1}} {\eta}_{\mu_t} (\mathcal{T}_{\mu_{t-1}} Q_{t-1})(s), \pi(s)\rangle +\eta^{\!-\!1}_{\pi_{t+1}}D_h(\pi(s), \mu_t(s)) 
\label{eq:lazy bw_cor2}
}
can be rewritten as:
\eq{
\mu_{t+1}(s) &=  \argmin_{\pi(s)\in\Delta(\A)}  \!-\! \langle (\mathcal{T}_{\mu_t}{Q}_{t})(s) + {\eta^{\!-\!1}_{\mu_{t+1}}}\eta_{\pi_{t+1}}\delta_{t}(s), \pi(s)\rangle + {\eta^{\!-\!1}_{\mu_{t+1}}}D_h(\pi(s), h(\mu_{t}(s))) 
}
\end{lemma}
\begin{proof}
First, we increment the index $t$ in equation Eq.~\ref{eq:lazy fw_cor2}
\eqq{
\mu_{t+1}(s) &\doteq  \argmin_{\pi(s)\in\Delta(\A)}  \!-\! \langle (\mathcal{T}_{\mu_t}{Q}_{t})(s), \pi(s)\rangle + {\eta^{\!-\!1}_{\mu_{t+1}}}D_h(\pi(s), \pi_{t+1}(s))
 \label{eq:lazy fw_cor3} 
}
Then, we apply Corollary~\ref{Three-Point Descent Corollary for PMD} to $\pi_{t+1}$, and for any policy $\pi$, with the shorthand notation $\delta_{t}(s) \doteq \mathcal{T}_{\mu_t}{Q}_{t})(s)  \!-\!\eta^{\!-\!1}_{\pi_{t+1}} {\eta}_{\mu_t}  (\mathcal{T}_{\mu_{t-1}}{Q}_{t-1})(s)$, this yields
\eqq{
 \langle \eta_{\pi_{t+1}} \delta_{t}(s) -\nabla h(\pi_{t+1})(s)), \pi(s)-\pi_{t+1}(s) \rangle \leq -\langle \nabla h(\mu_{t}(s)), \pi(s)-\pi_{t+1}(s) \rangle\label{eq:to plug in2}
}
Plugging Eq.~\ref{eq:to plug in2} in back in Eq.~\ref{eq:lazy fw_cor3}
\eq{
\mu_{t+1}(s) &=  \argmin_{\pi(s)\in\Delta(\A)}  \!-\! \langle (\mathcal{T}_{\mu_t}{Q}_{t})(s), \pi(s)\rangle 
\\
&\qquad\qquad\qquad\qquad + {\eta^{\!-\!1}_{\mu_{t+1}}}(h(\pi(s)) - h(\pi_{t+1}(s)) - \langle \nabla h(\pi_{t+1}(s)),\pi(s) -\pi_{t+1}(s)\rangle
\\
 &=  \argmin_{\pi(s)\in\Delta(\A)}  \!-\! \langle (\mathcal{T}_{\mu_t}{Q}_{t})(s) + \eta^{\!-\!1}_{\mu_{t+1}}\eta_{\pi_{t+1}}\delta_{t}(s), \pi(s)\rangle + \eta^{\!-\!1}_{\mu_{t+1}}(h(\pi(s)) - h(\pi_{t+1}(s))) 
 \\
 &\qquad\qquad\qquad\qquad +{\eta^{\!-\!1}_{\mu_{t+1}}}\eta_{\pi_{t+1}}\langle \delta_{t}(s), \pi(s)\rangle - {\eta^{\!-\!1}_{\mu_{t+1}}}\langle \nabla h(\pi_{t+1}(s)),\pi(s) -\pi_{t+1}(s)\rangle
 \\
&\text{(adding the constant term $-\eta^{\!-\!1}_{\mu_{t+1}}\eta_{\pi_{t+1}}\langle \delta_{t}(s), \pi_{t+1}(s)\rangle$)}
 \\
 &=  \argmin_{\pi(s)\in\Delta(\A)}  \!-\! \langle (\mathcal{T}_{\mu_t}{Q}_{t})(s) + {\eta^{\!-\!1}_{\mu_{t+1}}}\eta_{\pi_{t+1}}\delta_{t}(s), \pi(s)\rangle + {\eta^{\!-\!1}_{\mu_{t+1}}}(h(\pi(s)) - h(\pi_{t+1}(s))) 
 \\
 &\qquad\qquad\qquad\qquad +{\eta^{\!-\!1}_{\mu_{t+1}}}\eta_{\pi_{t+1}}\langle \delta_{t}(s) - {\eta^{\!-\!1}_{\mu_{t+1}}} \nabla h(\pi_{t+1}(s)),\pi(s) -\pi_{t+1}(s)\rangle
  \\
&\text{(applying Eq.\ref{eq:to plug in2})}
 \\
 &=  \argmin_{\pi(s)\in\Delta(\A)}  \!-\! \langle (\mathcal{T}_{\mu_t}{Q}_{t})(s) + {\eta^{\!-\!1}_{\mu_{t+1}}}\eta_{\pi_{t+1}}\delta_{t}(s), \pi(s)\rangle + {\eta^{\!-\!1}_{\mu_{t+1}}}(h(\pi(s)) - h(\pi_{t+1}(s))) 
 \\
 &\qquad\qquad\qquad\qquad - {\eta^{\!-\!1}_{\mu_{t+1}}}\langle \nabla h(\mu_{t}(s)),\pi(s) -\pi_{t+1}(s)\rangle
   \\
&\text{(swapping constant terms)}
 \\
 &=  \argmin_{\pi(s)\in\Delta(\A)}  \!-\! \langle (\mathcal{T}_{\mu_t}{Q}_{t})(s) + {\eta^{\!-\!1}_{\mu_{t+1}}}\eta_{\pi_{t+1}}\delta_{t}(s), \pi(s)\rangle + {\eta^{\!-\!1}_{\mu_{t+1}}}(h(\pi(s)) - h(\mu_{t}(s))) 
 \\
 &\qquad\qquad\qquad\qquad - {\eta^{\!-\!1}_{\mu_{t+1}}}\langle \nabla h(\mu_{t}(s)),\pi(s) -\mu_{t}(s)\rangle
 \\
 &=  \argmin_{\pi(s)\in\Delta(\A)}  \!-\! \langle (\mathcal{T}_{\mu_t}{Q}_{t})(s) + {\eta^{\!-\!1}_{\mu_{t+1}}}\eta_{\pi_{t+1}}\delta_{t}(s), \pi(s)\rangle + {\eta^{\!-\!1}_{\mu_{t+1}}}D_h(\pi(s), h(\mu_{t}(s)))
}

\end{proof}

\begin{lemma}\label{lemma:descent property of PMD+}\textbf{(Descent Property of exact \pmdloo)} Consider the policies produced by the iterative updates of \pmdloo
\eq{
\pi_{t+1}(s) = \argmin_{\pi(s) \in \Delta(\mathcal{A})} \langle \mathcal{T}_{\mu_t}{Q}_t(s), \pi(s)\rangle
+{\eta^{\!-\!1} _{\pi_{t+1}}}D_h(\pi(s), \pi_t(s))
}
where $(\mathcal{T}_{\mu_t}{Q}_t)(s) \doteq \E[r(s) + \gamma \langle {Q}_{t}({s^\prime}), \mu_t({s^\prime})\rangle]$ is the lookahead, $\mu_t(s)$ is an intermediary policy 
greedy with respect to $Q_{t}({s})$,  and ${\eta_{\pi_{t+1}}}\geq 0$ is a step-size. Then, for any timestep $t\geq 0$
\eqq{
\langle (\mathcal{T}_{\mu_t}{Q}_t)(s), \pi_{t+1}(s) - \mu_t(s)\rangle \geq 0, \forall s\in \S \label{eq:descent_propery1}
\\
\langle Q_{t+1}(s) - (\mathcal{T}_{\mu_t}{Q}_t)(s),\pi_{t+1}(s) \rangle \geq 0, \forall s\in \S \label{eq:descent_propery2}
}
\end{lemma}
\begin{proof}
Consider first the descent property of $\mu_t$
\eqq{
\langle {Q}_{t}({s})-(\mathcal{T}_{\mu_t}{Q}_t)({s}),\mu_t({s})\rangle &= \sum_{a\in\A} (Q_{{t}}({s,a}) -(\mathcal{T}_{\mu_t}{Q}_t)({s,a}))\mu_t({a|s)}\nonumber
\\
&= \gamma  \sum_{s^\prime\in\S}  \sum_{a\in\A}  P({s^{\prime}|s, a}) \mu_t({a|s}) \left[\langle  Q_{t}({s^\prime}),\pi_{t}({s^\prime})\rangle- \langle {Q}_{t}({s^\prime}),\mu_t({s^\prime})\rangle\right]
 \leq 0 \label{eq:interm}
}
where the last inequality follows from the definition of $\mu_t(s)$ as greedy with respect to $Q_{t}(s)$, which implies $\langle {Q}_{t}({s^\prime}),\mu_t({s^\prime})\rangle \geq \langle {Q}_{t}({s^\prime}),\pi_{t}({s^\prime})\rangle$.

Then, for the descent property of $\pi_{t+1}(s)$, we have
\eqq{
\langle &Q_{t+1}(s) - (\mathcal{T}_{\mu_t} Q_t)(s), \pi_{t+1}(s)\rangle 
\\
&= \sum_{a\in\A} (Q_{t+1}({s,a}) -(\mathcal{T}_{\mu_t} Q_t)({s,a}))\pi_{t+1}({a|s})\nonumber
\\
 &= \gamma  \sum_{s^\prime\in\S}  \sum_{a\in\A}  P({s^{\prime}|s, a}) \pi_{t+1}({a|s}) \left[\langle  Q_{t+1}(s^\prime),\pi_{t+1}({s^\prime})\rangle- \langle {Q}_{t}({s^\prime}),\mu_t({s^\prime})\rangle\right]\nonumber
 \\
 &= \gamma  \sum_{s^\prime\in\S}  \sum_{a\in\A}  P({s^{\prime}|s, a}) \pi_{t+1}({a|s}) \left[\langle  Q_{t+1}(s^\prime),\pi_{t+1}({s^\prime})\rangle- \langle (\mathcal{T}_{\mu_t} Q_t)({s^\prime}),\mu_t({s^\prime})\rangle 
 - \langle {Q}_{t}({s^\prime})-(\mathcal{T}_{\mu_t} Q_t)({s^\prime}),\mu_t({s^\prime})\rangle\right]\nonumber
\\
&\overset{(Eq.~\ref{eq:interm})}{\geq} 
\gamma  \sum_{s^\prime\in\S}  \sum_{a\in\A}  P({s^{\prime}|s, a}) \pi_{t+1}({a|s}) \left[\langle  Q_{t+1}(s^\prime),\pi_{t+1}({s^\prime})\rangle- \langle (\mathcal{T}_{\mu_t} Q_t)({s^\prime}),\mu_t({s^\prime})\rangle\right]\nonumber
\\
  &= \gamma  \sum_{s^{\prime}} \sum_a  P({s^{\prime}|s, a}) \pi_{t+1}({a|s}) \left[\langle (\mathcal{T}_{\mu_t} Q_t)({s^\prime}), \pi_{t+1}({s^\prime})-\mu_t({s^\prime})\rangle+\langle Q_{t+1}(s^\prime)-(\mathcal{T}_{\mu_t} Q_t)({s^\prime}),\pi_{t+1}({s^\prime})\rangle\right]\nonumber
}
Recursing, yields
\eqq{
\langle Q_{t+1}(s) - (\mathcal{T}_{\mu_t} Q_t)(s), \pi_{t+1}(s)\rangle 
&=
\nicefrac{\gamma}{1-\gamma} \sum_{s^{\prime}} d_{t+1} ({s^\prime\to s^{\prime}})\langle (\mathcal{T}_{\mu_t} Q_t)({s^{\prime}}), \pi_{t+1}({s^{\prime}})-\mu_t({s^{\prime}})\rangle   \label{last:pdl}
}
From Lemma~\ref{Three-Point Descent Lemma} with 
$\pi=\mu_t$ and $\eta_{\pi_{t+1}}\geq 0$
\eqq{
\langle (\mathcal{T}_{\mu_t} Q_t)_{s}, \pi_{t+1}(s)-\mu_t({s})\rangle \geq \eta^{\!-\!1}_{\pi_{t+1}} \left(D_\Omega(\mu_t(s), \pi_{t+1}(s))+D_\Omega(\pi_{t+1}(s), \mu_t(s))\right) \geq 0 \label{eq:geq0}
}
This proves the first claim in Eq.~\ref{eq:descent_propery1}.
Plugging Eq.~\ref{eq:geq0} back in Eq.~\ref{last:pdl} yields the second claim in Eq.~\ref{eq:descent_propery2}, $\langle Q_{t+1}(s) - (\mathcal{T}_{\mu_t} Q_t)(s), \pi_{t+1}(s)\rangle 
\geq0$
\end{proof}

\begin{theorem} {\textbf{(Functional acceleration with \pmdloo)}}
\label{append:Functional acceleration with PMDloo}
Consider the policies produced by the iterative updates of \pmdloo
\eqq{
\pi_{t+1}(s) &=  \argmin_{\pi(s)\in\Delta(\A)} - \langle (\mathcal{T}_{\mu_t} Q_t)(s), \pi(s)\rangle + \eta^{\!-\!1}_{\pi_{t+1}} D_h(\pi(s), \pi_t(s))
}
where  $(\mathcal{T}_{\mu_t} Q_t)(s) \doteq \E[r(s) + \gamma \langle{Q}_t({s^\prime}), \mu_t({s^\prime})\rangle]$, $\mu_t(s)$ is greedy with respect to $Q_{t}({s})$,  $\eta_{\pi_{t+1}}\geq 0$ are adaptive step sizes, such that $\forall \eps_{\pi_{t+1}}$ arbitrarily small, ${\eta^{\!-\!1}_{\pi_{t+1}}}D_{h}(\operatorname{greedy}((\mathcal{T}_{\mu_t} Q_t)(s)),\pi_t(s))\leq {\eps_{\pi_{t+1}}}$.
Then,
\eqq{
 V^{*}(s) -V_{t+1}(s)&\leq  \gamma^2 \| V^{*}  - V_t\|_{\infty} + \eps_{\pi_{t+1}} \label{eq:mini_claim}
}
and recursing yields
\eqq{
\textstyle\|V^{*} -V_t\|_{\infty} &\leq (\gamma^{2})^t (\| V^{*}  - V_0\|_{\infty} + \sum_{i\leq t} \nicefrac{\eps_{\pi_{i+1}}}{(\gamma^2)^i} )\label{eq:recursive_claim}
}
\end{theorem}
\begin{proof}
If $\pi_{t+1}$ is the result of a PMD update which uses $(\mathcal{T}_{\mu_t} Q_t)(s)$, and step-sizes  $\eta_{\pi_{t+1}}$, then applying Lemma~\ref{Three-Point Descent Lemma} for $\pi(s) = \bar{\pi}_{t+1}(s)$ greedy with respect to $(\mathcal{T}_{\mu_t} Q_t)(s)$
\eqq{
-\langle (\mathcal{T}_{\mu_t} Q_t)(s) , \pi_{t+1}(s) - \bar{\pi}_{t+1}(s)\rangle
&\leq \eta^{\!-\!1}_{\pi_{t+1}}\big(D_{h}(\bar{\pi}_{t+1}(s),\pi_t(s)) - D_{h}(\pi_{t+1}(s),\pi_t(s)) \nonumber
\\
&\qquad - D_{h}(\pi_{t+1}(s),\bar{\pi}_{t+1}(s)) \big)\nonumber
\\
&\leq \eta^{\!-\!1}_{\pi_{t+1}}D_{h}(\bar{\pi}_{t+1}(s),\pi_t(s)) \label{eq:final_ineq}
}
Further, the suboptimality is
\eqq{
V^{*}(s) -V_{t+1}(s) &= -\langle  (\mathcal{T}_{\mu_t} Q_t)(s), {\pi}_{t+1}(s) - {\pi}^*(s)\rangle -  \langle Q_{t+1}(s) - (\mathcal{T}_{\mu_t} Q_t)(s), \pi_{t+1}(s)\rangle\nonumber
\\
&\qquad
+\langle Q^{*}(s) - (\mathcal{T}_{\mu_t} Q_t)(s), \pi^*(s)\rangle \label{eq:final_eq}
}
Since $\langle  (\mathcal{T}_{\mu_t} Q_t)(s),{{\pi}^*(s)}\rangle \leq \langle  (\mathcal{T}_{\mu_t} Q_t)(s),{\bar{\pi}_{t+1}(s)}\rangle$ if ${\bar{\pi}_{t+1}(s)}$ is greedy with respect to $(\mathcal{T}_{\mu_t} Q_t)(s)$, then plugging Eq~\ref{eq:final_ineq} in Eq~\ref{eq:final_eq}
\eqq{
V^{*}(s) -V_{t+1}(s)&\leq  \nicefrac{1}{{\eta}^t}D_{h}(\bar{\pi}_{t+1}(s),\pi_t(s))  -  \langle Q_{t+1}(s) - (\mathcal{T}_{\mu_t} Q_t)(s), \pi_{t+1}(s)\rangle\nonumber
\\
&\qquad
+\langle Q^{*}(s) - (\mathcal{T}_{\mu_t} Q_t)(s), \pi^*(s)\rangle \label{eq:final_last_eq}
}
Next, cf. Lemma~\ref{lemma:descent property of PMD+}, 
$\langle Q_{t+1}(s) - (\mathcal{T}_{\mu_t} Q_t)(s), \pi_{t+1}(s) \rangle\geq 0$.
Plugging this back into Eq~\ref{eq:final_eq}
\eqq{
V^{*}(s) -V_{t+1}(s)&\leq  \eta^{\!-\!1}_{\pi_{t+1}}D_{h}(\bar{\pi}_{t+1}(s),\pi_t(s)) 
+\langle Q^{*}(s) - (\mathcal{T}_{\mu_t} Q_t)(s), \pi^*(s)\rangle \nonumber
\\
&\leq  \langle Q^{*}(s) - (\mathcal{T}_{\mu_t} Q_t)(s), \pi^*(s)\rangle + \eps_{\pi_{t+1}} \label{eq:to_recurs}
}
where the last step follows from step-size adaptation condition ${\eta^{\!-\!1}_{\pi_{t+1}}}D_{h}(\bar{\pi}_{t+1}(s),\pi_t(s))\leq {\eps_{\pi_{t+1}}}$.

Decomposing the remaining term
\eq{
\langle Q^{*}(s) - (\mathcal{T}_{\mu_t} Q_t)(s), {\pi}^{*}(s)\rangle
&= \sum_{a\in\A} (Q^*({s,a}) -(\mathcal{T}_{\mu_t} Q_t)({s,a}))\pi^{*}({a|s})\nonumber
\\
 &= \gamma  \sum_{s^\prime\in\S}  \sum_{a\in\A}  P({s^{\prime}|s, a}) \pi^{*}({a|s}) \left[\langle  Q^{*}({s^\prime}),\pi^{*}({s^\prime})\rangle- \langle {Q}_{t}({s^\prime}),\mu_t({s^\prime})\rangle\right]\nonumber
 \\
  &= \gamma  \sum_{s^\prime\in\S}  \sum_{a\in\A}  P({s^{\prime}|s, a}) \pi^{*}({a|s}) \left[\langle  Q^{*}({s^\prime})-  {Q}_{t}({s^\prime}),{\pi}^*({s^\prime})\rangle- \langle {Q}_{t}({s^\prime}),\mu_t({s^\prime})-{\pi}^*({s^\prime})\rangle\right]\nonumber
 \\
  &\leq \gamma  \sum_{s^\prime\in\S}  \sum_{a\in\A}  P({s^{\prime}|s, a}) \pi^{*}({a|s}) \left[\langle  Q^{*}({s^\prime})-  {Q}_{t}({s^\prime}),{\pi}^*({s^\prime})\rangle\right]
  \\
  &= \gamma^2  \sum_{s^\prime\in\S}  \sum_{a\in\A} \pi^{*}({a|s}) P({s^{\prime}|s, a}) \sum_{s^{\prime\prime}\in\S} \sum_{a^\prime\in\A} {\pi}^*({s^\prime,a^\prime})P({s^{\prime\prime} | s^\prime, a^\prime})\left[V^{*}({s^{\prime\prime}})-  {V}_{t}({s^{\prime\prime}})\right]
}
where the inequality follows due to $\mu_t(s)$ being greedy with respect to $Q_{t}(s)$, $\forall s\in\S$ by definition.

Taking the max norm and applying the triangle inequality and the contraction property
\eq{
\|\langle Q^{*} - (\mathcal{T}_{\mu_t} Q_t), \pi^*\rangle \|_{\infty}
&\leq  \gamma^2 \| V^{*}  - V_t\|_{\infty} 
}
and then plugging this back in Eq.~\ref{eq:to_recurs} 
\eq{
V^{*}(s) -V_{t+1}(s)
&\leq  \|\langle Q^{*}(s) - (\mathcal{T}_{\mu_t} Q_t), \pi^*\rangle \|_{\infty} + \eps_{\pi_{t+1}} 
\\
&\leq  \gamma^2 \| V^{*}  - V_t\|_{\infty}  + \eps_{\pi_{t+1}} 
}
which is the first claim in Eq.\ref{eq:mini_claim}. Then recursing yields the second claim in Eq.\ref{eq:recursive_claim}.
\end{proof}

\begin{theorem}{\textbf{(Functional acceleration with \pmdext)}}
\label{append:Functional acceleration with PMDext}
Consider the policies produced by the iterative updates of \pmdext
\eqq{
\mu_t({s}) &=  \argmin_{\pi(s)\in\Delta(\A)} - \langle {Q}_{t}(s), \pi(s)\rangle + {\eta^{\!-\!1}_{\mu_t}}D_h(\pi(s), \pi_t(s))
\\
\pi_{t+1}(s) &=  \argmin_{\pi(s)\in\Delta(\A)} - \langle (\mathcal{T}_{\mu_t} Q_t)(s), \pi(s)\rangle + \eta^{\!-\!1}_{\pi_{t+1}} D_h(\pi(s), \pi_t(s)) 
}
where $(\mathcal{T}_{\mu_t} Q_t)(s) \doteq \E[r(s) + \gamma \langle{Q}_t({s^\prime}), \mu_t({s^\prime})\rangle]$ is the lookahead, $\eta_{\pi_{t+1}},\eta_{\mu_t}\geq 0$ are adaptive step sizes, such that $\forall \eps_{\pi_{t+1}}, \eps_{\mu_t}$ arbitrarily small, $\eta^{\!-\!1}_{\pi_{t+1}}D_{h}(\operatorname{greedy}((\mathcal{T}_{\mu_t} Q_t)(s)), \pi_t(s)) \leq \eps_{\pi_{t+1}}$ and ${\eta}^{\!-\!1}_{\mu_t} D_{h}(\operatorname{greedy}(Q_{t}), \pi_t(s)) \leq \eps_{\mu_t}$. Then
\eqq{
 V^{*}(s) -V_{t+1}(s)&\leq  \gamma^2 \| V^{*}  - V_t\|_{\infty} +\gamma\eps_{\mu_t}  +{\eps}_t\label{eq:mini_claim2}
}
and recursing yields
\eqq{
\textstyle\|V^{*}(s) -V_t(s)\|_{\infty} &\leq (\gamma^{2})^{t}( \| V^{*}  - V_0\|_{\infty} + \sum_{i\leq t} \nicefrac{(\eps_{\pi_{i+1}} + \gamma\eps_{\mu_i})}{(\gamma^2)^i} )\label{eq:recursive_claim2}
}
\end{theorem}
\begin{proof}
If $\pi_{t+1}$ is the result of a PMD update with $(\mathcal{T}_{\mu_t} Q_t)(s)$, applying Lemma~\ref{Three-Point Descent Lemma} for $\bar{\pi}_{t+1}(s)$ greedy with respect to $(\mathcal{T}_{\mu_t} Q_t)(s)$
\eqq{
-\langle (\mathcal{T}_{\mu_t} Q_t)(s) , \pi_{t+1}(s) - \bar{\pi}_{t+1}(s)\rangle
&\leq \nicefrac{1}{
\eta_{\pi_{t+1}}}\big(D_{h}(\bar{\pi}_{t+1}(s),\pi_t(s)) - D_{h}(\pi_{t+1}(s),\pi_t(s)) \nonumber
\\
&\qquad- D_{h}(\pi_{t+1}(s),\bar{\pi}_{t+1}(s)) \big)\nonumber
\\
&\leq \eta^{\!-\!1}_{\pi_{t+1}}D_{h}(\bar{\pi}_{t+1}(s),\pi_t(s)) \label{eq:final_ineq3}
}
Further, the suboptimality is
\eqq{
V^{*}(s) -V_{t+1}(s) &= -\langle  (\mathcal{T}_{\mu_t} Q_t)(s),  {\pi}_{t+1}(s) - {\pi}^*(s)\rangle -  \langle Q_{t+1}(s) - (\mathcal{T}_{\mu_t} Q_t)(s), \pi_{t+1}(s)\rangle\nonumber
\\
&\qquad
+\langle Q^{*}(s) - (\mathcal{T}_{\mu_t} Q_t)(s), \pi^*(s)\rangle \label{eq:final_eq2}
}
Since $\langle  (\mathcal{T}_{\mu_t} Q_t)(s),{{\pi}^*(s)}\rangle \leq \langle  (\mathcal{T}_{\mu_t} Q_t)(s),{\bar{\pi}_{t+1}(s)}\rangle$ if ${\bar{\pi}_{t+1}(s)}$ is greedy with respect to $(\mathcal{T}_{\mu_t} Q_t)(s)$, then plugging Eq~\ref{eq:final_ineq3} in Eq~\ref{eq:final_eq2} we have
\eqq{
V^{*}(s) -V_{t+1}(s)&\leq \eta^{\!-\!1}_{\pi_{t+1}}D_{h}(\bar{\pi}_{t+1}(s),\pi_t(s))  -  \langle Q_{t+1}(s) - (\mathcal{T}_{\mu_t} Q_t)(s), \pi_{t+1}(s)\rangle
\nonumber \\
&\qquad +\langle Q^{*}(s)- (\mathcal{T}_{\mu_t} Q_t)(s), \pi^*(s)\rangle \label{eq:final_last_eq2}
}
Next, cf. Lemma~\ref{lemma:descent property of PMD+}, 
$\langle Q_{t+1}(s) - (\mathcal{T}_{\mu_t} Q_t)(s), \pi_{t+1}(s) \rangle\geq 0$.
Plugging back into Eq~\ref{eq:final_eq2}, we obtain
\eqq{
V^{*}(s) -V_{t+1}(s)&\leq \eta^{\!-\!1}_{\pi_{t+1}}D_{h}(\bar{\pi}_{t+1}(s),\pi_t(s)) 
+\langle Q^{*}(s) - (\mathcal{T}_{\mu_t} Q_t)(s), \pi^*(s)\rangle \nonumber
\\
&\leq  \langle Q^{*}(s) - (\mathcal{T}_{\mu_t} Q_t)(s), \pi^*(s)\rangle + \eps_{\pi_{t+1}} \label{eq:to_recurs2}
}
where the last step follows from step-size adaptation condition.

Applying Lemma~\ref{Three-Point Descent Lemma} for $\tilde{\pi}_{t+1}(s)$ greedy with respect to ${Q}_{t}$, 
\eqq{
-\langle{Q}_{t}(s), \mu_t(s)-{\pi}^{*}(s)\rangle\leq-\langle {Q}_{t}(s) , \mu_t(s) - \tilde{\pi}_{t+1}(s)\rangle
&\leq \eta^{\!-\!1}_{\mu_{t}}\big(D_{h}(\tilde{\pi}_{t+1}(s),\pi_t(s)) \nonumber
\\
&\qquad- D_{h}(\pi_{t+1}(s),\pi_t(s)) \nonumber
\\
&\qquad- D_{h}(\pi_{t+1}(s),\tilde{\pi}_{t+1}(s)) \big)\nonumber
\\
&\leq \eta^{\!-\!1}_{\mu_{t}}D_{h}(\tilde{\pi}_{t+1}(s),\pi_t(s)) \label{eq:final_ineq2}
}
Further,
\eq{
\langle Q^{*}(s) - (\mathcal{T}_{\mu_t} Q_t)(s), {\pi}^{*}(s)\rangle
&= \sum_{a\in\A} (Q^{*}({s,a}) -(\mathcal{T}_{\mu_t} Q_t)({s,a}))\pi^{*}({a|s})\nonumber
\\
 &= \gamma  \sum_{s^\prime\in\S}  \sum_{a\in\A}  P({s^{\prime}|s, a}) \pi^{*}({a|s}) \left[\langle  Q^{*}({s^\prime}),\pi^{*}({s^\prime})\rangle- \langle {Q}_{t}({s^\prime}),\mu_t({s^\prime})\rangle\right]\nonumber
 \\
  &= \gamma  \sum_{s^\prime\in\S}  \sum_{a\in\A}  P({s^{\prime}|s, a}) \pi^{*}({a|s}) \left[\langle  Q^{*}({s^\prime})-  {Q}_{t}({s^\prime}),{\pi}^*({s^\prime})\rangle- \langle {Q}_{t}({s^\prime}),\mu_t({s^\prime})-{\pi}^*({s^\prime})\rangle\right]\nonumber
 \\
  &\overset{Eq.~\ref{eq:final_ineq2}}{\leq} \gamma  \sum_{s^\prime\in\S}  \sum_{a\in\A}  P({s^{\prime}|s, a}) \pi^{*}({a|s}) \left[\langle  Q^{*}({s^\prime})-  {Q}_{t}({s^\prime}),{\pi}^*({s^\prime})\rangle + \eta^{\!-\!1}_{\mu_{t}}D_{h}(\tilde{\pi}_{t+1}(s),\pi_t(s))\right]
  \\
   &\overset{\text{cf. premise}}{\leq} \gamma  \sum_{s^\prime\in\S}  \sum_{a\in\A}  P({s^{\prime}|s, a}) \pi^{*}({a|s}) \left[\langle  Q^{*}({s^\prime})-  {Q}_{t}({s^\prime}),{\pi}^*({s^\prime})\rangle + {\eps}_{\mu_t} \right]
  \\
  &= \gamma^2  \sum_{s^\prime\in\S}  \sum_{a\in\A} \pi^{*}({a|s}) P({s^{\prime}|s, a}) \left[\sum_{s^{\prime\prime}\in\S} \sum_{a^\prime\in\A} {\pi}^*({s^\prime,a^\prime})P({s^{\prime\prime} | s^\prime, a^\prime})\left[  V^{*}({s^{\prime\prime}})-  {V}_{t}({s^{\prime\prime}})\right]+{\eps}_{\mu_t}\right]
}
Taking the max norm, using the triangle inequality and contraction property
\eq{
\|\langle Q^{*} - (\mathcal{T}_{\mu_t} Q_t), \pi^*\rangle \|_{\infty} &\leq  \gamma^2\| V^{*} - {V}_{t}\ \|_{\infty} +\gamma\eps_{\mu_t}
}
Plugging back in Eq.~\ref{eq:to_recurs2}
\eq{
V^{*}(s) -V_{t+1}(s)&\leq \langle Q^{*}(s) - (\mathcal{T}_{\mu_t} Q_t)(s), \pi^*(s)\rangle + \eps_{\pi_{t+1}}
\\
&\leq  \|\langle Q^{*} - (\mathcal{T}_{\mu_t} Q_t), \pi^*\rangle \|_{\infty} + \eps_{\pi_{t+1}} 
\\
&\leq  \gamma^2 \| V^{*}  - V_t\|_{\infty}  +\gamma\eps_{\mu_t}+ \eps_{\pi_{t+1}} 
}
which is the first claim in Eq.\ref{eq:mini_claim2}. Recursing yields the second claim in Eq.\ref{eq:recursive_claim2}.
\end{proof}

\clearpage
\section{Details on PMD updates for Sec.~\ref{sec:Functional Acceleration for PMD}: Functional Acceleration for PMD}\label{append:Algorithms}

\begin{algorithm}[H]
\caption{{PMD(++)}}
\label{alg:PMD++}
\begin{algorithmic}[1]
{\footnotesize
\STATE Input: ${\mu_0},{\pi_0}\in \operatorname{rint}\Pi$, adaptive $\{{{\eta}_{\pi_{t+1}}},{{\eta}_{\mu_{t}}}\}_{t\geq 0}$
  \FOR{$t = 1,2 \dots T$}
\STATE\underline{\texttt{PMD(+lookahead)}}
  \eq{
  { \mu_t(s)}&=\operatorname{argmin}_{\pi(s)\in \Delta(\A)}-\langle {Q_t(s)}, \pi(s)\rangle, \quad (\mathcal{T}_{\mu_t} Q_t)(s) \doteq \E[r(s) + \gamma \langle {Q}_{t}({s^\prime}), \mu^t({s^\prime})\rangle]
  \\
  {{\pi}_{t+1}(s)}&=\operatorname{argmin}_{\pi(s)\in \Delta(\A)}-\langle 
         {(\mathcal{T}_{\mu_t} Q_t)(s)}, \pi(s)\rangle + \eta^{\!-\!1}_{\pi_{t+1}} D_{{h}}(\pi(s), {{\pi}_t(s)})
         }
    \STATE\underline{\PMDext}
     \eq{
     { \mu_t(s)}&=\operatorname{argmin}_{\pi(s)\in \Delta(\A)}-\langle { Q_t(s)}, \pi(s)\rangle + \eta^{\!-\!1}_{\mu_t}D_{{h}}(\pi(s), {\pi_t(s)}), \quad (\mathcal{T}_{\mu_t} Q_t)(s) \doteq \E[r(s) + \gamma \langle {Q}_{t}({s^\prime}), \mu^t({s^\prime})\rangle]
         \\
         { {\pi}_{t+1}(s)}&=\operatorname{argmin}_{\pi(s)\in \Delta(\A)}-\langle {(\mathcal{T}_{\mu_t} Q_t)(s)}, \pi(s)\rangle +{\eta}^{\!-\!1}_{\pi_{t+1}} D_{{h}}(\pi(s), {{\pi}_t(s)})
         }
          \STATE\underline{\PMDmom} 
           \eq{
           {{\pi}_{t+1}(s)}&= \underset{\pi(s)\in \Delta(\A)}{\operatorname{argmin}}-\langle {{Q}_t(s)} + {\eta}^{\!-\!1}_{\pi_{t+1}}{\eta}_{\pi_{t}} ({{Q}_t(s)} - {{Q}_{t-1}(s)}), \pi(s)\rangle + {\eta}^{\!-\!1}_{\pi_{t+1}} D_{{h}}(\pi(s), {{\pi}_t(s))}
           }
 \ENDFOR
    }
\end{algorithmic}
\end{algorithm}

\section{Details on Algorithmic Implementation for Sec.~\ref{sec:Approximate Functional Acceleration for Parametric Policies}: Approximate Functional Acceleration for Parametric Policies}\label{append:Algorithmic Implementations}\label{append:Details for Approximate Functional Acceleration for Parametric Policies}
There are two ways of updating the parameter vector $\theta$ (cf. Lemma~\ref{lemma:Proximal perspective on mirror descent} \citep{Bubeck15}): (i) the PGD perspective of MD \citep{alfano2024novel, Haarnoja18, Abdolmaleki18} (see Appendix~\ref{append:generalizedGD}),
or (ii) the proximal perspective \citep{Tomar20, Vaswani2021, vaswani2023decisionaware}. The latter is used and described in Alg.~\ref{alg:PMD+}.

We execute the parameter optimization in Alg.~\ref{alg:PMD+} in expectation over the state-action space---in full-batch (computing $d^{\pi}_\rho$ exactly and in expectation for all actions) to showcase the higher-level optimization that is the spotlight of this work and remove any other collateral artifacts or confounding effects from exploration  of the state space or too early committal to a strategy \citep{Mei2021}. Practical large-scale algorithms apply mini-batches sampled from a reply buffer, with the updates somewhere between full-batch and online. In making this simplification, we inevitably leave complementary investigations on the influence of stochasticity and variance of the policy gradient for future work. 



\textbf{Policy approximation}\quad We parametrize the policy iterates using a Bregman policy class 
$\{\pi_{\theta}:\pi_{\theta}(s) = \operatorname{proj}^h_{\Delta(\A)}(\nabla h^*(f_\theta(s))), s \in \S\}$ with a tabular parametrization $\theta$. For the updates requiring two policies, we keep them parametrized separately with $\mu_\w$ and $\pi_\theta$. 
We formulate the policy optimization problem using the extension proposed by \citet{Tomar20}. Each iteration, in an ``inner-loop'' optimization procedure, we update $\theta$ and $\w$ using $k$ updates with standard GD on the composite PMD surrogate model, denoted $\ell: \Theta  \to \R$ (with $\Theta\doteq \R^{|\S|\times|\A|}$ cf. the tabular parametrization) associated with the policy represented by those parameters $\pi_\theta$, or $\mu_\w$, respectively. We execute the parameter optimization in expectation over the state-action space. Concretely, for \PMD\ we use the surrogate 
\eqq{
\ell(\theta) 
\doteq 
 \E_{s\sim d^{\rho}_t} \left[-\langle {Q}_{\pi_t}(s), \pi_{\theta}\rangle \!+\! \eta^{\!-\!1}_{\pi_{t+1}}(s)  D_{{h}}(\pi_{\theta}(s), \pi_t(s))\right]\label{eq:pmd_obj_params}
 }
  and update in an ``inner-loop'' optimization procedure
\eq{
\text{\emph{(init)}}\ &\theta^{(0)} \doteq \theta_t \qquad \text{\emph{(for $i \in [0..k]$)}}\ \theta^{(i+1)} = \theta^{(i)} - \beta \nabla_{\theta^{(i)}} \ell(\theta^{(i)})\qquad\text{\emph{(final)}}\ \theta_{t+1}  \doteq \theta^{(k)}
}
 with  $\beta$---a small learning rate.
 The optimization procedure for $\w$ is analogous. The rest of the algorithms use the surrogate objectives as described in Sec.~\ref{sec:Functional Acceleration for PMD}. 

  \paragraph{Value function}
  Everywhere in the numerical studies, we used ${Q}_{\mu_t}$ instead of $(\mathcal{T}{Q}_{\pi_t})(s) \doteq \E[r({s}) + \gamma \langle {Q}_{\pi_t}({s^\prime}), \mu_t\rangle]$.
  In the inexact setting, we estimate $\widehat{Q}_{\pi_{t}}$ and use it in place of ${Q}_{\pi_{t}}$. Similarly, $\widehat{Q}_{\mu_t}$ in place of ${Q}_{\mu_t}$.
 
 \paragraph{Step-size adaptation} We use state dependent step-sizes, and for step-size adaptation, we compute $\eta_{\mu_t}(s) = {\gamma^{\!-\!t}\eps^{\!-\!1}_0}{ D_{\Omega}(\operatorname{greedy}({Q}_{\pi_t}(s)), \pi_t(s))}$, with $\eps_0 = 10^{-4}$ a small constant according to the optimal adaptive schedule for PMD, cf. \citet{johnson2023optimal}. The value for $\eta_{\pi_{t+1}}(s)$ is chosen analogously.

\paragraph{Objectives}
We now describe in detail the objectives for each algorithm.
 
\underline{\textbf{\PMD}}---We approximate the  optimization of the objective in Eq.\ref{eq:pmd_obj_params} with respect to the parameters  ${\theta}$ of ${\pi_{\theta}}$ with ${k}$ GD  updates using a per-state step-size ${\eta_{\mu_t}(s)} \!=\! {\gamma^{\!-\!t}\eps^{\!-\!1}_0}{D_{\Omega}(\operatorname{greedy}({Q}_{\pi_t}(s)), {\pi_t(s)})}$.

\underline{\textbf{\pmdloo}}---We keep two policies ${\mu}$, ${\pi_{\theta}}$---the former is the non-parametric greedy policy 
$${\mu_{t}(s)} = \max_{a\in\A}  {Q}_{\pi_t}(s)(a)$$
The latter is parametrized and its parameters ${{\theta}}$ optimize the following lookahead-based surrogate with $k$ GD updates using step-size adaptation ${\eta_{\pi_{t+1}}(s)}\! =\! {\gamma^{\!-\!t}\eps^{\!-\!1}_0}{D_{\Omega}(\operatorname{greedy}((\mathcal{T}_{\mu_t}Q_{\pi_t})(s)), {\pi_t(s)})}$.
\eq{
\ell(\theta) 
\doteq 
 \E_{s\sim d^{\rho}_t} \left[-\langle (\mathcal{T}_{\mu_t}Q_{\pi_t})(s), \pi_{\theta}(s)\rangle \!+\! \eta^{\!-\!1}_{\pi_{t+1}}(s) D_{{h}}(\pi_{\theta}(s), \pi_t(s))\right]\label{eq:pmd_obj_params}
 }
In the results for the experimental section, we used ${Q_{\mu_t}}$ in place of $\mathcal{T}_{\mu_t}Q_{\pi_t}$.

\underline{\textbf{\pmdext}}---The update to ${\pi_{\theta}}$ is identical to {\pmdloo}. In  contrast to \pmdloo, ${\mu_{\w}}$ is  parametrized with parameter vector ${\w}$. The update to ${\mu_{\w}}$ uses, at each iteration, $k$ GD updates on the surrogate objective
\eq{
\ell(\w) 
&\doteq 
 \E_{s\sim d^{\rho}_t} \left[-\langle {Q}_{\pi_t}(s), {\mu_{\w}(s)}\rangle \!+\! \eta^{\!-\!1}_{\mu_{t}} D_{\Omega}({\mu_{\w}(s)}, {\pi_{t}(s)}) \right] 
}
The step-sizes ${\eta_{\mu_t}}$ are adapted using ${\eta_{\mu_t}(s)} = {\gamma^{\!-\!t}\eps^{\!-\!1}_0}{D_{\Omega}(\operatorname{greedy}({Q}_{\pi_t}(s)), {\pi_t(s)})}$.



\underline{\textbf{\pmdmom}}---A single set of parameters ${\theta}$ are learned by updating, at each iteration, using $k$ GD updates on the objective
\eq{
\ell(\theta) 
\doteq \E_{s\sim d^{\rho}_t} \left[ -\langle {{Q}_{\pi_t}(s)} +\eta^{\!-\!1}_{\pi_{t+1}}\eta_{\pi_{t}}[{{Q}_{\pi_t}(s)} - {{Q}_{\pi_{t-1}}(s)}], {\pi_{\theta}(s)}\rangle + \eta^{\!-\!1}_{\pi_{t+1}}(s) D_{\Omega}({{\pi}_{\theta}(s)}, {{\pi}_{t}(s)})
\right] 
}

\section{Approximate Policy Mirror Descent as Projected Gradient Descent (PGD)}\label{append:generalizedGD}
In this section we provide an alternative perspective on PMD---cf. Lemma~\ref{lemma:Proximal perspective on mirror descent}, stating that
the MD update can be rewritten in the following ways
\eq{
x_{t+1} &= \argmin_{x\in\mathcal{X}\cap \mathcal{C}}D_h(x,\nabla h^*(\nabla h(x_t) + \eta \nabla f(x_t)))&&\text{(PGD)}
\\
 &= \argmin_{x\in\mathcal{X}\cap \mathcal{C}} \eta \left\langle \nabla f(x_t), x\right\rangle + D_h(x, x_t) &&\text{(proximal perspective)}
}
\citet{alfano2024novel} introduces the concept of Bregman policy class $\{\pi_{\theta}:\pi_{\theta}(s) = \operatorname{proj}^h_{\Delta(\A)}(\nabla h^*(f_\theta(s))), s \in \S\}$, and uses a parametrized function $f_{\theta}$ to approximate the dual update of MD $f_{t+1}(s) \doteq \nabla h(\pi_t(s)) - \eta_{\pi_{t+1}} \widehat{Q}_t(s)$. To satisfy the simplex constraint, \citet{alfano2024novel} uses the L2 norm to measure function approximation errors, whereas \citet{xiong2024dualapproximationpolicyoptimization} extends this to a Bregman projection on the dual approximation mapped back to the policy space 
$\pi_\theta(s) = \operatorname{proj}^{h}_{\Delta(\A)} (\nabla h^*(f_{t+1}(s)))$, equivalent to 
$\theta_{t+1} = \argmin_{\theta \in \Theta}
D_{h}(\pi_{\theta}(s), \nabla h^*(f_{t+1}(s)))$ (cf. \citet{Amari2016}, the divergences derived from two convex functions are substantially the same, except for the order of the arguments $D_{h}(x, \nabla h^*(\nabla h(y))) = D_{h^*}(\nabla h(y), \nabla h^*(x))$).

 Using the negative Boltzmann-Shannon entropy, yields the softmax policy class
 ${\pi}_{\theta}({s,a}) \doteq \nicefrac{\exp {\color{bblue}f}_\theta({s,a})}{\|\exp {\color{bblue}f}_\theta({s})\|_1}, \forall s,a\in\S\times\A$. 
 
 For the approximate setting, in the main text, we rely on the formulation introduced by \citet{Tomar20}. Here, we  provide additional details on the formulation by \citet{alfano2024novel}  and \citet{xiong2024dualapproximationpolicyoptimization}.

\begin{algorithm}[H]
\caption{{\textbf{Approximate \pmdmom\ (PGD perspective)}}}
\label{alg:PMD_genGD}
\begin{algorithmic}[1]
{\footnotesize
\STATE Initialize policy parameter 
$\theta_0 \in \Theta$, mirror map $h$, small constant $\eps_0$, learning rate $\beta$
  \FOR{$t = 1,2 \dots T$}
\STATE Approximate ${Q}_{\pi_t}$ with $\widehat{Q}_{t}$ (critic update)
\STATE Compute adaptive step-size $\eta_{\pi_{t+1}}(s) = \nicefrac{D_{h}(\operatorname{greedy}(\widehat{Q}_t(s))}{\gamma^{2(t+1)}\eps_0}$
\STATE  Find $\pi_{t+1} \doteq \pi_{\theta_{t+1}} = \nabla h^*(f_{\theta_{t+1}})$ by (approximately) solving the surrogate problem (with $k$ GD updates)
\STATE \qquad 
$\textstyle\operatorname{min}_{\theta\in \Theta}\ell(\theta)\qquad \ell(\theta)\doteq -\E_{s\sim d^{\rho}_{t}}[ D_{h}(\pi_{\theta}(s), \nabla h^*(\nabla h(\pi_t(s)) - \eta_{\pi_{t+1}}(s) \widehat{Q}_t(s) - {\eta}_{\pi_{t}}(s) (\widehat{Q}_t(s) -\widehat{Q}_{t-1}(s)))] $
\STATE  \qquad $\text{\emph{(init)}}\ \theta^{(0)} \doteq \theta_t \qquad \text{\emph{(for $i \in [0..k\!-\!1]$)}}\ \theta^{(i+1)} = \theta^{(i)} - \beta \nabla_{\theta^{(i)}} \ell(\theta^{(i)})\qquad\text{\emph{(final)}}\ \theta_{t+1}  \doteq \theta^{(k)}$
 \ENDFOR
    }
\end{algorithmic}
\end{algorithm}
Alg.~\ref{alg:PMD_genGD} describes a PGD perspective on \pmdmom, following \citet{alfano2024novel,xiong2024dualapproximationpolicyoptimization, Haarnoja18, Abdolmaleki18}.
At each iteration, after taking a gradient step in dual space, a Bregman projection is used on the dual approximation mapped back to the policy space, to satisfy the simplex constraint. 
 
\section{Newton’s method}\label{Newton’s method}
The Newton-Kantorovich theorem generalizes Newton's method for solving nonlinear equations to infinite-dimensional Banach spaces.
It provides conditions under which Newton's method converges and gives an estimate of the convergence rate. Newton-Kantorovich theorem deals with the convergence of Newton's method for a nonlinear operator $F:\mathcal{X} \to \mathcal{Y}$, where $\mathcal{X}$ and $\mathcal{Y}$ are Banach spaces. The method iteratively solves $F(x) = 0$ using
\eq{
x_{t+1} = x_t - (\nabla F)^{-1} F(x_t)
}
where $\nabla F$ is a generalization of the Jacobian of $F$, provided $F$ is differentiable.
Intuitively, at each iteration, the method performs a linearization of $F(x) = 0$ close to $x$, using a first order Taylor expansion:
$F(x + \Delta x) \approx F(x) + \nabla F(x) \Delta x$, where $F(x) + \nabla F(x) \Delta x = 0 \iff \Delta x =-(\nabla F)^{-1}F(x)$. For $x$ close to $x^*$, $x -(\nabla F)^{-1}F(x)$ is a good approximation of $x^*$. The iterative sequence $\{x_t\}_{t\geq 0}$ converges to $x^*$, assuming the Jacobian matrix exists, is invertible, and Lipschitz continuous.

\paragraph{Quasi-Newton methods}
Any method that replaces the exact computation of the Jacobian matrices in the Newton's method (or their inverses) with an approximation, is a quasi-Newton method. A quasi-Newton method constructs a sequence of iterates $\{x_t\}_{t\geq 0}$
and a sequence
of matrices $\{J_t\}_{t\geq 0}$
such that $J_t$ is an approximation of the Jacobian $\nabla F(x_t)$
for any $t\geq 0$ and
\eq{
x_{t+1} = x_t - (J_t)^{-1} F(x_t)
}

In Anderson's acceleration \citep{Anderson65}, information about the last iterates is used to update the approximation of $J_t$.

\paragraph{Policy iteration as Newton's method}
In the context of Markov Decision Processes (MDPs), policy iteration may be interpreted as Newton's method with the following notations and analogies. First, using the Bellman  optimality operator $\mathcal{T} V(s) \doteq\max_a [r({s,a}) + \gamma \sum_{s^\prime\in\S} P({s^\prime|s,a})V({s^\prime})]$, the aim is to find $V$ such that $V = \T V$, which is akin to finding the roots $V$, such that $F(V) = V- \T V = (I-\T)(V) =  0$. We interpret $F = \nabla f$ as the gradient of an unknown function $f:\R^n \to \R^n$, despite the Bellman operator being non-differentiable in general due to the max. Where the greedy policy ${\pi}_t$ attains the max in $\T V$, we obtain $J_t = \I - \gamma \mathbf{P}_{{\pi}_{t}}$, which is invertible for $\gamma \in [0,1)$.
Expanding the Bellman operator, we have
\eq{
V_{\pi_{t+1}} = r_{\pi_{t+1}} + \gamma P_{\pi_{t+1}} V_{\pi_{t+1}} \implies V_{\pi_{t+1}} &= (J_t)^{-1} r_{\pi_{t+1}}
\\
J_t V_{\pi_{t+1}} &=r_{\pi_{t+1}}
}
The values corresponding to two successive PI steps can be related in the following way by manipulating the equations  \citep{Puterman1979, grandclément2021convex}
\eq{
V_{\pi_{t+1}} & =(J_t)^{-1}r_{\pi_{t+1}} \\
& =V_{\pi_t}-V_{\pi_t}+(J_t)^{-1}r_{\pi_{t+1}} 
\\
& =V_{\pi_t}-(J_t)^{-1}J^{t} V_{\pi_t}+(J_t)^{-1}r_{\pi_{t+1}} 
\\
& =V_{\pi_t}-(J_t)^{-1}(-r_{\pi_{t+1}}+J^{t} V_{\pi_t})
\\
& =V_{\pi_t}-(J_t)^{-1}(-r_{\pi_{t+1}}+(I-\gamma P_{\pi_{t+1}}) V_{\pi_t})
\\
& =V_{\pi_t}-(J_t)^{-1}(V_{\pi_t}-r_{\pi_{t+1}}-\gamma P_{\pi_{t+1}} V_{\pi_t})
\\
& =V_{\pi_t}-(J_t)^{-1}(V_{\pi_t}-\T V_{\pi_t})
}
In the main text we used the notation $\Psi_t \doteq (I - \gamma P_{\pi_t})^{-1} = (J_t)^{-1}$, and applied the definition $\nabla f (V_{\pi_t}) \doteq F(V_{\pi_t}) = (I - \T) (V_{\pi_t}) = V_{\pi_t}-\T V_{\pi_t}$ which yielded the expression
\eq{
V_{\pi_{t+1}} = V_{\pi_t} - \Psi \nabla f (V_{\pi_t})
}
\clearpage

\section{Experimental details for Sec.~\ref{sec:Numerical studies}: Numerical Studies}\label{Experimental details for Numerical illustrations}
\subsection{Details of two-state Markov Decision Processes}\label{append:Details of two-state/action Markov Decision Processes}
In this section we give the specifics of the two-state MDPs presented in this work. We make use of the notation
\eq{
 {P}(s_k | s_i, a_j) = \mathbf{P}[i\times|\A| + j][k], \text{ with } \mathbf{P} \in \R^{|\S||\A|\times|\S|}
 \\
 {r}(s_i, a_j) = \mathbf{r}[i\times|\A| + j], \text{ with } \mathbf{r} \in \R^{|\S||\A|}
}
In the main text, we used example (i), and in Appendix~\ref{append:Supplementary results for Policy dynamics in Value Space}, additionally examples (ii), (iii), and (iv)
\eq{
\text{(i)} \qquad
&|\A| = 2, \gamma = 0.9,
\mathbf{r} = [-0.45, -0.1, 0.5, 0.5],
\\
 &\mathbf{P} =
[[-0.45, 0.3],
[0.99, 0.01],
[0.2, 0.8],
[0.99, 0.01]]
\\
\\
\text{(ii)} \qquad
&|\A| = 2, \gamma = 0.9,
\mathbf{r} = [0.06, 0.38, -0.13, 0.64], 
\\
&\mathbf{P} =
[[0.01, 0.99], [0.92, 0.08], [0.08, 0.92], [0.70, 0.30]]
\\
\\
\text{(iii)} \qquad
&|\A| = 2, \gamma = 0.9,
\mathbf{r} = [0.88, -0.02, -0.98, 0.42], 
\\
&\mathbf{P} =
[[0.96, 0.04], [0.19, 0.81], [0.43, 0.57], [0.72, 0.28]]
\\
\\
\text{(iv)} \qquad
&|\A| = 3, \gamma = 0.8,
\mathbf{r} = [-0.1, -1., 0.1, 0.4, 1.5, 0.1], 
\\
&\mathbf{P} =
[[0.9, 0.1], [0.2, 0.8], [0.7, 0.3], [0.05, 0.95], [0.25, 0.75], [0.3, 0.7]]
}

\subsection{Details of Random Markov Decision Processes}\label{append:Details of Random Markov Decision Processes}
We consider randomly constructed finite MDPs---\textbf{Random MDP} problems
(a.k.a. Garnet MDPs: Generalized Average Reward Non-stationary Environment Test-bench) 
\citep{Archibald1995, Bhatnagar2009}, abstract, yet representative of the kind of MDP encountered in practice, which serve as a test-bench for RL algorithms \citep{goyal2021firstorder, ScherrerGeist2014, Vieillard2019}. A \textbf{Random MDP} generator $\mathcal{M} \doteq (|\S|, |\A|, b, \gamma)$ is parameterized by $4$ parameters: number of states $|\S|$, number of actions $|\A|$, branching factor $b$ specifying how many possible next states are possible for each state-action pair. 

The transition probabilities $P(s_0|s, a)$ are then computed as follows. First, $b$ states ($s_1, \dots s_{b}$) are chosen uniformly at random and transition probabilities are set by sampling uniform random $b \!-\! 1$ numbers (cut points) between $0$ and $1$ and sorted as $(p_0 = 0, p_1, \dots p_{b-1}, p_b =1)$. Then, the transition probabilities are assigned as $P(s_i| s, a) = p_i - p_{i-1}$ for each $1\leq i\leq b$. The reward is state-dependent, and for each MDP, the per-state reward $r(s)$ is uniformly sampled  between $0$ and $R_{\operatorname{max}}$, such that  $r \sim (0, R_{\operatorname{max}})^{|\S|}$. The illustrations shown use $|\S|=100$ and $R_{\operatorname{max}}=100$. Other choices yield similar results.

\subsection{Details of Experimental Setup for Sec.~\ref{sec:When is acceleration possible?}}\label{append:Details of Experimental Setup for When is acceleration possible?}
We use $\beta=0.5$---the learning rate of the parameter ``inner-loop'' optimization problem, $\pi_0: \emph{center}$ for all experiments of this section. We use vary one parameter of the problem while keeping all others fixed cf. Table~\ref{table:when?}. We use $50$ randomly generated MDPs for each configuration and compute the mean and standard deviation shown in the plots.

\begin{table}[ht]
\centering
{\footnotesize
\caption{The parameters used for the optimization in Sec.~\ref{sec:When is acceleration possible?}.}
\label{table:when?} 
\catcode`,=\active
\def,{\char`,\allowbreak}
\renewcommand\arraystretch{1.2}
\begin{tabular}{p{3.5cm}<{\raggedright} p{1.cm} p{3.cm}<{\raggedright} }
  \toprule
  Experiment &
    Alg/MDP param           & Values                   \\ 
  \midrule
    \emph{$k$ sweep}
    Fig.~\ref{fig:rmdp_learning_curves}
    \emph{(Left)} 
    & $k$                  & \{1, 5, 10, 20, 30\} \\
    & $b$                  & 5 \\
    & $\gamma$              & 0.95\\
    & $|\A|$                  & 10 \\
    & $T$ & 10 \\
    \midrule
         \emph{$b$ sweep}
         Fig.~\ref{fig:rmdp_learning_curves}
         \emph{(Center-Left)}  
         & $k$                  & 100 \\
         & $b$                  & \{5, 10, 20, 30, 40\} \\
          & $\gamma$              & 0.95\\
    & $|\A|$                  & 10 \\
     & $T$ & 10 \\
    \midrule
           \emph{$\gamma$ sweep}
           Fig.~\ref{fig:rmdp_learning_curves}
           \emph{(Center-Right)} 
             & $k$                  & 30 \\
          & $b$              & 5\\
          & $\gamma$                  & \{0.98, 0.95, 0.9, 0.85\} \\
    & $|\A|$                  & 10 \\
     & $T$ & 10 \\
    \midrule
   \emph{$|\A|$ sweep}
   Fig.~\ref{fig:rmdp_learning_curves}
   \emph{(Right)} 
    & $k$                  & 100 \\
         & $b$              & 5\\
          & $\gamma$                  & 0.95 \\
          & $|\A|$                  & \{2, 5, 10, 15\} \\
    & $T$ & 20 \\
    \midrule
  \bottomrule
\end{tabular} 
}
\end{table}


\clearpage
\clearpage
\section{Supplementary Results for Sec.~\ref{sec:Numerical studies}: Numerical Studies}\label{Supplementary Results for Numerical illustrations}

\subsection{Supplementary results for Sec.~\ref{sec:When is acceleration possible?}}\label{append:Supplementary results for When is acceleration possible?}
This section presents additional results to those in Sec.~\ref{sec:When is acceleration possible?}. Fig.~\ref{fig:rmdp_learning_curves_final} shows the final performance and is analogous to Fig.\ref{fig:rmdp_learning_curves} in the main text. Fig.~\ref{fig:rmdp_learning_curves_all} shows the optimality gap while learning for $T$ iterations.

\begin{figure}[h]
        \centering
         \hspace{-10pt} 
         \includegraphics[width=1.\textwidth]{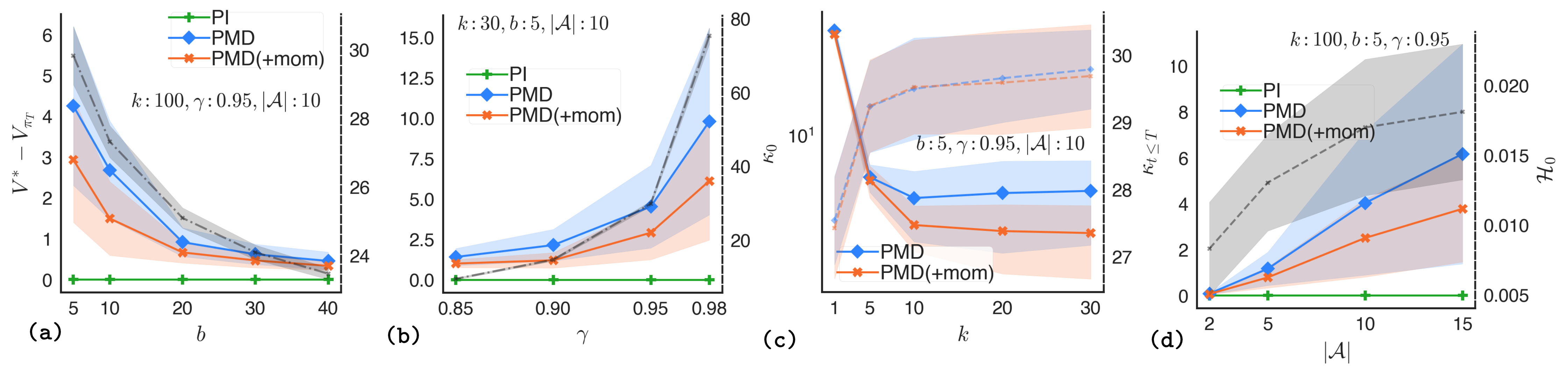}
    \caption{
    The \emph{left $y$-axis} shows the final optimality gap (regret) at timestep $T$ (cf. Table~\ref{table:when?}) of the updates: \PI, \PMD\ and \pmdmom, after $T$ iterations ($T=10$ \emph{(a-c)}, $T=20$ \emph{(d)}) relative to changing the hyperparameters: (a) $b$---the branching factor of the \emph{Random MDP}, (b) $\gamma$---the discount factor, (c) $k$---the number of parameter updates, (d) $|\mathcal{A}|$---the number of actions. Shades denote standard deviation over $50$ sampled MDPs. The \emph{right $y$-axis} and dotted curves measure: \emph{(a-b)}---the condition number $\kappa_0$, \emph{(c)} the average  condition number $\kappa_{t\leq T}$, \emph{(d)} the entropy $\mathcal{H}_0$. 
    }
    \label{fig:rmdp_learning_curves_final}
    \vspace{-10pt}
\end{figure}

\begin{figure}[h]
        \centering
         \hspace{-10pt} 
         \includegraphics[width=0.8\textwidth]{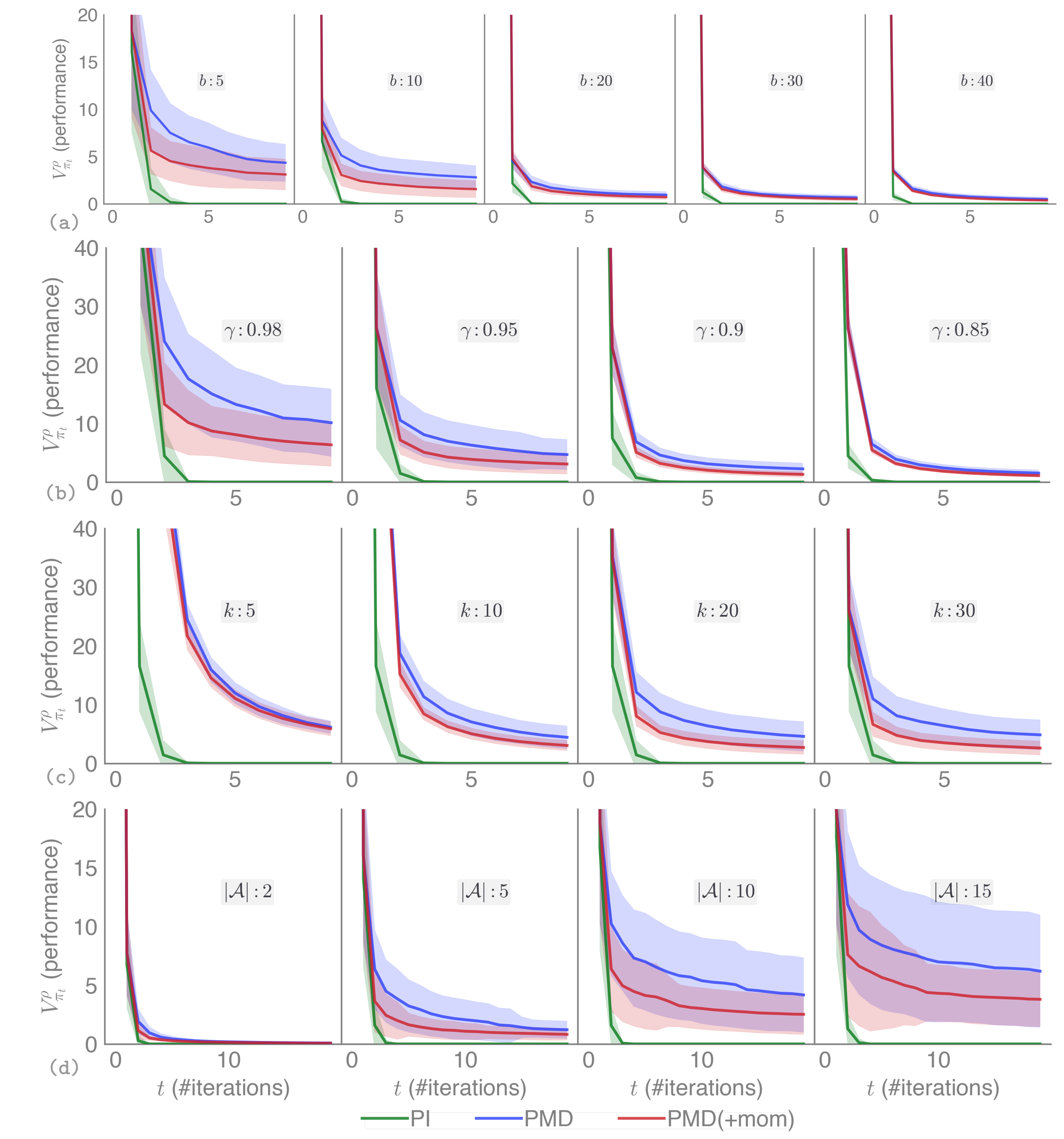}
    \caption{
    The \emph{left $y$-axis} shows the optimality gap (regret) of the updates: \PI, \PMD\ and \pmdmom, for $T$ iterations ($T=20$ final column, $T=10$ rest of the columns) relative to changing the hyperparameters: (a) $b$---the branching factor of the \emph{Random MDP}, (b) $\gamma$---the discount factor, (c) $|\mathcal{A}|$---the number of actions, (d) $k$---the number of parameter updates. Shades denote standard deviation over $50$ sampled MDPs. 
    }
    \label{fig:rmdp_learning_curves_all}
    \vspace{-10pt}
\end{figure}

\subsection{Supplementary results for Sec.~\ref{sec:Policy dynamics in Value Space}}
\label{append:Supplementary results for Policy dynamics in Value Space}
In this section we provide supplementary results that were omitted in the main body, related to the policy optimization dynamics of the functional acceleration methods introduced.

\begin{figure}[h]
        \centering
         \hspace{-10pt} 
        \includegraphics[width=1.\textwidth]{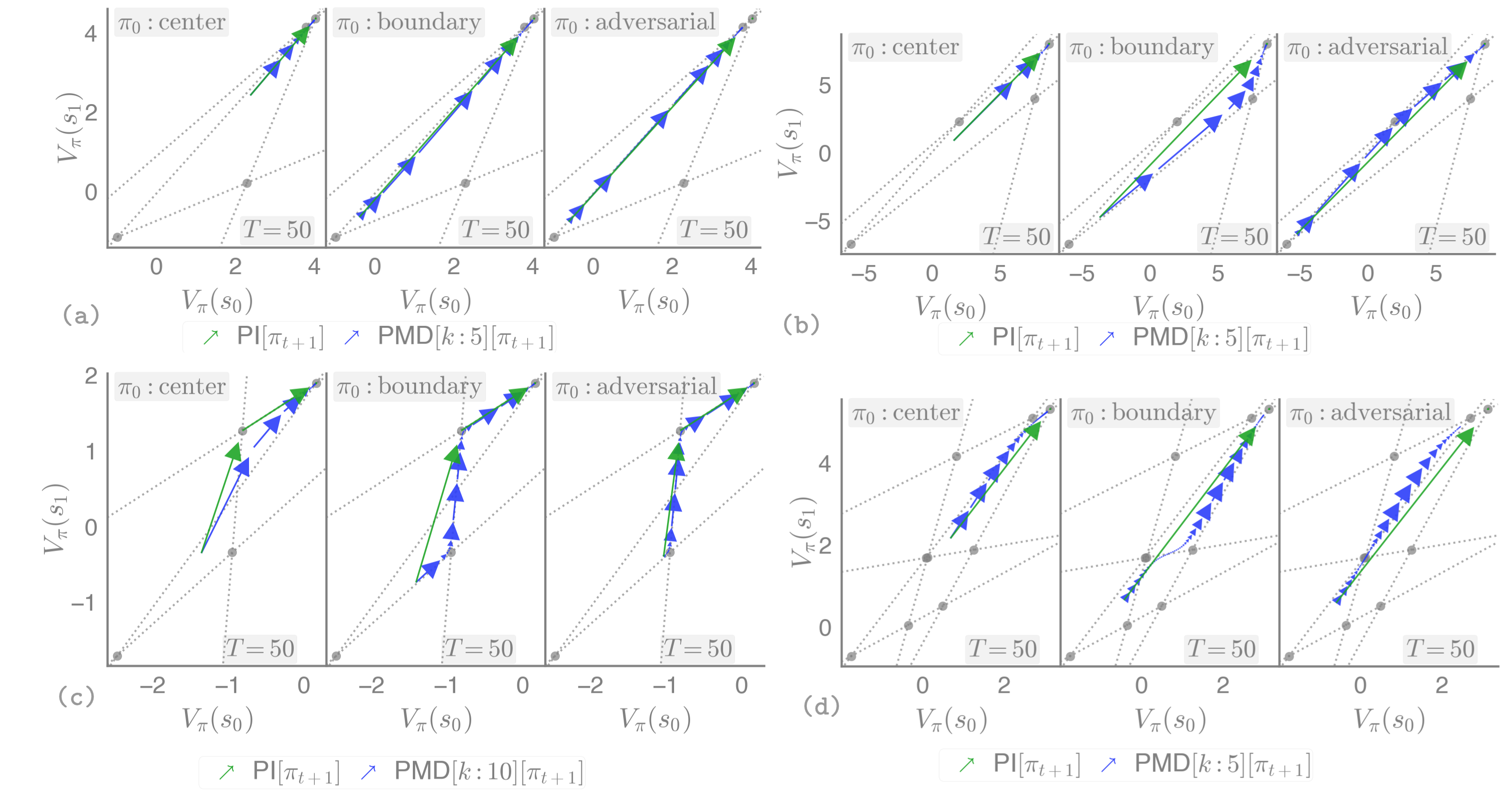}
    \caption{
    Compares the policy optimization dynamics of \PMD\ and \PI\ on the value polytope of the different example MDPs in Sec.~\ref{append:Details of two-state/action Markov Decision Processes}: \emph{(a)} example (ii), \emph{(b)} example (iii), \emph{(c)} example (i), \emph{(d)} example (iv). 
    }
    \label{fig:suppl_how__pi_pmd}
    \vspace{-10pt}
\end{figure}
\begin{figure}[h]
        \centering
         \hspace{-10pt} 
        \includegraphics[width=1.\textwidth]{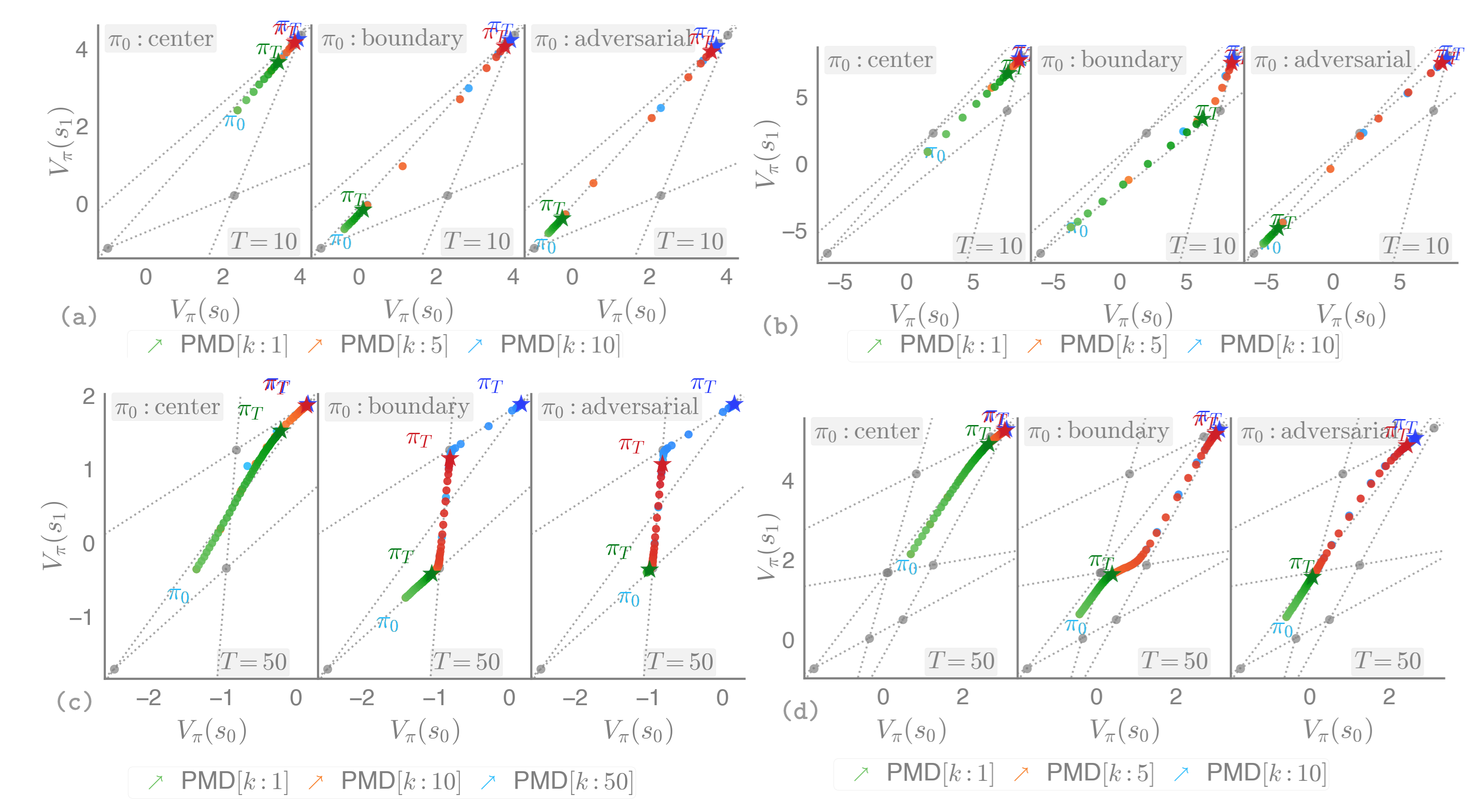}
    \caption{
    Shows the policy optimization dynamics of \PMD\ for different values of $k$ on the value polytope of the different example MDPs in Sec.~\ref{append:Details of two-state/action Markov Decision Processes}: \emph{(a)} example (ii), \emph{(b)} example (iii), \emph{(c)} example (i), \emph{(d)} example (iv). 
    }
    \label{fig:suppl_how__pmd_sweep_k}
    \vspace{-10pt}
\end{figure}
\paragraph{\PMD}
{
In Fig.~\ref{fig:suppl_how__pi_pmd}, we compare the optimization dynamics of \PMD\ and \PI\ for different MDPs. We observe the policy tends to move in a straight line between semi-deterministic policies (cf. \citet{dadashi2019value}) on the boundary of the polytope, and when it passes over an attractor point it can get delayed slowing down convergence. Fig.~\ref{fig:suppl_how__pmd_sweep_k} shows the speed of convergence is governed by $k$ which reflects the inner-loop optimizationn procedure. We again observe in Fig.~\ref{fig:suppl_how__pmd_sweep_k} the accummulation points and long-escape attractor points of the optimization procedure.
}

\begin{figure}[h]
        \centering
         \hspace{-10pt} 
        \includegraphics[width=1.\textwidth]{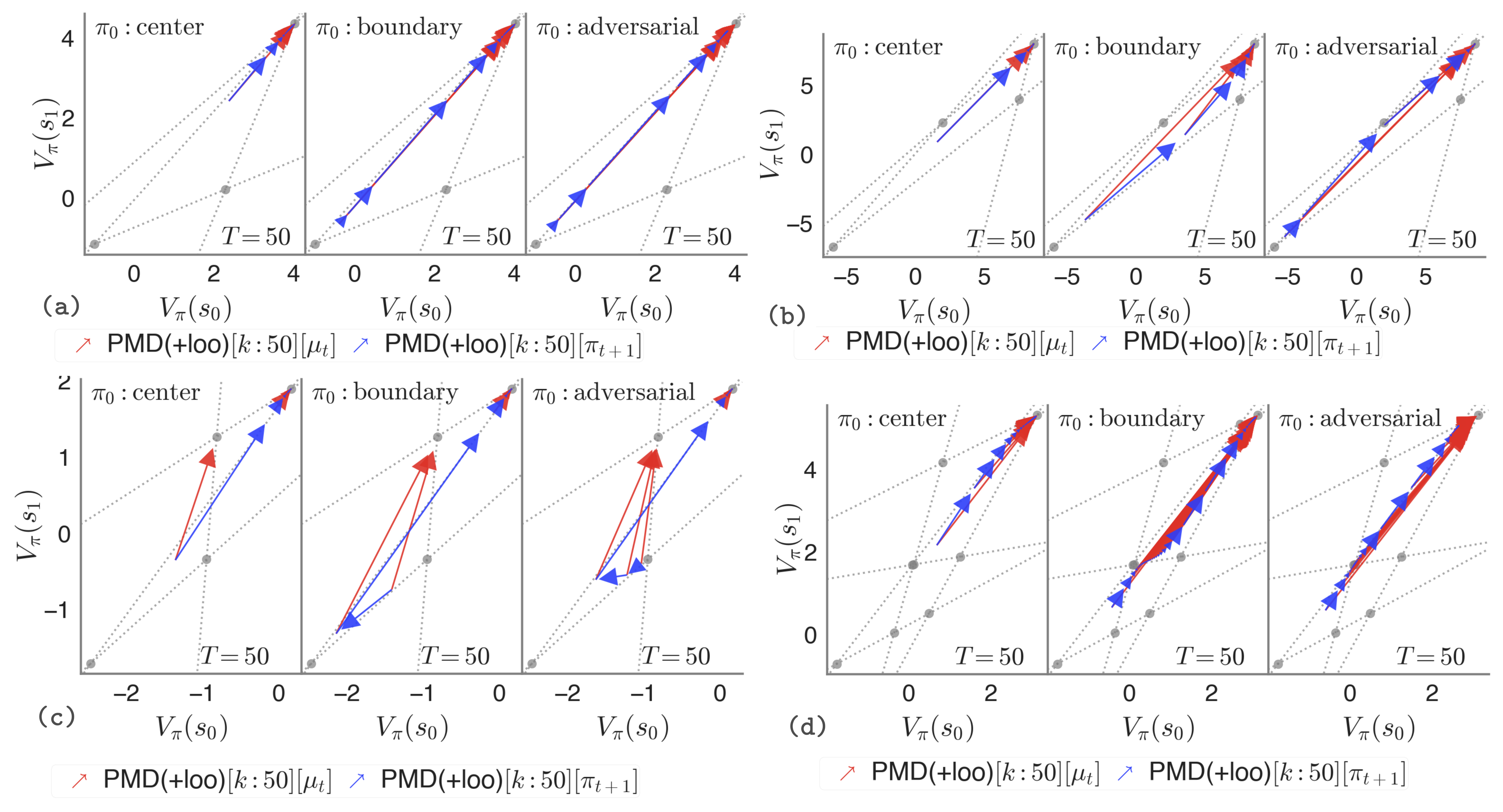}
    \caption{
    Shows the policy optimization dynamics of \pmdloo\ on the value polytope of the different example MDPs in Sec.~\ref{append:Details of two-state/action Markov Decision Processes}: \emph{(a)} example (ii), \emph{(b)} example (iii), \emph{(c)} example (i), \emph{(d)} example (iv). 
    }
    \label{fig:suppl_how__loo}
    \vspace{-10pt}
\end{figure}

\paragraph{\pmdloo}
{
In Fig.~\ref{fig:suppl_how__loo} we observe the dynamics of \pmdloo\ sometimes follow a different path through the polytope compared to \PMD\ or \PI\ (Fig.~\ref{fig:suppl_how__pi_pmd}), as they are following a different ascent direction, which may be more direct compared to that of \PI.  Compared to Fig.~\ref{fig:suppl_how__pmd_sweep_k}, in Fig.~\ref{fig:suppl_how__loo_sweep_n}, we see less accumulation points, and more jumps, i.e. the policy improvement step returns policies at further distance apart.
}
\begin{figure}[h]
        \centering
         \hspace{-10pt} 
        \includegraphics[width=1.\textwidth]{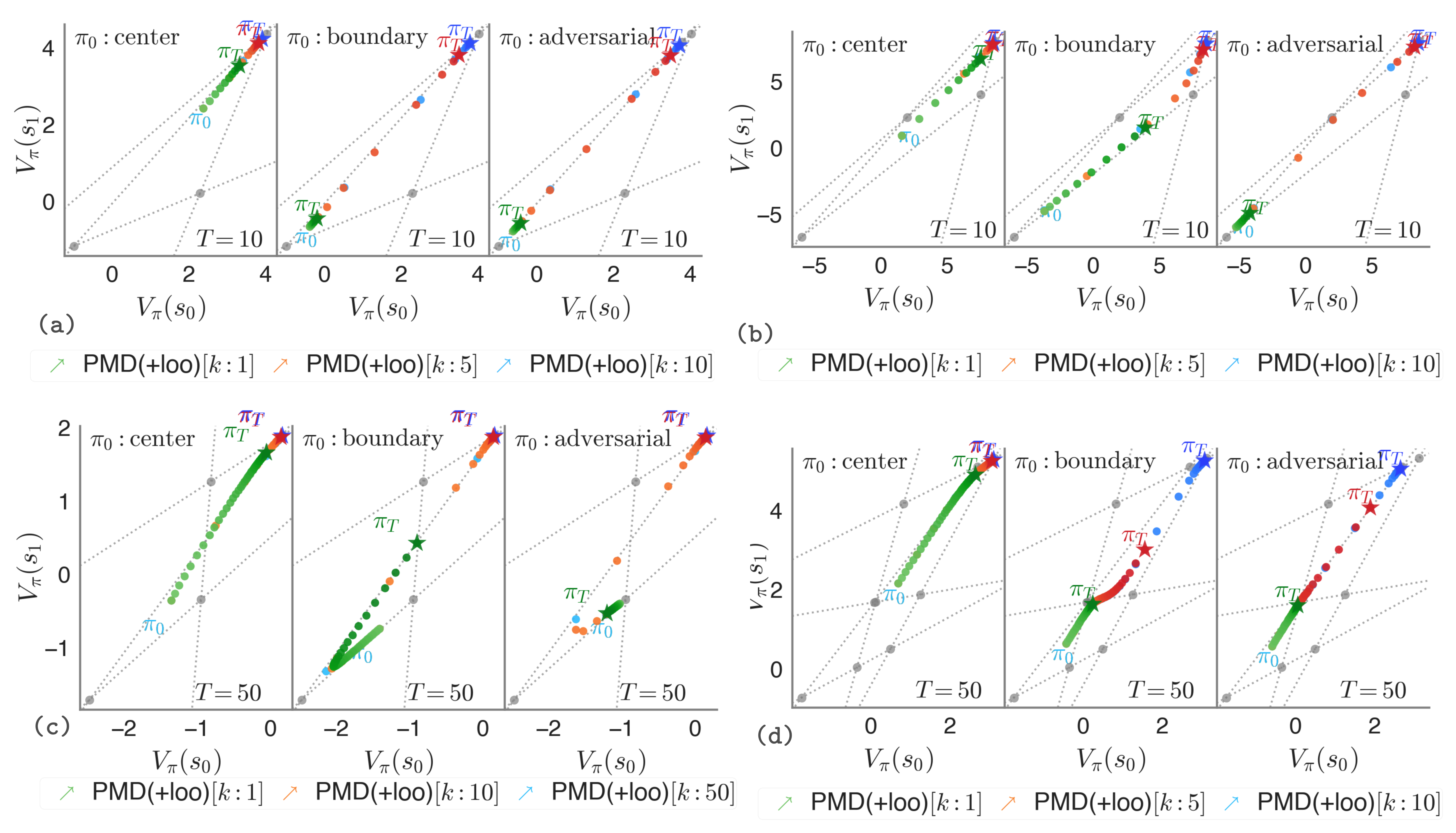}
    \caption{
    Shows the policy optimization dynamics of \pmdloo for different values of $k$ on the value polytope of the different example MDPs in Sec.~\ref{append:Details of two-state/action Markov Decision Processes}: \emph{(a)} example (ii), \emph{(b)} example (iii), \emph{(c)} example (i), \emph{(d)} example (iv). 
    }
    \label{fig:suppl_how__loo_sweep_n}
    \vspace{-10pt}
\end{figure}

\paragraph{\pmdmom}
{
Fig.~\ref{fig:suppl_how__pmd_mom} compares the policy dynamics of \PMD\ and \pmdmom\ and shows acceleration of the latter in those directions of ascent that align over consecutive steps, and have ill-conditioned optimization surfaces. 
}

\begin{figure}[h]
        \centering
         \hspace{-10pt} 
        \includegraphics[width=1.\textwidth]{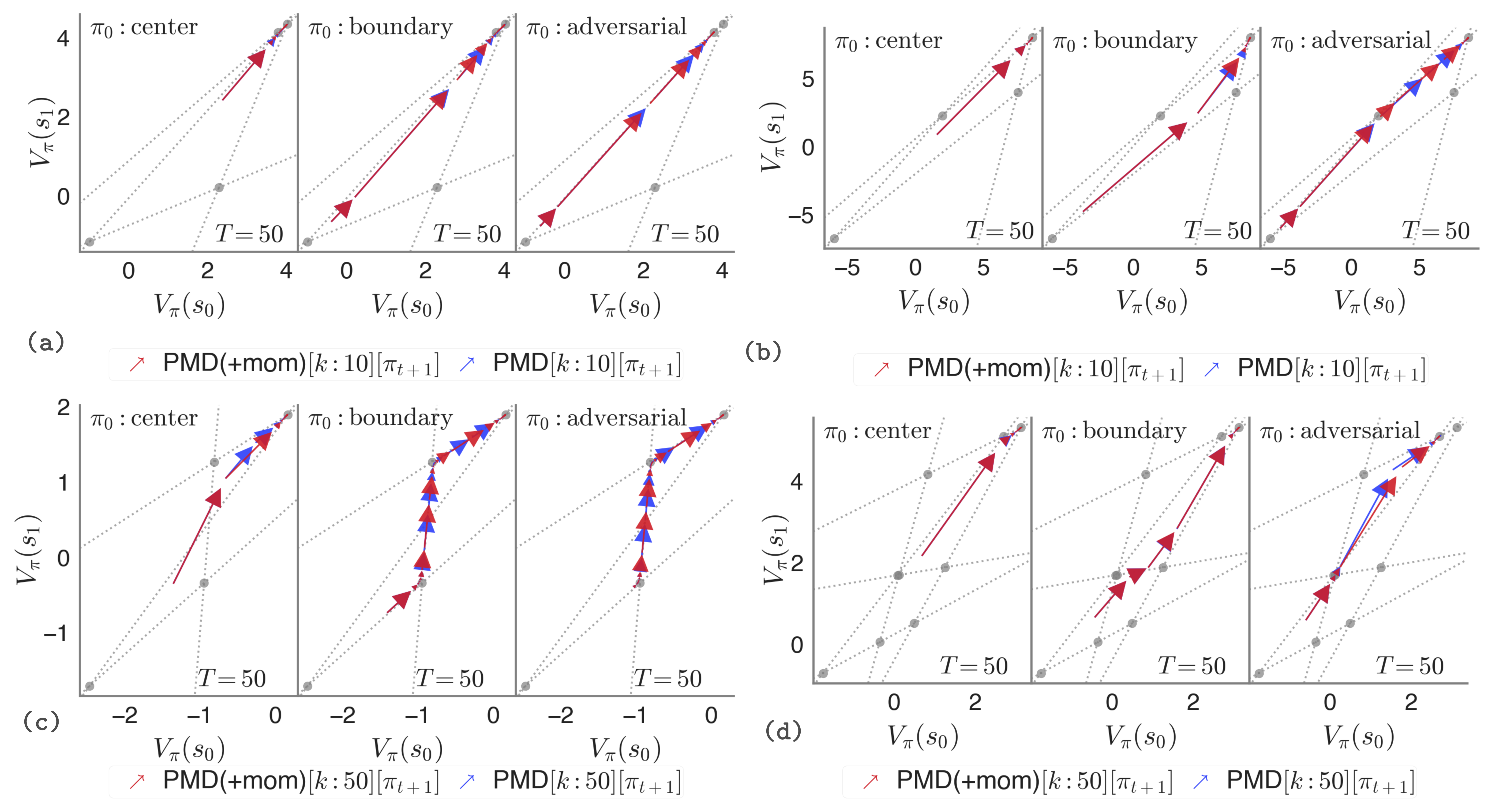}
    \caption{
    Compares the policy optimization dynamics of \PMD\ and \pmdmom\ on the value polytope of the different example MDPs in Sec.~\ref{append:Details of two-state/action Markov Decision Processes}: \emph{(a)} example (ii), \emph{(b)} example (iii), \emph{(c)} example (i), \emph{(d)} example (iv). 
    }
    \label{fig:suppl_how__pmd_mom}
    \vspace{-10pt}
\end{figure}


\subsection{Supplementary results for Sec.~\ref{sec:Acceleration with an inexact critic}}\label{append:Supplementary results for Acceleration with an inexact critic}

Fig.~\ref{fig:suppl_inexact_controlled_seeds_polytope_mdp3},~\ref{fig:suppl_inexact_controlled_seeds_polytope_mdp0},~\ref{fig:suppl_inexact_controlled_seeds_polytope_mdp1},~\ref{fig:suppl_inexact_controlled_seeds_polytope_mdp4} illustrate the variance over $50$ optimization trajectories initialized from a random uniform distribution with mean $0$ and standard deviation $1$, for each of the example two-state MDPs described in Appendix~\ref{append:Details of two-state/action Markov Decision Processes}. We use the same \emph{(controlled)} setting as in Sec.\ref{sec:Acceleration with an inexact critic}, and vary the critic's error $\tau$, and the policy approximation via $k$. 

The most illustrative example is Fig.~\ref{fig:suppl_inexact_controlled_seeds_polytope_mdp3}, since this setting presents the most ill-conditioned surface on which we can observe the impact of acceleration.
We observe in \emph{(a)} the instability of policy iteration with an inexact critic. As the critic's error grows the policy iterates start to oscillate between the corners of the polytope.

\PMD\ is better behaved for low $k$ values and starts to exhibit behaviour similar to \PI\ at larger $k$ values \emph{(c,d)}. We observe \pmdmom\ is more unstable than \PMD\ when presented with high level of errors in the inexact critic \emph{(f)}, and tends to stay more on the boundary of the polytope \emph{(e)}, which is consistent with having larger values of the gradient due to added momentum.

In \emph{(g,h)} the learning curves show the variance over trajectories stemming from the random initialization, the instability of \PI, the relative improvement of \pmdmom\ over \PMD, particularly striking for larger $k$ consistent with the theory. We observe in \emph{(h)} \pmdmom\ has more variance in the beginning, which may actually be desirable in terms of exploration, and that is achieves a similar performance to \PMD\ at the end of the optimization.

\begin{figure}[h]
        \centering
         \hspace{-10pt} 
        \includegraphics[width=1.\textwidth]{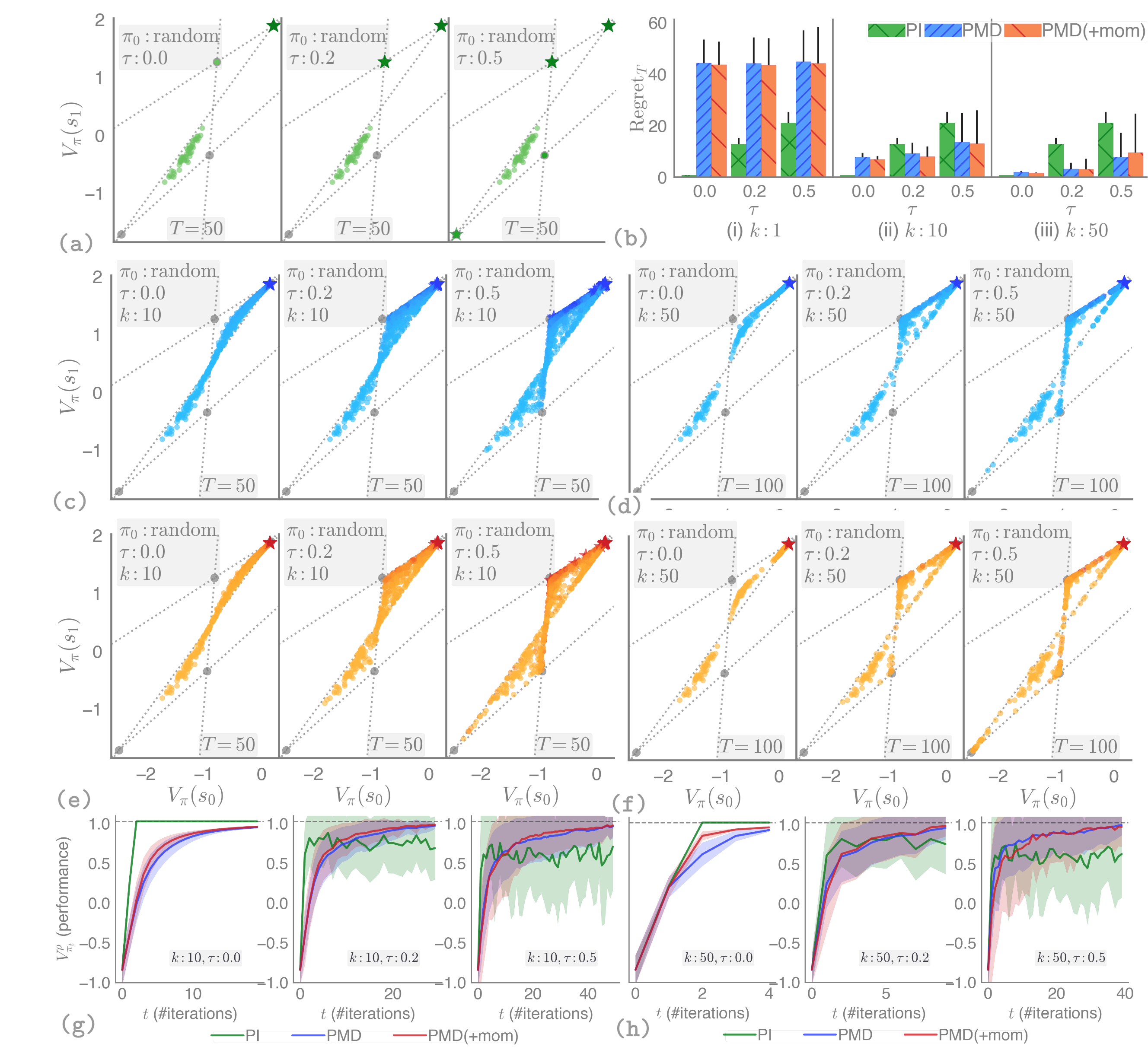}
    \caption{
    Compares the statistics of the policy optimization dynamics of \PI, \PMD\ and \pmdmom\ subject to variance from random initialization ($\pi_0: \emph{\text{random\_uniform}(0,1)}$, relative to the error in the inexact critic ($\tau$), and over different levels of policy approximation ($k$). Results correspond to example (i) from Sec.~\ref{append:Details of two-state/action Markov Decision Processes}. 
    }
    \label{fig:suppl_inexact_controlled_seeds_polytope_mdp3}
\end{figure}

\begin{figure}[h]
        \centering
         \hspace{-10pt} 
        \includegraphics[width=1.\textwidth]{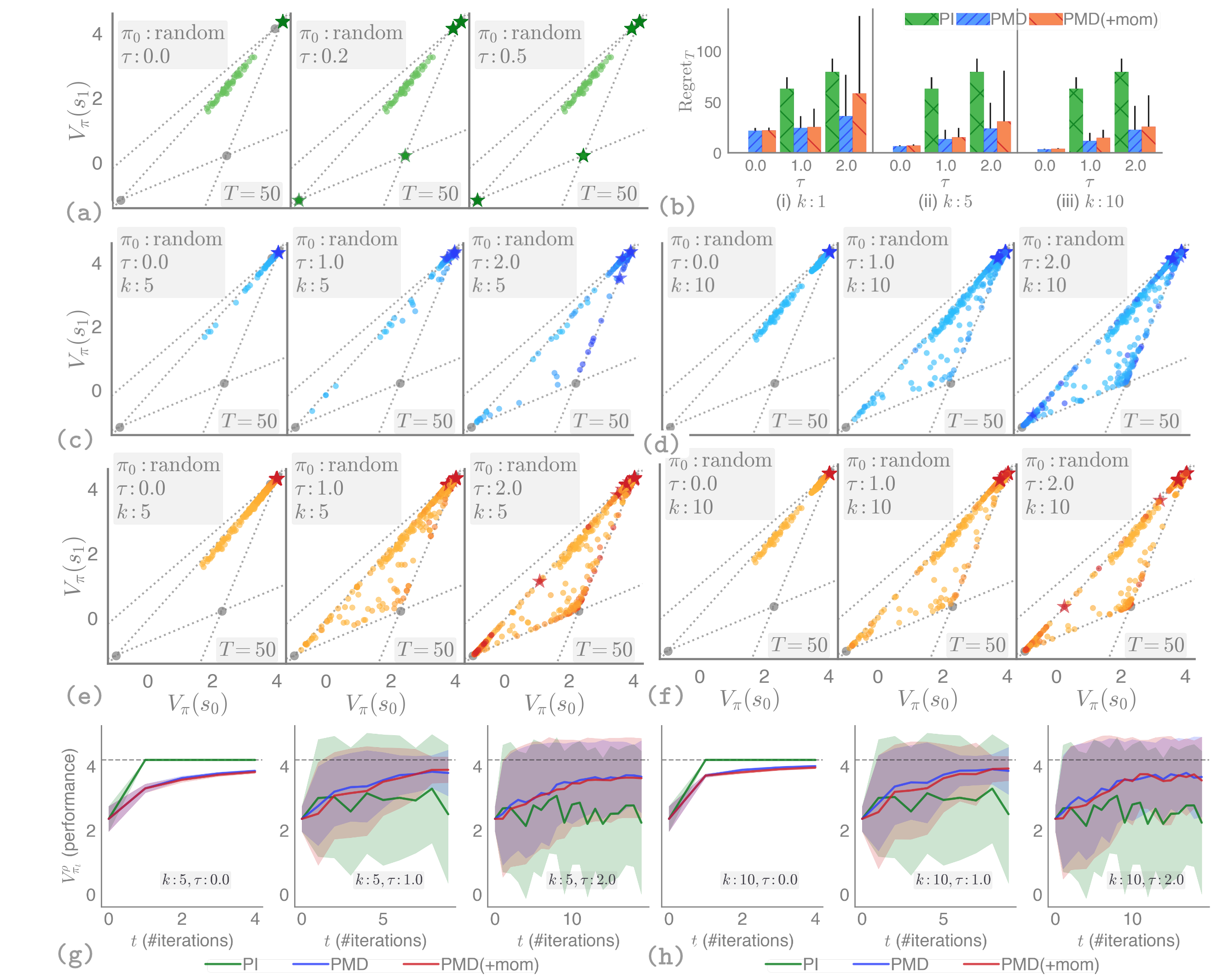}
    \caption{
    Compares the statistics of the policy optimization dynamics of \PI, \PMD\ and \pmdmom\ subject to variance from random initialization ($\pi_0: \emph{\text{random\_uniform}(0,1)}$, relative to the error in the inexact critic ($\tau$), and over different levels of policy approximation ($k$).  Results correspond to example (ii) from Sec.~\ref{append:Details of two-state/action Markov Decision Processes}.
    }
\label{fig:suppl_inexact_controlled_seeds_polytope_mdp0}
\end{figure}

\begin{figure}[h]
        \centering
         \hspace{-10pt} 
        \includegraphics[width=1.\textwidth]{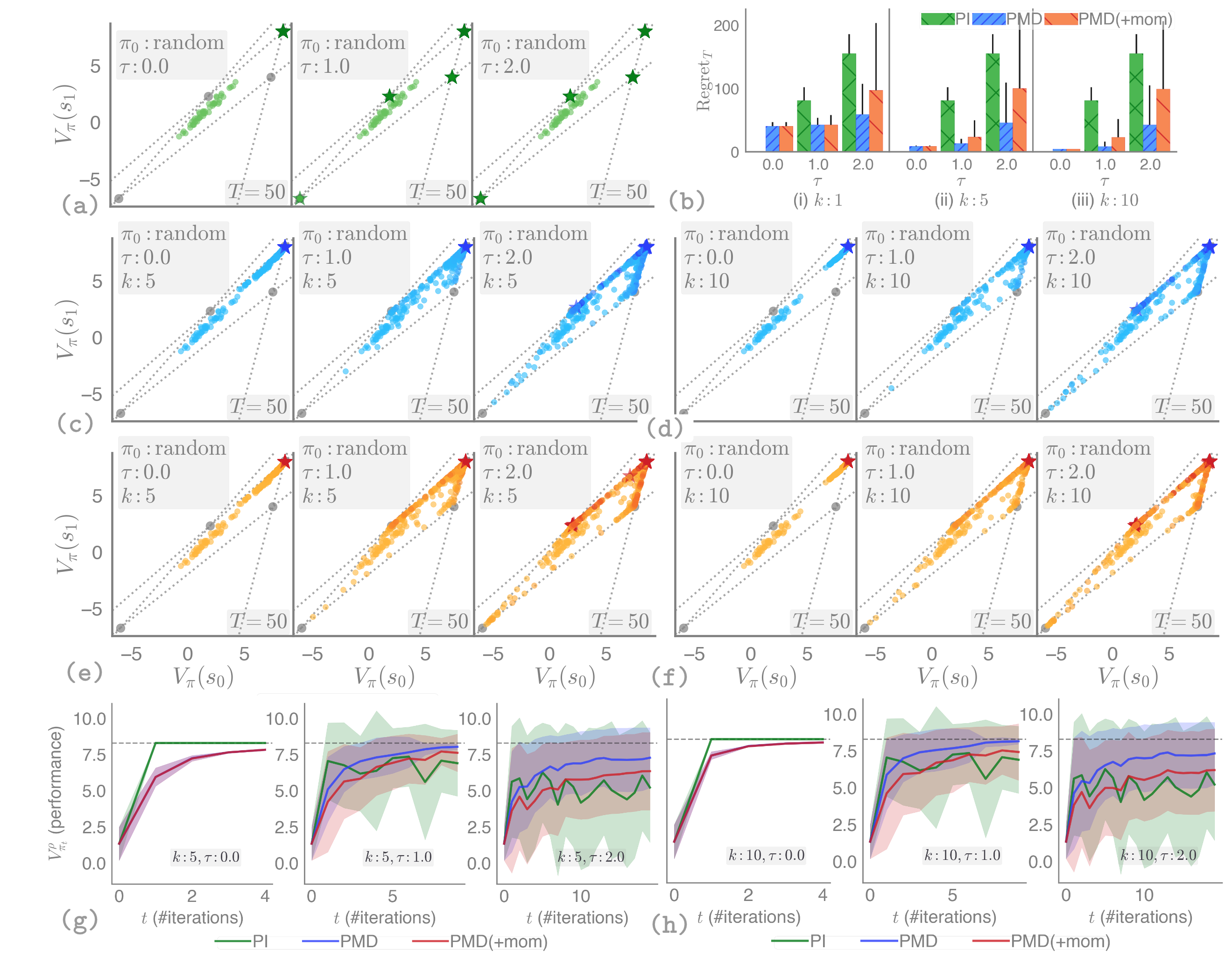}
    \caption{
    Compares the statistics of the policy optimization dynamics of \PI, \PMD\ and \pmdmom\ subject to variance from random initialization ($\pi_0: \emph{\text{random\_uniform}(0,1)}$, relative to the error in the inexact critic ($\tau$), and over different levels of policy approximation ($k$). Results correspond to example (iii) from Sec.~\ref{append:Details of two-state/action Markov Decision Processes}.
    }
\label{fig:suppl_inexact_controlled_seeds_polytope_mdp1}
\end{figure}

\begin{figure}[ht]
        \centering
         \hspace{-10pt} 
        \includegraphics[width=1.\textwidth]{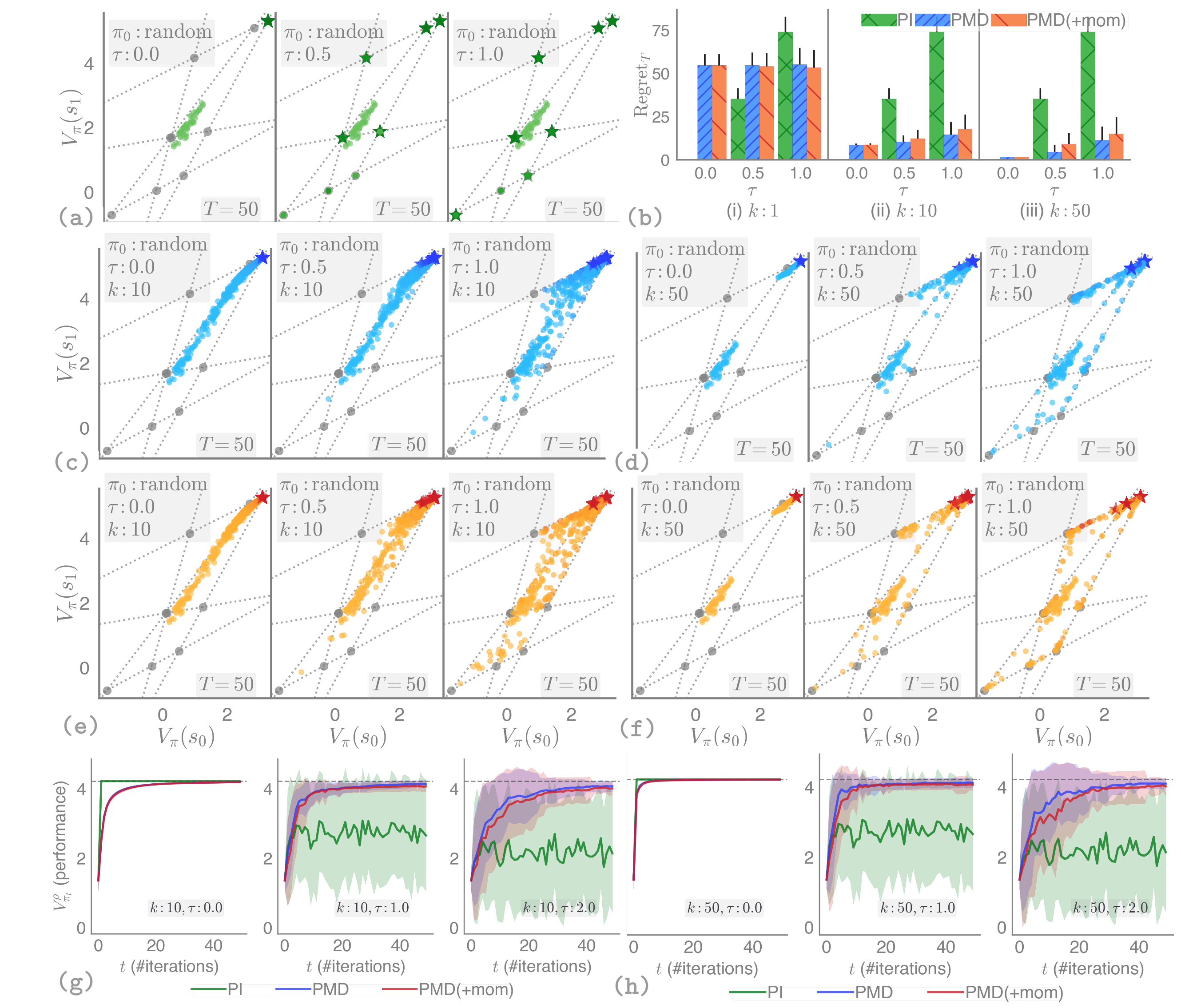}
    \caption{
    Compares the statistics of the policy optimization dynamics of \PI, \PMD\ and \pmdmom\ subject to variance from random initialization ($\pi_0: \emph{\text{random\_uniform}(0,1)}$, relative to the error in the inexact critic ($\tau$), and over different levels of policy approximation ($k$). Results correspond to example (iv) from Sec.~\ref{append:Details of two-state/action Markov Decision Processes}.
    }
\label{fig:suppl_inexact_controlled_seeds_polytope_mdp4}
\end{figure}
\clearpage


\end{document}